\newcommand{\pr}[1]{{\rm Pr}\left[{#1}\right]}
\newcommand{\algref}[1]{Fig.~\ref{#1}}
\newcommand{\secref}[1]{Sec.~\ref{#1}}
\newcommand{\figref}[1]{Fig.~\ref{#1}}
\newcommand{\thmref}[1]{Thm.~\ref{#1}}
\newcommand{\remref}[1]{Remark~\ref{#1}}
\newcommand{\appref}[1]{App.~\ref{#1}}
\newcommand{\lemref}[1]{Lem.~\ref{#1}}
\providecommand{\BR}[1]{\mathrm{\mathbf{#1}}}
\providecommand{\E}{\mathbb{E}}
\providecommand{\para}[1] {\left( {#1} \right)}
\providecommand{\amt}{\nu}
\providecommand{\lb}{\underline{\nu}}
\providecommand{\lbf}{\underline{\nu}^\mathrm{d}}
\providecommand{\lbs}{\underline{\nu}^\mathrm{p}}
\providecommand{\mo}{\beta}
\providecommand{\alloc}{M}
\providecommand{\scc}{X}
\providecommand{\nit}{n}
\providecommand{\err}{\epsilon}
\providecommand{\regret}{R}
\providecommand{\verr}{\boldsymbol{\varepsilon}}
\providecommand{\mulc}{c}
\providecommand{\nban}{K}
\providecommand{\nal}{\ell}
\providecommand{\rem}{S^*}
\providecommand{\indi}{\boldsymbol{1}}
\providecommand{\add}{r}
\providecommand{\rv}[1]{\boldsymbol{#1}}
\providecommand{\rvt}{\rv{\tau}}
\providecommand{\sas}{s^{\mathrm{p}}}
\providecommand{\saf}{s^{\mathrm{d}}}
\providecommand{\xs}{x^{\mathrm{p}}}
\newcommand{\figline}{\rule{0.50\textwidth}{0.5pt}}
\newtheorem{theorem}{Theorem}
\newtheorem{lemma}{Lemma}
\newtheorem{remark}{Remark}
\newtheorem{proposition}{Proposition}
\author{Yuval Dagan\thanks{Technion - Israel Institute of Technology, \texttt{yuvaldag@gmail.com }}
	\and
	Koby Crammer\thanks{Technion - Israel Institute of Technology, \texttt{koby@ee.technion.ac.il}}}
\title{}
\begin{document}
\title{A Better Resource Allocation Algorithm with Semi-Bandit Feedback}

% % ADD THE FOLLOWING COUPLE LINES INTO YOUR PREAMBLE
% \let\OLDthebibliography\thebibliography
% \renewcommand\thebibliography[1]{
%   \OLDthebibliography{#1}
%   \setlength{\parskip}{2.5pt}
%   \setlength{\itemsep}{0pt plus 0.3ex}
% }

\maketitle

\begin{abstract}
  We study a sequential resource allocation problem between a fixed
  number of arms. On each iteration the algorithm distributes a
  resource among the arms in order to maximize the expected success
  rate. Allocating more of the resource to a given arm increases the
  probability that it succeeds, yet with a cut-off. We follow
  \cite{LCS} and assume that the probability increases
  linearly until it equals one, after which allocating more of the
  resource is wasteful. These cut-off values are fixed and unknown to
  the learner. We present an algorithm for this problem and
  prove a regret upper bound of $O(\log n)$ improving over the best
  known bound of $O(\log^2 n)$. Lower bounds we prove show that our
  upper bound is tight. Simulations demonstrate the superiority of our
  algorithm.
\end{abstract}

%\begin{keywords}
%multi-armed bandits, resource allocation
%\end{keywords}

\section{Introduction}
We study a sequential resource allocation problem for a fixed number
of arms (or processes). On each iteration $t$, the learning algorithm
distributes a fixed amount of unit resource between $K$ arms, and pulls
all the arms. The probability of each arm to succeed is proportional
to the amount of resource assigned to it (or $1$, if enough resource
was assigned), with slope that depends on the arm, and unknown to
the learner. The learner observes the result of all arms, and repeats
the process. Her goal is to maximize the cumulative number of successes
over all $K$ arms and all $n$ iterations.

Formally, on time $t$ the learner assigns $M_{k,t} \geq 0$ resource
for arm $k = 1 \dots K$, such that $\sum_{k=1}^K M_{k,t} \leq 1$. The
outcome of the allocation processes is $\scc_{k,t}=1$ if arm $k$
succeeded and $\scc_{k,t}=0$ if it fails. The probability of arm $k$
to succeed given $\alloc_{k,t}$ is $\pr{\scc_{k,t}=1 \mid \alloc_{k,t}}=\min\{1, M_{k,t} / \nu_k\}$ for some
fixed unknown values $\nu_1 \dots \nu_K$. The goal of the learner is
to maximize $\sum_{k,t} \scc_{k,t}$.

The problem was first suggested by \cite{LCS}, who proposed an
algorithm and a corresponding regret bound inspired by the \emph{upper
  confidence interval} (UCB) algorithm of \cite{UCB} for the
stochastic multi-armed bandit problem.  The algorithm of \cite{LCS} maintains high probability lower bound estimates on
the parameters $\amt_1, \dots, \amt_\nban$.  On every iteration $t$,
the arms are prioritized by these bounds, from the lowest to the
highest, each arm getting an amount of resources which equals its
lower bound, until no resource is left.  Using this technique, the
best arms get almost all the resource they require, hence, their
probability of success is close to $1$, and their outcomes have a low
variance.  This enables the authors to estimate $\amt_k$ with an
expected error of $\Theta\left( \frac{1}{t} \right)$ after $t$
allocations.  Yet, the proof requires the constructed lower bound
estimates to hold throughout all the $n$ iterations, which implies
that their failure probability has to be low. This high confidence
requirement weakens the tightness of this
estimate: it is far by $\Theta(\log n/t)$ from the estimated parameter,
yielding a regret of $O(\log^2 n)$.

We propose a new algorithm that utilizes both probabilistic lower
bounds and deterministic lower bound estimates, utilizing the fact that the
error is one-sided: if arm $k$ is allocated with $\alloc$
resources and terminates in failure, we know that $\amt_k > \alloc$
with probability $1$. We analyze this algorithm and prove a regret of
$O(\log n)$. Besides having a lower regret bound than \cite{LCS},
our algorithm does not have to know the
horizon $n$ in advance (without using a doubling trick).
Simulations we performed demonstrate the superiority of our algorithm
(by a considerable gap),
and a matching $\Omega(\log n)$ lower bound is obtained.

This problem is a special case of stochastic partial monitoring
problems, first studied by \cite{RAT89}. These are exploration
vs. exploitation problems, where the user performs actions and obtains
a stochastic reward based on them, and on an additional hidden
parameter.  \cite{LCS} surveyed relevant literature on this topic,
including the work by \cite{BM12}.  The model discussed in our paper was
generalized by \cite{LCS2}, to enable multiple resource types.  They
discuss the relation to stochastic linear bandits~\citep{AYSC11,AR13}
and online combinatorial optimization~\citep{KBW15}.

\section{Single Arm Problem}
\label{sec:single}

We start our discussion in a setting with only a {\em single} arm. On
each iteration $1 \le t \le n$ an algorithm assigns some amount
$\alloc_t \geq 0$ ׂof a resource to the arm and pulls that arm. It then
obtains an indication of success (denoted by $\scc_t = 1$) or failure
($\scc_t = 0$).  The arm is associated with a threshold parameter
$\amt$ such that the probability of success given an allocation of
$M_t$ equals $\min(1, \alloc_t / \amt)$, as in the multi-armed
setting.  Each allocation incurs a cost of $\alloc_t$, and the total reward on
iteration $t$ equals $\scc_t - \alloc_t$.

\figref{fig:reward_single} illustrates the expected reward as a function
of the allocated amount: it is a piecewise linear function, maximized
at $\alloc_t = \amt$, with a reward of $1 - \amt$.  
The regret of the
algorithm on iteration $t$ %
is defined as the difference between the
maximal expected reward, and the actual reward,
\[
	\regret_t = 1 - \amt - (\scc_t - \alloc_t),
\]
and the total regret equals $\regret^{(n)} = \sum_{t=1}^n \regret_t$.

\algref{alg:one-arm} summarizes our algorithm for the single-arm resource
allocation problem, that invokes the arm for $n$ rounds, when $n$ (and
of course $\amt$) are unknown in advance. 
The algorithm maintains a guaranteed
lower bound on $\amt$. On each iteration it allocates a slightly
higher amount of resource than the lower bound.  If the machine
fails, the amount of resource which was allocated is
insufficient, and the lower bound is increased.  Its new value is set as
the amount of resource allocated just before failure.

Specifically, the lower bound is initialized to $\lb_0 \gets 0$. 
On iteration $t=1\dots$ the  algorithm allocates
$\alloc_t \gets \lb_{t-1} + \frac{2}{t}$.
After pulling the arm and observing $\scc_t$
the algorithm increases the current lower bound and sets
$\lb_t \gets \alloc_t$ after failure ($\scc_t=0$)
and does not modify the lower bound after a success ($\scc_t=1$),
that is, $\lb_t \gets \lb_{t-1}$.

\begin{figure}%{r}{0.4\textwidth}
\begin{minipage}{.5\textwidth}
	%    \vspace{-0.4cm}
	\begin{center}
		\includegraphics[width=0.9\textwidth]{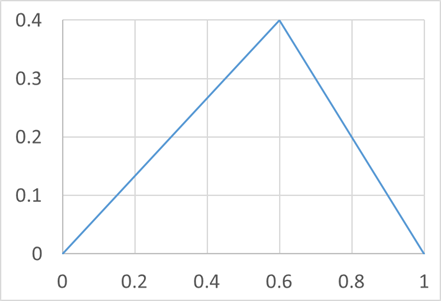}    
		\caption{{\footnotesize Reward as a function of the resource for $\nu=0.6$}}\label{fig:reward_single}
	\end{center}
\end{minipage}
\begin{minipage}{.5\textwidth}
	%\begin{minipage}{0.5\textwidth}
	\begin{center}
			\begin{algorithmic}[1]
				\STATE$\lb_0 \gets 0$
				\FORALL {$t \in \{1, 2, \dots\}$}
				\STATE $\alloc_t \gets \lb_{t-1} + \frac{2}{t}$
				\STATE Grant the arm with $\alloc_t$ resources
				\STATE $\scc_t \gets$ success status of the arm
				\IF{$\scc_t = 1$}
				\STATE $\lb_t \gets \lb_{t-1}$
				\ELSE
				\STATE $\lb_t \gets \alloc_t$
				\ENDIF
				\ENDFOR
			\end{algorithmic}
			%\end{minipage}
			\caption{Single Arm Algorithm}\label{alg:one-arm}
	\end{center}
\end{minipage}
\end{figure}

The algorithm suffers a regret of $4(\log n +1)$:

\begin{theorem} \label{thm:ub-one-gen}
	Assume the alg. of  \algref{alg:one-arm} is invoked for $n$ iterations,
	and interacts with some arm with parameter $0 \le \amt \le 1$.
	Then \[\E R^{(n)} \le 4(\log n + 1)\]
\end{theorem}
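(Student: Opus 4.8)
The plan is to bound the expected regret by separately accounting for two sources of loss on each iteration: the \emph{cost overhead} from allocating more resource than $\amt$, and the \emph{failure loss} from allocating less than $\amt$ (which causes the success probability to drop below $1$). Recall that the per-iteration regret is $\regret_t = (1-\amt) - (\scc_t - \alloc_t)$. Taking expectations conditioned on the history up to time $t$, and writing $p_t = \min(1, \alloc_t/\amt)$ for the success probability, we have $\E[\scc_t \mid \text{hist}] = p_t$, so
\[
  \E[\regret_t \mid \text{hist}] = (1 - \amt) - (p_t - \alloc_t) = (1 - p_t) - (\amt - \alloc_t).
\]
First I would split into the two regimes. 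When $\alloc_t \ge \amt$ the arm succeeds surely ($p_t = 1$) and the conditional regret is exactly $\alloc_t - \amt \le \alloc_t - \lb_{t-1} = 2/t$, since the lower bound $\lb_{t-1}$ never exceeds $\amt$ (it is only ever set to an allocation that resulted in failure, which certifies $\alloc_t \le \amt$). When $\alloc_t < \amt$, we have $\amt - \alloc_t > 0$ so that term only helps, and the regret is at most $1 - p_t = 1 - \alloc_t/\amt = (\amt - \alloc_t)/\amt$.

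Next I would control the second regime. The key structural fact is that a failure always resets the lower bound to the current allocation, i.e. $\lb_t = \alloc_t = \lb_{t-1} + 2/t$, so the gap $\amt - \lb_t$ shrinks by $2/t$ on every failing round, while on a success the lower bound is unchanged. Intuitively the algorithm cannot suffer many failures, because each failure pushes $\lb$ closer to $\amt$, and the increments $2/t$ on failing rounds must sum to at most $\amt \le 1$ (the total upward movement of $\lb$ is bounded by $\amt$). This lets me bound the probability-of-failure contribution: on iteration $t$, I expect to show $\E[1 - p_t \mid \text{hist}] \le 2/t$ as well, either by arguing $\amt - \alloc_t \le 2/t$ holds with controlled probability, or more directly by bounding $\E[\indi\{\scc_t=0\}\cdot(1-p_t)/p_t]$-type quantities. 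The cleanest route is probably to observe that since $\lb_{t-1} \le \amt$, the ``deficit'' $\amt - \alloc_t = (\amt - \lb_{t-1}) - 2/t$ is itself bounded in terms of the cumulative failures, and to bound the expected number-weighted failures by a telescoping/potential argument on $\amt - \lb_t$.

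The main obstacle will be the second regime: unlike the first regime, where the per-round bound $2/t$ is deterministic and summing $\sum_{t=1}^n 2/t \le 2(\log n + 1)$ is immediate, the failure loss $1 - \alloc_t/\amt$ can be large on a single round when $\alloc_t \ll \amt$ (e.g. early on when $\lb_{t-1}$ is still far below $\amt$). The resolution is a \emph{potential / amortization} argument: I would use $\Phi_t = \amt - \lb_t$ as a potential, note $\Phi_0 = \amt \le 1$ and $\Phi_t \ge 0$ throughout, and charge each unit of failure loss against the corresponding decrease $\Phi_{t-1} - \Phi_t = 2/t$ that a failure produces. Since a failure occurs with probability $1 - p_t$ and, when it occurs, reduces the potential by exactly $2/t$, one gets $\E[(1-p_t)] \le$ (something proportional to the expected potential drop), and summing the telescoped potential drops over all $n$ rounds gives a total bounded by $\Phi_0 \le 1$ up to the harmonic factor. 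Combining the two regimes, each contributing a $\sum_t 2/t \le 2(\log n + 1)$ type bound, yields the claimed $\E R^{(n)} \le 4(\log n + 1)$. I would finish by carefully verifying the invariant $\lb_{t-1} \le \amt$ (immediate from the update rule, since $\lb$ is only advanced to an allocation that produced a failure, and failures can only occur when $\alloc_t \le \amt$) and checking the harmonic-sum constant so the final factor of $4$ comes out exactly.
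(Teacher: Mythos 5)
Your decomposition of the conditional regret into the over-allocation term $\alloc_t - \amt \le 2/t$ and the failure term $1-p_t$ is exactly the paper's decomposition ($\E R^{(n)} = \E\sum_t(\alloc_t-\amt) + \E\sum_t(1-\scc_t)$), and your treatment of the first term is complete and correct. The gap is in the second term. You correctly observe that, with $\Phi_t = \amt - \lb_t$, a failure at round $t$ drops the potential by exactly $2/t$, so $\E[1-\scc_t] = \tfrac{t}{2}\,\E[\Phi_{t-1}-\Phi_t]$; but the claim that ``summing the telescoped potential drops over all $n$ rounds gives a total bounded by $\Phi_0 \le 1$ up to the harmonic factor'' does not follow. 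The quantity you must control is the \emph{$t$-weighted} drop $\sum_t \tfrac{t}{2}\bigl(\E\Phi_{t-1}-\E\Phi_t\bigr)$, which Abel summation converts into roughly $\tfrac12\sum_{t=0}^{n-1}\E\Phi_t$, not into $\Phi_0$ times a harmonic factor. Bounding $\sum_t\E\Phi_t$ by $O(\log n)$ is precisely the missing step; without it, the only available bound $\E\Phi_t\le \amt\le 1$ gives linear regret.

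The missing ingredient is the quantitative decay $\E[\Phi_t] \le \tfrac{4}{t+1}$, which is the heart of the paper's proof. It requires the model's \emph{lower} bound on the failure probability, $\Pr[\scc_t=0\mid \mathrm{hist}] \ge \amt-\alloc_t = \Phi_{t-1}-\tfrac{2}{t}$: a large gap forces frequent failures, each of which cuts the gap by $2/t$, yielding the recursion $\E[\Phi_t\mid\Phi_{t-1}] \le \Phi_{t-1}\bigl(1-\tfrac{2}{t}\bigr)+\tfrac{4}{t^2}$ and then $\E\Phi_t\le \tfrac{4}{t+1}$ by induction. Your sketch never invokes this lower bound; the purely deterministic facts you do use (the invariant $\lb_{t-1}\le\amt$ and the fact that the total upward movement of $\lb$ is at most $\amt$) only give $\sum_{t\colon\scc_t=0} 2/t\le 1$, which is consistent with $\Theta(n)$ failures all occurring at late rounds and hence does not bound the expected number of failures by $O(\log n)$. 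With the recursive lemma supplied, your argument becomes the paper's proof and the constant $4$ comes out as you describe.
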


The proof appears in \appref{sec:proof:thm:ub-one-gen}.
It consists of two parts:
first, we show that the algorithm does not waste many resources
compared to allocating $\amt$ on every iteration:
\[
	\sum_{t=1}^n (\alloc_t - \amt)
	= \sum_{t=1}^n \left(\lb_{t-1} + \frac{2}{t}- \amt \right)
	\le \sum_{t=1}^n \frac{2}{t}
	\le 2(\log + 1).
\]
Secondly, we bound the expected error $\E[\amt - \lb_t]$ of the lower
bound estimate on iteration $t$, using the simple recursive inequality:
$\E[\amt - \lb_t] \le \E[\amt - \lb_{t-1}] \left( 1 - \frac{2}{t} \right) + \frac{4}{t^2}$.
One obtains that $\E[\amt - \lb_t] = O(1/t)$, which, in tern,
implies a low number of failures:
$\E\left[ \sum_{t=1}^n \left( 1 - \scc_t \right) \right] \le 2(\log n + 1)$.
A bound on the regret is obtained by combining these two bounds.
The proof holds for a more general and adversarial setting,
as discussed in \remref{rem:gen:sin}.

\begin{remark}\label{rem:gen:sin}
  The algorithm of \figref{alg:one-arm} and the analysis in
  \thmref{thm:ub-one-gen} hold for the
  following gemeral setting where the success probability of the arm
  has two restrictions: (1) if $\alloc_t \ge \amt$, then
  $\scc_t = 1$ with probability 1, and (2) for any values of $t$,
  $\alloc_1, \dots, \alloc_t$ and $\scc_1, \dots, \scc_{t-1}$ for
  which $\alloc_t < \amt$, we have, 
		\(
			\Pr \left[ \scc_t = 0 \mid \alloc_1 \cdots \alloc_t \scc_1 \cdots \scc_{t-1} \right]
			\ge \amt - \alloc_t.
		\)
The second restriction ensures that the optimal allocation is always $\alloc_t = \amt$.

\end{remark}

\section{Multi-Arm Problem}
We address the following problem presented by \cite{LCS}, as we describe
briefly.  There are $\nban$ arms denoted by $1, 2, \dots, \nban$.
On each iteration $t$ an algorithm divides a resource between the arms,
such that arm $k$ receives $\alloc_{k,t} \geq 0$ of it. We assume that
the total amount of resource is bounded,
$\sum_k \alloc_{k,t} \leq 1$. The success probability of each arm
given $\alloc_{k,t}$ is $\min(1, \alloc_{k,t}/\amt_k)$,
where $\amt_k$ is a fixed unknown parameter associated with arm
$k$. If the amount allocated $\alloc_{k,t}$ is greater than this threshold
$\amt_k$, then the arm will succeed with probability
$1$. Otherwise, it will succeed with probability proportional to the
amount allocated: $\alloc_{k,t}/\amt_k$. Finally, define
$\amt = (\amt_1, \dots, \amt_\nban)$, and assume that
$\amt_1 \le \cdots \le \amt_\nban$ (the algorithm does not know
this ordering).
%
%
%Each arm is parametrized
%
%Each arm can be invoked every one of $n$ iterations, to perform a task, that can either fail or succeed.
%You hold a resource of amount 1 that you can divide between the arms every iteration.
%For every arm $k$, $1 \le k \le \nban$, there is an amount $\amt_k$, such that arm $k$ succeeds with probability 1 if it is given an amount of resource of at least $\amt_k$.
%If it is given less, its failure probability is proportional to the given amount. Formally, the success probability of arm $k$ given a resource of amount $\alloc$ equals $\min(1, \alloc/\amt_k)$.
%This amount $\amt_k$ remains constant and does not change between different iterations.
%Define $\amt = (\amt_1, \dots, \amt_\nban)$.
Denote the success indicator by $\scc_{k,t}$ and set
$\scc_{k,t}= 1$ if the arm succeeds and $0$ if it fails.
%On every iteration $1 \le t n$, you divide your resource between the arms, and get to know which of them have succeeded and which have failed. Denote by $\scc_{k,t}$ the success status of arm $k$ on iteration $t$, namely, $\scc_{k,t} = 1$ if this arm succeeds and $0$ if it fails. 
%Denote by $\alloc_{k,t}$ the amount of resource given to arm $k$ on iteration $t$.
The goal of the algorithm is to maximize the number of success pulls
after $n$ iterations, called the {\em reward} and given by
%Your goal is to obtain the highest number of successes after $n$ iterations, namely, to maximize
\(
\sum_{t=1}^n \sum_{k=1}^\nban \scc_{k,t}. 
\)
%This quantity is called your \emph{reward}.

\cite{LCS} described an algorithm to find the optimal allocation when the thresholds $\amt_1, \dots, \amt_k$ are known. 
This allocation is obtained by prioritizing the arms according to the amount of resource they require ($\amt_k$).
First, the arm with the lowest requirement is allocated with the minimal amount required to succeed with probability $1$, that is $\alloc^*_1 = \amt_1$, then the second lowest, and so on, until either there is no resource left, or all arms receive the amount they require.
Formally, 
this optimal allocation is defined recursively, and arm $k$ is allocated with, 
\(
\alloc^*_k = \min\left(\amt_k, 1 - \sum_{i=1}^{k-1} \alloc^*_i\right).
\)
Let $\nal$ be the number of arms $k$ for which $\alloc^*_k = \amt_k$.
It holds that for all $1 \le k \le \nal$, $\alloc^*_i = \amt_i$.
If $\nal < \nban$ then $0 < \alloc^*_{\nal+1} < \amt_{\ell+1}$ and define $\rem = \alloc^*_{\nal + 1}$.
The expected reward from this optimal allocation is $\E \left[ \sum_k X_k \right] = 
\nal + \indi_{\nal < \nban} \frac{\rem}{\amt_{\nal+1}}$,
where $\indi_A$ denotes an indicator for $A$.

Assuming the executed algorithms do not know the parameters of $\amt_1, \dots,
\amt_\nban$ neither their ordering, they are expected to obtain less
reward than the optimal allocation. We call the difference between an 
algorithm's \emph{actual} reward and the optimal \emph{expected}
reward (over all randomizations) by {\em regret} given by, 
% \kc{until here}
%
% To facilitate with presenting the problem, we ordered the arms assuming $\amt_1 \le \cdots \le \amt_\nban$, but this ordering is not known to you.
% You will invoke the arms every iteration for $n$ iterations, and estimate these parameters from the successes and failures.
% Your expected reward is
% \[
% \E \left[ \sum_{t=1}^\nit \sum_{k=1}^\nban \scc_{k,t} \right].
% \]
% Define the \emph{regret} of your strategy by the difference between the expected reward of the optimal strategy and the reward of your strategy.
% Given an algorithm $A$, a vector $\amt \in \mathbb{R}^\nban$ of parameters of arms, and an integer $1 \le t \le n$, the regret on iteration $t$ is formally defined as
\[
%\regret_t(A, \amt) 
%= \left(\nal + \indi_{\nal < \nban} \frac{\rem}{\amt_{\nal+1}}\right) -
%\sum_{k=1}^\nban \scc_{k,t}  ~~~ \textrm{   and the total
 % regret by   } ~~~ 
\regret^{(n)}(A, \amt) = \sum_{t=1}^n
  \regret_t = n \left(\nal + \indi_{\nal < \nban} \frac{\rem}{\amt_{\nal+1}}\right) -
 \sum_{t=1}^n \sum_{k=1}^\nban \scc_{k,t}~,
\]
where $A$ denotes the algorithm.
The goal of any algorithm is to minimize the expected regret.

\paragraph{Our Contribution: }

We describe in \algref{alg:multi-arm} an algorithm that receives a
parameter $\mulc > 2$ as input, and operates in the above setting,
with a regret $O(\log \nit)$, and constants depending on the threshold
parameters $\amt$. This improves over the previous bound of $O(\log^2 \nit)$
of  \cite{LCS}.  We also present a lower bound showing that the
dependence in $n$ cannot be improved.
It is impossible to get a polylogarithmic regret independently on the
problem parameters as shown by \cite{LCS}. 

% using a minimax argument.
% for any $n$ and any $\nban$ such that $2 \le \nban \le 8n$, there is a
% distribution over $\amt \in \mathbb{R}^\nban$ for which any algorithm
% has an expected regret of at least $\frac{\sqrt{n \nban}}{16
%   \sqrt{2}}$.

Besides having a lower regret bound compared to the algorithm of
\cite{LCS}, our algorithm does not have to know the value $n$ in advance
(without having to rely on a doubling trick), and has a lower
initialization cost.  Also, whenever $\nban \le \nal + 1$, our algorithm
shows a great superiority in the simulations, and it performs
considerably better in general.
%
%Our first 
%The algorithm takes a parameter $\mulc > 2$ as an input. 
In the next
theorem we state an upper bound on the regret of the presented algorithm (\algref{alg:multi-arm}).
% We start with the upper bound result, bounding the regret of
% Algorithm~\ref{alg:multi-arm}.  It depends on a parameter $\mulc > 2$
% given as an input. The regret is calculated considering that $\mulc$
% is a constant, and indeed, any constant value of $\mulc > 2$ can be
% given as a parameter.
\begin{theorem} \label{thm:multi-arm}
	Fix some $\mulc > 2$, and let $A_\mulc$ denote the algorithm of \figref{alg:multi-arm} invoked with the parameter $c$. 
	Fix an integer $\nban > 0$, and a vector $\mathbf{\amt} \in \mathbb{R}_+^\nban$ and an integer $n > 1$.
	Then, 
	\[
		\E R^{(n)}(A_c, \mathbf{\amt}) \le C \nal \log n + C_1 \log n + C_2,
	\]
	where $C > 0$ is a constant that depends only on $\mulc$, and 
	\begin{align*}
		C_1 &= C \cdot \left(\frac{\amt_{\nal+1}}{\amt_{\nal+1} - \amt_\nal} + \sum_{k = \nal + 2}^{\nban} \frac{\amt_k}{\amt_k - \amt_{\nal+1}} \right) \\
		C_2 &= C \cdot \left( (\nal + 1) \max(1, \log \frac{1}{\amt_1}) + \nban \log \nban \right) ~.
	\end{align*}
\end{theorem}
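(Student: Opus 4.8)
The plan is to write the expected regret as a sum over iterations of the per-round gap between the optimal expected reward and the reward the algorithm's allocation achieves in expectation. Since the expected reward from allocating $\alloc_{k,t}$ to arm $k$ is $\min(1,\alloc_{k,t}/\amt_k)$, I would start from
\[
\E \regret^{(n)} = \sum_{t=1}^n \left[ \left(\nal + \indi_{\nal<\nban}\frac{\rem}{\amt_{\nal+1}}\right) - \E \sum_{k=1}^\nban \min\!\left(1,\frac{\alloc_{k,t}}{\amt_k}\right) \right],
\]
and then control each summand. The natural decomposition separates three sources of loss: (i) resource wasted on the $\nal$ fully-served arms relative to giving each exactly $\amt_k$; (ii) failures of arms that the optimal policy would serve to success; and (iii) rounds on which the priority ordering induced by the algorithm's lower-bound estimates disagrees with the true ordering, so that resource is allocated to the wrong arms.

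For (i) and (ii) I would reduce to the single-arm analysis. The algorithm prioritizes arms by guaranteed lower bounds $\lb_{k,t}$ and allocates each served arm slightly more than its bound, exactly as in \algref{alg:one-arm}; the one-sided feedback means a failure certifies $\amt_k > \alloc_{k,t}$ deterministically. The crucial point is that the allocations here are \emph{not} i.i.d.\ but depend on the joint history through the shared resource constraint, so I would invoke the general, adversarial form of \thmref{thm:ub-one-gen} recorded in \remref{rem:gen:sin} arm-by-arm. This yields that, once an arm is consistently served, its estimation error decays like $O(1/t)$, its cumulative wasted resource is $O(\log n)$, and its expected number of failures is $O(\log n)$. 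Summing over the $\nal$ fully-served arms gives the leading $C\nal\log n$ contribution.

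The heart of the argument, and the main obstacle, is (iii): bounding the regret accrued while the ordering is wrong, which is where the gap-dependent terms $\amt_{\nal+1}/(\amt_{\nal+1}-\amt_\nal)$ and $\sum_{k=\nal+2}^\nban \amt_k/(\amt_k-\amt_{\nal+1})$ arise. I would define a good event on which every lower bound is valid and, more importantly, on which each relevant threshold has been estimated to within its governing gap: arm $\nal+1$ must be separated from arm $\nal$ (gap $\amt_{\nal+1}-\amt_\nal$) so the algorithm stops over-serving arm $\nal$, and each arm $k\ge\nal+2$ must be recognized as requiring more than arm $\nal+1$'s share (gap $\amt_k-\amt_{\nal+1}$) so that leftover resource is not diverted to it. Because the precision with which $\amt_k$ can be estimated scales with its magnitude, separating arm $k$ at the absolute gap $\amt_k-\amt_{\nal+1}$ requires a number of informative allocations on the order of the inverse relative gap $\amt_k/(\amt_k-\amt_{\nal+1})$, and each such allocation costs only $O(1)$ regret; this is what produces the claimed ratios and the $C_1\log n$ term. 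The delicate points are that the per-arm recursion cannot be applied verbatim until an arm has ``settled,'' since its allocation is coupled to the estimates of the others, and that the probabilistic lower bounds fail with some probability, requiring a union bound over the $\nban$ arms together with a truncation of the rare bad rounds. I would absorb this union-bound cost and the early-exploration cost---when estimates are still near zero and small thresholds down to $\amt_1$ must be discovered---into the horizon-independent term $C_2$, contributing the $(\nal+1)\max(1,\log\frac{1}{\amt_1})$ and $\nban\log\nban$ pieces. Combining the three bounds yields the stated inequality.
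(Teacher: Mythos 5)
Your high-level decomposition (wasted resource on served arms, failures of served arms, misallocation from a wrong ordering, plus a horizon-independent bad-event term) matches the paper's split into the four terms handled by its Lemmas~\ref{lem:At}, \ref{lem:add-bound}, \ref{lem:slow} and \ref{lem:notin-T}, and your account of where the gap-dependent constants in $C_1$ and the $\max(1,\log\frac{1}{\amt_1})$, $\nban\log\nban$ pieces of $C_2$ come from is essentially right. However, there is a genuine gap in how you propose to obtain the leading $C\nal\log n$ term. You claim parts (i) and (ii) ``reduce to the single-arm analysis'' by invoking the adversarial form of \thmref{thm:ub-one-gen} from \remref{rem:gen:sin} arm-by-arm. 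That remark generalizes only the \emph{success-probability model} (allowing adversarial conditional failure probabilities); it still presupposes the single-arm allocation rule $\alloc_t = \lb_{t-1} + \mulc/t$. The multi-arm algorithm does not use that rule: it allocates $\lbf_{k,t-1} + \add_{k,t}$ with $\add_{k,t} = \mulc\,\lbf_{k,t-1}\exp\left(-\saf_{k,t-1}/(\mulc\,\lbf_{k,t-1})\right)$, a quantity that depends on the history of extra resource the arm actually received, and on many rounds the arm receives less than this (cases B and C) or nothing above its lower bound, because of the shared resource constraint. Consequently each arm's ``clock'' advances at a data-dependent, arm-coupled rate, and the $1/t$ recursion of \thmref{thm:ub-one-gen} cannot be applied, even adversarially. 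The paper's fix is to weight each iteration by $h_{k,t} = (\alloc_{k,t}-\lbf_{k,t-1})_+/\add_{k,t}$ and to prove, by several new potential-function inductions (its Lemmas~\ref{lem:start}, \ref{lem:err-small}, \ref{lem:k-sqr}), that the error $(\amt_k-\lbf_{k,t})/\amt_k$ decays like the reciprocal of the accumulated weight and that the accumulated redundant resource $\saf_{k,t}$ stays controlled; this machinery is the technical core of the result and is not supplied by your reduction. You flag the difficulty yourself (``the per-arm recursion cannot be applied verbatim until an arm has settled'') but do not resolve it.

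A secondary omission: your treatment of the misallocation term does not engage with the two-estimate structure that makes the argument go through. The ordering is by $\max(\lbf_{k,t},\lbs_{k,t})$, and the exclusion of arms $k\ge\nal+2$ after $O\bigl(\frac{\amt_k}{\amt_k-\amt_{\nal+1}}\log n\bigr)$ regret rests on the \emph{probabilistic} estimate $\lbs_{k,t}$, which only improves on rounds where $\alloc_{k,i}\le\lbf_{k,i-1}$; the unbiased coin in case B exists precisely to guarantee that both estimates improve with constant probability. Without this mechanism (or a substitute), the bound in \lemref{lem:slow} does not follow from gap-versus-sample-complexity intuition alone.
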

The bound has better dependence in $n$ and the constants are compared with
the bound of \cite{LCS} with regret of the form, 
\(
\nal \log^2 n + \log n \sum_{k = \nal + 2}^{\nban} \frac{\amt_k}{\amt_k - \amt_{\nal+1}},
\) 
plus some terms independent on $n$.

Next, we present a lower bound of $\Omega(\ell n)$ on the regret. The proof appears in \secref{sec:pr:lb}, and a different lower bound is presented and proved in \secref{sec:lb-nu}.
%Formally, the first lower bound is, 
\begin{theorem} \label{thm:lb-main} Fix an integer $r > 0$ and define
  $\nban = 2r$.  Let $\mathcal{D}$ be the following
 probability space
  over vectors $\mathbf{\amt} \in \mathbb{R}^\nban$: $\amt_1, \dots, \amt_r$
  are picked uniformly and independently from
  $\left[ \frac{1}{2r}, \frac{1}{r} \right]$, and
  $\amt_{r+1} = \cdots = \amt_{2r} = \frac{2}{r}$.  Then, for
  $\amt \sim \mathcal{D}$ and $H(n) = \sum_{i=1}^n \frac{1}{i}$, any
  algorithm $A$ satisfies, 
	\[
	\E \regret^{(n)}(A, \mathbf{\amt}) \ge \frac{r}{32} \left( H(n-1) - \frac{\pi^2}{12} \right).
	\]
\end{theorem}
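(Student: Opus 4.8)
The plan is to reduce the multi-arm regret to a sum of $r$ independent single-arm threshold-estimation problems, one for each ``cheap'' arm $k\in\{1,\dots,r\}$, and to lower bound each by a Bayes-risk argument.

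\emph{Step 1 (regret decomposition).} Let $G$ be the event $\sum_{i=1}^r \amt_i \le 1$. Since the $\amt_i\in[\tfrac1{2r},\tfrac1r]$ are independent with mean $\tfrac3{4r}$, their sum concentrates near $3/4$ and $\pr{G^c}$ is exponentially small in $r$ (Hoeffding), so I would condition on $G$. On $G$ the optimal allocation fills each cheap arm to $\amt_i$ (marginal value $\tfrac1{\amt_k}\ge r$) and pours the leftover $1-\sum_i\amt_i$ into expensive arms (marginal value $\tfrac1{2/r}=\tfrac r2$), giving optimal expected reward $r+\tfrac r2(1-\sum_{i=1}^r\amt_i)$. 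For any round-$t$ allocation I would bound the expensive arms' expected reward by $\tfrac r2(1-\sum_{k\le r}\alloc_{k,t})$ and write each cheap arm's expected reward as $1-\tfrac{(\amt_k-\alloc_{k,t})_+}{\amt_k}$; subtracting and using $\tfrac1{\amt_k}\ge r$ yields, conditioned on the history $\mathcal F_{t-1}$ and on $\amt$,
\[
\E[\regret_t\mid\mathcal F_{t-1},\amt]\ \ge\ \frac r2\sum_{k=1}^r |\alloc_{k,t}-\amt_k|.
\]
This reduces the theorem to lower bounding $\sum_{k\le r}\sum_{t=1}^n \E|\alloc_{k,t}-\amt_k|$.

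\emph{Step 2 (decoupling the arms).} Because the $\amt_k$ are mutually independent and $\amt_k$ governs only the outcomes $\scc_{k,1},\scc_{k,2},\dots$ of its own arm, the posterior law of $\amt_k$ given the \emph{entire} interaction history depends on the data only through arm $k$'s own allocation/outcome pairs. Hence $\E|\alloc_{k,t}-\amt_k|$ is at least the Bayes risk, under absolute loss, of estimating a threshold $\amt\sim\mathrm{Unif}[\tfrac1{2r},\tfrac1r]$ from $t-1$ adaptively chosen pulls with success probability $\min(1,\alloc_s/\amt)$, for an arbitrary (possibly externally randomized) allocation rule.

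\emph{Step 3 (single-arm Bayes risk — the crux).} Fix one arm's history and let $L$ be the largest allocation that ever failed (or $\tfrac1{2r}$ if none). Failures are deterministic one-sided evidence ($\scc_s=0\Rightarrow\amt>\alloc_s$), so the posterior is supported on $[L,\tfrac1r]$ with density proportional to $\prod_{s:\scc_s=1}\min(1,\alloc_s/\theta)$. Its logarithmic derivative is $-N(\theta)/\theta$ with $N(\theta)\le t-1$ and $\theta\ge\tfrac1{2r}$, so the density is log-Lipschitz with constant $\Lambda=2r(t-1)$. A log-Lipschitz($\Lambda$) density cannot concentrate on a scale finer than $1/\Lambda$, so its mean absolute deviation is $\Omega(1/(r(t-1)))$, \emph{provided} the support $[L,\tfrac1r]$ is at least this wide. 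The support is narrower only if some allocation failed within $\delta\asymp 1/(r(t-1))$ below the threshold; since an allocation in $(\amt-\delta,\amt)$ fails with probability $\le\delta/\amt\le 2r\delta$, a union bound over the $\le t-1$ rounds shows such a ``near-threshold failure'' occurs with probability $\le (t-1)\cdot2r\delta$, which a suitable constant keeps below $1/2$. Combining, the single-arm Bayes risk after $t-1$ observations is at least $\tfrac{1}{16r(t-1)}$ up to a second-order term of order $1/(r(t-1)^2)$.

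\emph{Step 4 (summation).} Substituting into Step 1 gives
\[
\E\regret^{(n)}(A,\amt)\ \ge\ \frac r2\sum_{k=1}^r\sum_{t=2}^n\!\left(\frac{1}{16r(t-1)}-O\!\Big(\frac{1}{r(t-1)^2}\Big)\right)=\frac r{32}\,H(n-1)-\frac r2\cdot O\!\Big(\sum_{m\ge1}\frac1{m^2}\Big),
\]
and the tail $\sum_m 1/m^2=\pi^2/6$ produces the stated correction $\tfrac{\pi^2}{12}$. I expect Step 3 to be the main obstacle: the one-sided, deterministic nature of failures makes $P_\theta$ and $P_{\theta'}$ mutually singular, so a naive two-point KL/Le Cam bound gives infinite divergence and fails; the hardness must instead be extracted by showing that the \emph{informative} near-threshold failures are rare, which is exactly what the log-Lipschitz posterior estimate together with the $O(r\delta)$ failure-probability bound provides.
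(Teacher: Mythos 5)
Your Step 1 is exactly the paper's opening move (the per-round bound $\E[\regret_t\mid\cdot]\ge\frac r2\sum_{k\le r}|\alloc_{k,t}-\amt_k|$; note the event $G$ is vacuous since each $\amt_i\le\frac1r$ forces $\sum_{i\le r}\amt_i\le1$ deterministically). But Step 3, which you correctly identify as the crux, contains a genuine error: the posterior density of $\amt_k$ is \emph{not} proportional to $\prod_{s:\scc_s=1}\min(1,\alloc_s/\theta)$ on $[L,\tfrac1r]$. Each failure contributes a likelihood factor $1-\alloc_s/\theta$, and these factors are not constant on the support; the logarithmic derivative of $1-\alloc_s/\theta$ is $\frac{\alloc_s/\theta^2}{1-\alloc_s/\theta}$, which blows up as $\theta\downarrow\alloc_s$. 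In particular the factor coming from the failure that \emph{defines} $L$ vanishes at $\theta=L$, so the posterior is not log-Lipschitz with constant $2r(t-1)$, and the anti-concentration estimate you build on it does not follow as stated. A product such as $\theta^{-N}(1-L/\theta)$ peaks at $\theta=L(1+1/N)$ with curvature of order $N^2r^2$ in the log-density, so the true concentration scale is still $\Theta(1/(rt))$ and the route is probably salvageable, but recovering it requires analyzing the failure factors you dropped, and the exact constants ($\tfrac1{32}$ and the $\tfrac{\pi^2}{12}$ correction) are asserted rather than derived; there is no reason they would come out identical along this path.

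The paper sidesteps the entire posterior analysis. For each $k\le r$ and each $t$ it compares the two conditional laws of the full $t$-round transcript given $\amt_k=a$ and $\amt_k=b$ with $b-a=\frac1{4rt}$, bounds their total variation by $t(1-\tfrac ab)\le 2tr(b-a)$ via a chain rule for TV over adaptive rounds (this is where the one-sidedness you worry about is absorbed: under $\amt_k=a$ an allocation in $(a,b]$ succeeds with probability $1$, under $\amt_k=b$ with probability at least $a/b$, so the per-round TV is at most $1-a/b$ and never infinite, unlike KL), applies a clipped test function to convert TV into a two-point lower bound $\E_\mu|\alloc_{k,t+1}-a|+\E_\eta|\alloc_{k,t+1}-b|\ge(b-a)(1-2tr(b-a))$, and integrates over $a$ uniform on the prior. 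That yields $\E|\alloc_{k,t+1}-\amt_k|\ge\frac1{16rt}-\frac1{32rt^2}$ with the exact constants, and summation gives the theorem. If you want to complete your Bayesian argument you must at minimum correct the posterior formula and redo the anti-concentration with the failure factors present; otherwise I would adopt the two-point TV comparison, which delivers the stated constants directly.
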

Here is an intuition for the proof. For any $t \ge 1$, the total variation distance between the first $t$ successes ($X_{k,1} \cdots X_{k,t}$) of an arm with paramter $\amt$ and the successes of an arm with parameter $\amt'$ is at most $O(t \lvert 1/\amt - 1/\amt' \rvert)$. Hence, $t = \Omega(1/\lvert 1/\amt - /1\amt' \rvert)$ rounds are required to distinguish between $\amt$ and $\amt'$. This roughly implies that under the distribution $\mathcal{D}$ in \thmref{thm:lb-main}, one can estimate $\amt_1, \dots, \amt_r$ with an additive error not lower than $\Omega(1/(rt))$, hence the regret incurred at round $t$ by misallocating any arm $k$ is $\Omega(1/t)$. Summing over arms $1 \le k \le r$ and over all rounds $1 \le t \le n$, one obtains a regret of $\Omega(r \log n)$.

%Another improvement of our algorithm compared to the original one,
%is by presenting an unknown-horizon algorithm, which does not have to know the value of $n$ in advance.
%Indeed, one can transform a known-horizon algorithm to an unknown-horizon one, while increasing the regret by a constant factor, by invoking the algorithm consecutively with the values of $n = 2,2^2, 2^4, 2^8, \cdots$.
%However, it is always more elegant and more practical to have an algorithm with this property to start with.
%
%Lastly, our algorithm is considerably better in the actual simulations. \cmt{add some examples}

\section{Algorithm} \label{sec:alg}

In this section, we present the algorithm and an intuition to its construction.
Recall the optimal allocation algorithm which knows the parameters $\amt_1\cdots\amt_\nban$ and allocates resource to the arms in an escending order of $\amt_k$: arms $1$ to $\ell$ are fully allocated, arm $\ell+1$ receives the remaining resource and the rest of the arms receive no resource (ussuming wlog that $\amt_1 \le \cdots \le \amt_\nban$). The algorithm of \cite{LCS} uses the same algorithm, replacing the real parameter $\amt_k$ by a lower bound estimate $\lb_{k,t-1}$ obtained on iteration $t$: the arms receive resource in an escending order of the lower bound estimate, each arm $k$ receiving $\lb_{k,t-1}$ resource, until no resource is left. Their estimates $\lb_{k,t-1}$ converges to $\amt_k$, which implies that the allocations in their algorithm converge to the optimal allocation.

One would suggest using the scheme of \cite{LCS} while replacing their lower bound estimate with the one suggested in \secref{sec:single}, however, there are some obstacles which enforce the solution to be more involved. Recall that in \secref{sec:single} the arm was allocated with $\alloc_t = \lb_{t-1} + c/t$ resources where $c=2$ (in the multi armed algorithm we allow $c$ to be any constant greater than $2$). Since there are multiple arms, this solution would be wasteful: one would possibly allocate a redundant amount of $c/t$ per arm. Similarly to \thmref{thm:ub-one-gen}, one can show that an allocation of $\alloc_{k,t} = \lb_{t-1} + c\amt_k/t$ is sufficient. Since $\amt_k$ is unknown, it is replaced with its lower bound estimate, denoted $\lbf_{k,t-1}$.

Here is another issue: one cannot allocate $\lbf_{k,t-1} + \amt_k c/t$ resources on any iteration due to two reasons. First, one replaces $\amt_k$ with $\lbf_{k,t-1}$, a bound which may be inaccurate, at least on the beginning.  Secondly, due to a lack of resources, it may happen that one, for instance, would allocate an amount higher than $\lbf_{k,t-1}$ and lower than $\lbf_{k,t-1} + \lbf_{k,t-1} c/t$. Due to this issue, the solution of allocating $\lbf_{k,t-1}+\lbf_{k,t-1}c/t$ would not work. The value $\amt_k c/t$ is replaced with an amount which depends on all previous allocations: one sets $\saf_{k,t} = \sum_{i \le t} \max\{ 0, \alloc_{k,i} - \lbf_{k,i-1} \}$\footnote{This sum does not include the initialization rounds defined below.} and $\add_{k,t} = c \lbf_{k,t-1} \exp\left( -\saf_{k,t-1}/\left(c\lbf_{k,t-1} \right)\right)$, and allocates $\alloc_{k,t} = \lbf_{k,t-1} + \add_{k,t}$ if there are sufficients resources. This definition makes sense: the sequence $a_1, a_2, \dots$ defined by $a_1 = c\amt_k$ and $a_{t} = c \amt_k \exp\left(-\sum_{i=1}^{t-1} a_i / \left(c \amt_k\right)\right)$ satisfies $a_t \approx c \amt_k / t$. Hence, have the two issues described in the beginning of this paragraph not existed, the new allocation scheme would have allocated an amount similar to $\lbf_{k,t-1} + c \amt_k /t$.

An algorithm based only on $\lbf_{k,t}$ would not achieve the desired regret. A tipical situation is that the algorithm allocates any arm $k \in \{1,\dots, \ell\}$ with an amount similar to $\amt_k$, and only a small amount of resource remains for the next arm, an amount insufficient for improving the estimate: one can improve $\lbf_{k,t}$ over $\lbf_{k,t-1}$ only when $\alloc_{k,t} > \lbf_{k,t-1}$. Without being able to improve the estimates on the remaining arms, one cannot accurately decide which arm should get the remaining resource. For that reason, we create another estimate, inspired by the estimate of \cite{LCS} and by the UCB algorithm of \cite{UCB}. It is denoted by $\lbs_{k,t}$, as it is \textbf{p}robabilistic, while $\lbf_{k,t}$ is a \textbf{d}eterministic bound. This bound relies on the fact that $\mathbb{E}\left[ \scc_{k,t}\mid \alloc_{k,t} \right] = \alloc_{k,t}/\amt_k$ whenever $\alloc_{k,t} \le \amt_k$. It estimates $1/\lbs_{k,t} \approx \left(\sum_i \scc_{k,i} \right) / \left( \sum_i \alloc_{k,i} \right)$ where the sum is over all $i \le t$ such that $\alloc_{k,i} \le \lbf_{k,i-1}$: for these values of $i$ it is guaranteed that $\alloc_{k,i} \le \amt_k$. The actual estimate is slightly lower as one requires that $\lbs_{k,t} \le \amt_k$ with high probability. See \figref{alg:multi-arm} a full definition of $\lbs_{k,t}$. The resource is allocated to the arms in an ascending order of $\max\left( \lbs_{k,t}, \lbf_{k,t} \right)$.

One gets into the following dilema: what happens if, at some point, the remaining resources is higher than $\lbf_{k,t-1}$ and lower than $\lbf_{k,t-1}+\add_{k,t}$, where $k$ is the next arm to be allocated. Here are two unsuccessful solutions:
\begin{itemize}
	\item
	Allocating all the remaining resources to arm $k$: as a result, the estimate $\lbf_{k,t}$ may improve over $\lbf_{k,t-1}$, however, not as good as the improvement when allocating $\alloc_{k,t}=\lbf_{k,t-1}+\add_{k,t}$. 
	Additionally, the estimate $\lbs_{k,t}$ cannot improve after allocating more than $\lbf_{k,t-1}$ resource, hence it does not improve.
	This slow improvement of $\max(\lbf_{k,t},\lbs_{k,t})$ could imply that the arm will get a priority it does not deserve for many rounds, taking resources which could better be utilized by other arms.
	\item
	Allocating $\lbf_{k,t-1}$ resources: as a result, the estimate $\lbs_{k,t}$ will improve over $\lbs_{k,t-1}$, however $\lbf_{k,t}$ will not. Since only $\lbf_{k,t-1}$ resources are allocated rather than all remaining resources, arm $k$ may get stuck, receiving the same amount of resources on every iteration, while the remaining resources are given to inferior arms.
\end{itemize}
One can solve this problem by making sure that both $\lbs_{k,t}$ and $\lbf_{k,t}$ are improved with constant probability, tossing an unbiased coin to decide between allocating all the remaining resources to arm $k$ and allocating $\lbf_{k,t-1}$ resources.

Due to the definition of $\add_{k,t}$, our allocation scheme requires $\lbf_{k,t}$ to be positive. In order to obtain an initial positive estimate $\lbf_{k,t}$, a different allocation scheme is performed, similarly to the initialization phase of \cite{LCS}: each arm $k$ is allocated with $2^{-(t-1)} / \nban$ resources on every iteration $t$ until it fails ($\scc_{k,t}=0$). Then, $\lbf_{k,t}$ is set as the amount $\alloc_{k,t}$ allocated at failure, and the normal allocation scheme is used from then.

The algorithm appears in \figref{alg:multi-arm}. As one may notice, it may be implemented using $O(\nban)$ memory and $O(\nban \log \nban)$ time per iteration\footnote{The algorithm contains sums over $i=1,\dots,t$, however, one can calculate this sum given the sum up to $t-1$}. The authors did not find a simple way to implement such an efficient algorithm using existing tools. For instance, one may suggest discretizing the space of all possible allocations, and learning an allocation from this space using a standard multi armed bandit \citep{UCB}. However, in order to achieve a polylogarithmic regret, $\tilde{\Omega}(n)^\nban$ different arms are required, which is high even for the setting with $\nban = 1$. Another suggestion it to estimate $\amt_1, \dots, \amt_\nban$ using a maximum likelihood estimator, calculating
\begin{equation} \label{eq:mle}
\arg\max_{\amt_k > 0}\Pr\left[ \scc_{k,1} \cdots \scc_{k,t} \middle| \amt_k \alloc_{k,1} \cdots \alloc_{k,t} \right] 
= \arg\max_{\amt_k > 0}\prod_{i=1}^t \left(1 - \scc_{k,t}-\min \left\{ \alloc_{k,t} / \amt_k, 1 \right\}\right)
\end{equation}
for any arm $k$. However, it seems that any simple implementation requires that $\alloc_{k,t}/\amt_k \le 1$, a solution offered by \cite{LCS}\footnote{\cite{LCS} used confidence intervals instead of a maximum likelihood estimator.} which suffers a higher regret. Otherwise, the authors think that there is no simple way to calculate this estimate for all $t$ without storing $\alloc_{k,i}$ and $\scc_{k,i}$ in memory for all $i \le t$.

\begin{figure}%}[36]{r}{0.6\textwidth}
	% \vspace{-0.8cm}
	% \begin{minipage}{0.5\textwidth}
	\begin{center}
		
		\begin{algorithmic}[1] 
			\STATE Get as an input a parameter $c > 2$
			\STATE Set $\lbf_{k,0} \gets 0$ and $\lbs_{k,0}\gets 0$ for all $k \in \{1, \dots, \nban\}$.
			\FORALL {$t \gets 1, 2, \dots$}
			\STATE resource $\gets 1$
			\FORALL {$k \in \{1,\cdots, \nban \}$ in an increasing order of $\max(\lbf_{k,t-1},\lbs_{k,t-1})$} \label{algl:k-loop}
			\STATE $\add_{k,t} \gets \mulc \lbf_{k,t-1} \exp\left(- \frac{\saf_{k,t-1}}{\lbf_{k,t-1} \mulc} \right)$ if $\lbf_{k,t-1}> 0$ otherwise $\add_{k,t} \gets 0$
			\IF {$\lbf_{k,t-1} = 0$}
			\STATE $\alloc_{k,t} \gets \frac{1}{\nban 2^{t-1}}$ \COMMENT{Case I}
			\ELSIF{$\text{resource} \ge \lbf_{k,t-1} + \add_{k,t}$}
			\STATE $\alloc_{k,t} \gets \lbf_{k,t-1} + \add_{k,t}$ \COMMENT{Case A}
			\ELSIF{$\lbf_{k,t-1} < \text{resource} < \lbf_{k,t-1} + \add_{k,t}$}
			\STATE Draw an unbiased coin to decide whether $\alloc_{k,t} \gets \lbf_{k,t-1}$ or $\alloc_{k,t}  \gets \text{resource}$ \COMMENT{Case B}
			\ELSE
			\STATE $\alloc_{k,t} \gets \text{resource}$ \COMMENT{Case C} 
			\ENDIF
			\STATE resource $\gets \text{resource} - \alloc_{k,t}$
			\ENDFOR
			\STATE Observe $\scc_{1,t}, \dots, \scc_{\nban,t}$
			\STATE $\lbf_{k,t}\gets \max_{i \le t \colon \scc_{k,i} = 0} \alloc_{k,i}$ if the max is over a nonempty set, otherwise $\lbf_{k,t} \gets 0$
			\STATE $\saf_{k,t} = \sum_{i\le t \colon \lbf_{k,i-1}>0} \max\left\{ \alloc_{k,i} - \lbf_{k,i-1}, 0 \right\}$ 
			\STATE $\epsilon_{t} = t^{-3} \nban^{-1}$\quad;\quad$\zeta_t \gets \left( \sqrt{1/2} + \sqrt{1/2 - \log \epsilon_t} \right)^2$
			\STATE $\sas_{k,t}= \sum_{i\le t ~\colon~ \alloc_{k,i} \le \lbf_{k,i-1}} \alloc_{k,i}$\quad; \quad
			$\xs_{k,t} = \sum_{i\le t ~\colon~
				\alloc_{k,i} \le \lbf_{k,i-1}} \scc_{k,i}$
			\STATE $\lbs_{k,t} \gets \left( \sqrt{\frac{\zeta_{t}}{2 \sas_{k,t}}} + \sqrt{\frac{\zeta_{t}}{2 \sas_{k,t}} + \frac{\xs_{k,t}}{\sas_{k,t}}} \right)^{-2}$ if $\sas_{k,t} > 0$ otherwise $\lbs_{k,t} \gets 0$
			\ENDFOR
		\end{algorithmic}
	\end{center}
	\figline
	
	\caption{Resource-allocation algorithm for the multi-armed problem.}
	\label{alg:multi-arm}
\end{figure}
%\vspace{-0a.2cm}
%\input{multi-outline}
\section{Proof Outline of Theorem~\ref{thm:multi-arm}} \label{sec:pr-outline}

In this section the outline of \thmref{thm:multi-arm} is presented together with the main lemmas, where $c>2$ is the constant parameter given as an input to the algorithm.
Recall cases A, B, C and I from the algorithm in \figref{alg:multi-arm}.
We start by splitting the iterations into two types. Let $\lb_{k,t} = \max\left( \lbs_{k,t}, \lbf_{k,t} \right)$ and let $T$ be the set of ``good iterations'', for which $0 < \lb_{k,t-1} \le \amt_k$ for all $k$. The
core of the proof relates to iterations $t \in T$, while the number of
iterations $t \notin T$ can be bounded: first, by observing case I of the algorithm, one can show that after a short number of iterations, for all $1 \le k \le \nban$, $\lbf_{k,t} >0$. Secondly, it always holds that $\lbf_{k,t} \le \amt_k$. Lastly, the estimate $\lbs_{k,t}$ is constructed such that $\lbs_{k,t} \le \amt_k$ with high probability.
\begin{lemma} \label{lem:notin-T}
	The expected number of iterations $t \notin T$ is bounded by
	$C \max\left(\log \frac{1}{\amt_1}, 1\right)$,
	for some constant $C > 0$, depending only on $\mulc$.
\end{lemma}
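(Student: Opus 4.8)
The plan is to classify each bad iteration by which part of the good-iteration condition it violates. Recall that $t\notin T$ exactly when some arm $k$ has $\lb_{k,t-1}=0$ or $\lb_{k,t-1}>\amt_k$, with $\lb_{k,t-1}=\max(\lbf_{k,t-1},\lbs_{k,t-1})$. The first thing to record is that the deterministic bound never overshoots: whenever $\scc_{k,i}=0$ the allocation must have been below threshold, $\alloc_{k,i}<\amt_k$ (an allocation of at least $\amt_k$ succeeds with probability one), so every term of $\lbf_{k,t}=\max_{i\le t:\,\scc_{k,i}=0}\alloc_{k,i}$ lies below $\amt_k$ and hence $\lbf_{k,t}\le\amt_k$ always holds. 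Consequently $\lb_{k,t-1}>\amt_k$ can only be caused by $\lbs_{k,t-1}>\amt_k$, while $\lb_{k,t-1}=0$ occurs precisely while arm $k$ is still in its initialization phase (Case~I): before arm $k$ ever fails both $\lbf_{k,t-1}=0$ and $\lbs_{k,t-1}=0$, the latter because the index set defining $\sas_{k,t-1}$, which requires $\alloc_{k,i}\le\lbf_{k,i-1}=0$, is empty. I would therefore bound the number of bad iterations by the number of iterations on which some $\lbs_{k,t-1}>\amt_k$ plus the number on which some arm is uninitialized, and estimate the two counts separately.

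For the probabilistic overshoot I would invoke the defining confidence guarantee of the $\zeta_t$-based estimator, namely that $\lbs_{k,t}$ is a valid lower bound at level $\epsilon_t=t^{-3}\nban^{-1}$, i.e.\ $\pr{\lbs_{k,t}>\amt_k}\le\epsilon_t$ (this is the concentration property for which $\zeta_t$ was chosen, established as a separate lemma). Since $\lbs_{k,0}=0$, only $t\ge 2$ contribute, and a union bound over arms together with summation over time gives
\[
	\sum_{t\ge 2}\sum_{k=1}^{\nban}\pr{\lbs_{k,t-1}>\amt_k}
	\le\sum_{t\ge 2}\nban\,\epsilon_{t-1}
	=\sum_{s\ge 1}\frac{1}{s^{3}}
	=O(1),
\]
so in expectation only $O(1)$ iterations are spoiled by the probabilistic bound.

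The substantive estimate is the initialization count. Let $F_k$ be the first iteration on which arm $k$ fails; since arm $k$ is in Case~I on exactly the iterations $1,\dots,F_k$ and there receives the deterministic allocation $\alloc_{k,t}=1/(\nban 2^{t-1})$, the number of iterations carrying an uninitialized arm is exactly $\max_k F_k$. The survival probability factorizes, $\pr{F_k\ge t}=\prod_{i=1}^{t-1}\min\bigl(1,\tfrac{1}{\nban 2^{i-1}\amt_k}\bigr)$, which equals one up to the threshold index $T_k$ with $T_k\approx 1+\log_2\tfrac{1}{\nban\amt_k}$ (before which the allocation exceeds $\amt_k$ and failure is impossible) and decays super-exponentially afterward, each further factor halving, so that $\pr{F_k\ge T_k+s}\le 2^{-\binom{s}{2}}$. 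Writing $\E[\max_k F_k]\le\sum_{t\ge1}\min\bigl(1,\sum_k\pr{F_k\ge t}\bigr)$, the inner term is trivially $1$ for $t\le\max_k T_k=T_1$ (the maximum is attained at the smallest parameter $\amt_1$), contributing $T_1=O\bigl(1+\log_2\tfrac{1}{\nban\amt_1}\bigr)$, and the super-exponential tails contribute only a lower-order correction.

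The one genuinely delicate point, which I expect to be the main obstacle, is showing that this tail correction does not smuggle in a dependence on $\nban$. A union bound over the $\nban$ arms in the tail, after the truncation $\min(1,\cdot)$, costs up to $O(\sqrt{\log\nban})$ extra iterations (the terms with $\binom{s}{2}\lesssim\log_2\nban$ saturate at $1$). The resolution is that this overshoot is swallowed by the $-\log_2\nban$ already sitting inside $T_1=1+\log_2\tfrac{1}{\amt_1}-\log_2\nban$: splitting the unit resource among $\nban$ arms makes every arm fail about $\log_2\nban$ iterations earlier, and $-\log_2\nban+O(\sqrt{\log\nban})=O(1)$. Combining the three pieces yields $\E|\{t\notin T\}|\le\log_2\tfrac{1}{\amt_1}+O(1)\le C\max\bigl(\log\tfrac{1}{\amt_1},1\bigr)$, with $C$ depending only on $\mulc$, as claimed; the care needed to keep the $\max_k F_k$ tail (rather than a lossy per-arm sum) is the only subtle step.
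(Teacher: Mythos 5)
Your decomposition is the same as the paper's: split the bad iterations into those where some arm is uninitialized ($\lbf_{k,t-1}=0$, hence $\lb_{k,t-1}=0$) and those where some $\lbs_{k,t-1}>\amt_k$ (having correctly observed that $\lbf_{k,t-1}\le\amt_k$ always). Your initialization estimate is sound and essentially equivalent to the paper's Lemma~\ref{lem:first-bd} (the paper uses a geometric-waiting-time argument after the threshold iteration $t'\approx\log_2\frac{1}{\amt_1}+2$, at which point every uninitialized arm succeeds with probability at most $\frac{1}{2\nban}$ so a union bound gives failure of all of them within $O(1)$ further rounds; your direct computation of $\E[\max_k F_k]$ with the $\min(1,\cdot)$ truncation reaches the same $O(\max(1,\log\frac{1}{\amt_1}))$ and correctly notes that the $-\log_2\nban$ in the threshold absorbs the tail's union-bound cost).

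The gap is in the probabilistic part. You invoke $\pr{\lbs_{k,t}>\amt_k}\le\epsilon_t$ as ``the defining confidence guarantee \dots established as a separate lemma,'' but there is no such prior lemma: this concentration statement \emph{is} the substantive content of the present lemma (it is Lemma~\ref{lem:bad-slow-bd} in the paper, proved inside the proof of Lemma~\ref{lem:notin-T}), and it is not a routine Chernoff application. The quantities $\sas_{k,t}=\sum_{i\le t:\,\alloc_{k,i}\le\lbf_{k,i-1}}\alloc_{k,i}$ and $\xs_{k,t}$ are sums over an adaptively selected set of iterations with history-dependent allocation sizes, so one cannot condition on the allocations and apply an i.i.d.\ relative Chernoff bound. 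The paper handles this by discretizing the accumulated resource into levels $s'\amt_k$, introducing the stopping times $\rvt_{s'}$ at which each level is first exceeded, applying a martingale version of the Chernoff bound (Lemma~\ref{lem:mart-sum}) at each stopping time, and taking a union bound over the at most $t$ levels. That union bound is exactly why the achievable per-iteration bound is $t\,\epsilon_t=t^{-2}\nban^{-1}$ rather than the $\epsilon_t=t^{-3}\nban^{-1}$ you assert; the sum over $t$ and $k$ is still $O(1)$ (it gives $\pi^2/6$), so your conclusion survives, but as written the key concentration step is asserted rather than proved, and the naive route to it fails because of adaptivity.
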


From now focus on iterations $t \in T$. Note that on any iteration $t \in T$, no arm is allocated according to case I. Let $A'_t$ be the set of all arms processed in the loop over the arms in line~\ref{algl:k-loop} of \figref{alg:multi-arm} on iteration $t$ before encountering an arm $k$ which is not allocated according to case A. Let $k'_t$ be the first arm processed not according to case A. If the arm $k'_t$ is allocated according to case B then set $B'_t = \{k'_t\}, C'_t = \emptyset$ and if according to case C then $C'_t = \{k'_t\}, B'_t = \emptyset$. 
If $k'_t$ is undefined then $B_t' = C_t' = \emptyset$.
Define the sets $A_t, B_t$ and $C_t$ as the sets of all arms of $A'_t,B'_t$ and $C'_t$ (respectively) which are among the first $\ell+1$ arms processed on iteration $t$.
If $B_t \ne \emptyset$, define by $\add'_t$ the difference between the amount of resource left for $k'_t$ and $\lbf_{k'_t,t-1}$. Note that if $B_t \ne \emptyset$ then arm $k'_t$ is of case $B$, hence it will either be allocated with $\alloc_{k'_t,t} = \lbf_{k'_t,t-1}$ or with $\alloc_{k'_t,t} = \lbf_{k'_t,t-1} + \add'_t$, each with probability $1/2$. The sets $A_t$, $B_t$ and $C_t$ are defined this way only for iterations $t \in T$, and they are defined at emptysets for $t \notin T$.

Define by $Z_t$ the random variable which contains all the history up to the point where all
$\alloc_{1,t+1}, \dots, \alloc_{\nban,t+1}$ are defined and just before observing $\scc_{1,t+1} \cdots \scc_{\nban,t+1}$
(it contains the values $\{ \scc_{k,i} \}_{1\le k \le \nban,i\le t}$ and the random coins tossed in case B of the algorithm up to and including iteration $t+1$).
The expected regret on iteration $t$ given $Z_{t-1}$ equals
\[
	\E[R_t \mid Z_{t-1}] 
	= \nal + \indi_{\nban > \nal} \frac{\rem}{\amt_{\nal+1}} 
	- \sum_{k=1}^\nban \E \left[ \scc_{k,t} \mid Z_{t-1} \right]
	= \nal + \indi_{\nban > \nal} \frac{\rem}{\amt_{\nal+1}} 
	- \sum_{k=1}^\nban \min\left( 1,\ \frac{\alloc_{k,t}}{\amt_k} \right).
\]
The next lemma bounds $\E[R_t \mid Z_{t-1}]$, and decomposes it
in terms of $A_t$, $B_t$ and $C_t$.
\begin{lemma} \label{lem:general}	
	Let $t \in T$. 
	It holds that
	\begin{align}
	\E[\regret_t \mid Z_{t-1}] \le 
	&~~~~\sum_{k \in A_t} \left( 1 - \min\left( \frac{M_{k,t}}{\amt_k},1 \right)\right) \label{eq:gen-l1}\\
	&+ \sum_{k \in A_t} \frac{\add_{k,t}}{\lbf_{k,t-1}} 
	+ \sum_{k \in B_t} \frac{\add_t'}{\lbf_{k,t-1}} \label{eq:gen-l2} \\
	&+ \sum_{k \in B_t \cup C_t}
	\begin{cases}
	\min(\lbf_{k,t-1}, \alloc_{k,t}) (1/\nu_{\nal+1}-1/\nu_k) & |A_t| = \nal \\
	\min(\lbf_{k,t-1}, \alloc_{k,t}) (1/\nu_{\nal}-1/\nu_k) & |A_t| < \nal
	\end{cases}. \label{eq:gen-l3}
	\end{align}
\end{lemma}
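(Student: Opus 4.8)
The plan is to lower-bound the achieved reward by the contribution of the arms in $A_t\cup B_t\cup C_t$ alone, and then to compare this against the optimal reward by a resource-accounting argument. Since the case-B coin at iteration $t$ is part of $Z_{t-1}$, the whole allocation is determined given $Z_{t-1}$, so $\sum_k\E[\scc_{k,t}\mid Z_{t-1}]=\sum_k\min(1,\alloc_{k,t}/\amt_k)$. Dropping the nonnegative contributions of the arms outside $A_t\cup B_t\cup C_t$, I would keep $\sum_{k\in A_t}\min(1,\alloc_{k,t}/\amt_k)=|A_t|-\sum_{k\in A_t}(1-\min(\alloc_{k,t}/\amt_k,1))$ for the case-A arms, which is exactly line~\eqref{eq:gen-l1}. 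For the boundary arm I would check in each of cases B and C that $\min(1,\alloc_{k,t}/\amt_k)\ge\min(\lbf_{k,t-1},\alloc_{k,t})/\amt_k$, using only $\lbf_{k,t-1}\le\amt_k$ (valid on $t\in T$, where no arm is in case I). Adding the matching part of~\eqref{eq:gen-l3} converts $\min(\lbf_{k,t-1},\alloc_{k,t})/\amt_k$ into $\min(\lbf_{k,t-1},\alloc_{k,t})/\amt_{\mathrm{th}}$, where $\amt_{\mathrm{th}}=\amt_{\nal+1}$ if $|A_t|=\nal$ and $\amt_{\mathrm{th}}=\amt_\nal$ if $|A_t|<\nal$. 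The lemma thereby reduces to the single resource inequality, call it $(\star)$,
\[
\nal+\indi_{\nban>\nal}\frac{\rem}{\amt_{\nal+1}}\ \le\ |A_t|+\sum_{k\in A_t}\frac{\add_{k,t}}{\lbf_{k,t-1}}+\sum_{k\in B_t}\frac{\add'_t}{\lbf_{k,t-1}}+\sum_{k\in B_t\cup C_t}\frac{\min(\lbf_{k,t-1},\alloc_{k,t})}{\amt_{\mathrm{th}}}.
\]

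To prove $(\star)$ I would use three facts that hold on $t\in T$. First, since $\lb_{k,t-1}\le\amt_k$ for every $k$ and arms are processed in increasing order of $\lb_{\cdot,t-1}$, the $j$-th processed arm $\pi_j$ satisfies $\lbf_{\pi_j,t-1}\le\lb_{\pi_j,t-1}\le\amt_j$, because pointwise domination passes to order statistics (using $\amt_1\le\cdots\le\amt_\nban$); in particular $\sum_{k\in A_t}\lbf_{k,t-1}\le\sum_{j\le|A_t|}\amt_j$. Second, the budget identity $1=\sum_{k\in A_t}(\lbf_{k,t-1}+\add_{k,t})+\rho$, where $\rho$ is the resource left for the boundary arm $k'_t$ at position $|A_t|+1$, together with the case conditions $\rho\le\lbf_{k'_t,t-1}$ (case C) and $\rho=\lbf_{k'_t,t-1}+\add'_t<\lbf_{k'_t,t-1}+\add_{k'_t,t}$ (case B). Third, $\add_{k,t}\le\mulc\,\lbf_{k,t-1}$. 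The engine is an efficiency-averaging bound: arms $|A_t|+1,\dots,\nal$ and the partial arm $\nal+1$ all have success-per-resource at most $1/\amt_{|A_t|+1}$, so $\mathrm{OPT}-|A_t|\le(1-\sum_{j\le|A_t|}\amt_j)/\amt_{|A_t|+1}$, and by the first fact the numerator is at most $1-\sum_{k\in A_t}\lbf_{k,t-1}=\rho+\sum_{k\in A_t}\add_{k,t}$.

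With this in hand the case analysis proceeds as follows. The case $|A_t|=\nal+1$ is immediate since $\rem<\amt_{\nal+1}$ forces the left side below $|A_t|$. For $|A_t|=\nal$ the tail is the single arm $\nal+1$, the threshold $\amt_{|A_t|+1}=\amt_{\nal+1}$ coincides with $\amt_{\mathrm{th}}$, and the bound splits term by term: $\sum_{A_t}\add_{k,t}/\amt_{\nal+1}\le\sum_{A_t}\add_{k,t}/\lbf_{k,t-1}$ (as $\lbf_{k,t-1}\le\amt_{\nal+1}$ for $k\in A_t$), while $\rho/\amt_{\nal+1}$ matches the boundary term directly in case C, and in case B after using $\add'_t/\amt_{\nal+1}\le\add'_t/\lbf_{k'_t,t-1}$ together with $\lbf_{k'_t,t-1}\le\amt_{\nal+1}$. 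For $|A_t|<\nal$ the same chain gives $\mathrm{OPT}-|A_t|\le(\rho+\sum_{A_t}\add_{k,t})/\amt_{|A_t|+1}$, and again $\sum_{A_t}\add_{k,t}/\amt_{|A_t|+1}\le\sum_{A_t}\add_{k,t}/\lbf_{k,t-1}$.

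I expect the last case to be the main obstacle. The threshold produced naturally by the efficiency-averaging bound is $\amt_{|A_t|+1}$, whereas~\eqref{eq:gen-l3} charges the boundary resource at the \emph{larger} threshold $\amt_\nal\ge\amt_{|A_t|+1}$; this fixed, $|A_t|$-independent threshold is precisely what will later permit summing the per-iteration bound over all $t\in T$. Consequently the boundary term $\rho/\amt_{|A_t|+1}$ (or $\lbf_{k'_t,t-1}/\amt_{|A_t|+1}$ in case B) overshoots its target $\min(\lbf_{k'_t,t-1},\alloc_{k'_t,t})/\amt_\nal$, and the discrepancy, bounded by $\lbf_{k'_t,t-1}(1/\amt_{|A_t|+1}-1/\amt_\nal)$, must be absorbed by the slack $\sum_{k\in A_t}\add_{k,t}(1/\lbf_{k,t-1}-1/\amt_{|A_t|+1})$ hidden in the case-A terms. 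Establishing this absorption is where the budget identity and $\add_{k,t}\le\mulc\,\lbf_{k,t-1}$ must be combined quantitatively: from the case conditions and order-statistic domination one gets $\sum_{j=|A_t|+2}^{\nal}\amt_j\le\sum_{k\in A_t}\add_{k,t}$ (up to the extra $\add_{k'_t,t}$ term in case B), which shows that the resource the algorithm spent on the $\add$-terms of $A_t$ already accounts for the interior tail arms, leaving the boundary term to cover only the single arm at position $|A_t|+1$ and the $\rem$-fraction. I expect this non-term-by-term bookkeeping—rather than any single inequality—to be the delicate point, the $|A_t|=\nal$ and trivial cases being routine once the efficiency-averaging bound is set up.
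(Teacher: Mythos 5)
Your reduction of the lemma to the resource inequality $(\star)$ is sound, and your handling of the cases $|A_t|=\nal+1$ and $|A_t|=\nal$ matches the paper's. The gap is exactly where you flagged it, the case $|A_t|<\nal$, and the absorption mechanism you propose cannot close it. The single-rate bound $\mathrm{OPT}-|A_t|\le\bigl(1-\sum_{j\le|A_t|}\amt_j\bigr)/\amt_{|A_t|+1}$ charges \emph{all} remaining resource, including the part the optimal allocation spends on arms $|A_t|+2,\dots,\nal+1$, at the highest rate $1/\amt_{|A_t|+1}$; the slack you want to absorb the resulting overshoot, $\sum_{k\in A_t}\add_{k,t}\bigl(1/\lbf_{k,t-1}-1/\amt_{|A_t|+1}\bigr)$, can be identically zero while the overshoot is bounded away from zero. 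Concretely: take $\nban=2$, $\amt_1=0.3$, $\amt_2=0.7$ (so $\nal=2$, $\mathrm{OPT}=2$), $\mulc=3$, and an iteration $t\in T$ where arm $1$ is processed first with $\lbf_{1,t-1}=0.3-\epsilon$ and $\saf_{1,t-1}$ small, so $\add_{1,t}\approx0.9$. Then arm $1$ is in case B, $A_t=\emptyset$, $B_t=\{k'_t\}$, $\add'_t=0.7+\epsilon$, and $|A_t|=0<\nal$. Your intermediate bound is $(1-0)/\amt_1\approx3.33$, while the right-hand side of $(\star)$ is $\add'_t/\lbf_{1,t-1}+\lbf_{1,t-1}/\amt_{\nal}\approx2.33+0.43=2.76$; there is no case-A slack at all, and the case-B slack $\add'_t\bigl(1/\lbf_{1,t-1}-1/\amt_1\bigr)=O(\epsilon)$ is useless. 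So the chain you propose is false as stated, even though $(\star)$ itself holds here ($\mathrm{OPT}-|A_t|=2\le2.76$). Your auxiliary claim $\sum_{j=|A_t|+2}^{\nal}\amt_j\le\sum_{k\in A_t}\add_{k,t}$ also fails in this example ($0.7\not\le0$), and adding $\add_{k'_t,t}$ does not produce slack of the required form.

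What the paper does instead is avoid the single-rate bound entirely. Writing the optimal reward as $H(1)=\int_0^1h$ for the non-increasing step function $h$ with $h\equiv1/\amt_k$ on $\bigl[\sum_{i<k}\amt_i,\sum_{i\le k}\amt_i\bigr)$, it splits the integral at $a=\sum_{j\le|A_t|}\amt_j$ and $b=a+\sum_{k\in A_t}\add_{k,t}+\add'_t$. The middle segment $[a,b]$ has length exactly the $\add$-mass and is charged at rate $h(a)=1/\amt_{|A_t|+1}$, which is then pushed onto the $\add_{k,t}/\lbf_{k,t-1}$ terms via Lemma~\ref{lem:below}. The tail segment $[b,1]$ has length at most $\min(\lbf_{k'_t,t-1},\alloc_{k'_t,t})$ (this is the budget identity, Equation~\eqref{eq:lem-general-5}) and, crucially, $b\ge1-\lbf_{k'_t,t-1}\ge1-\amt_{\nal}\ge\sum_{i<\nal}\amt_i$, so $h\le1/\amt_{\nal}$ on all of it. The threshold $\amt_{\nal}$ for the boundary arm thus enters \emph{directly}, because the tail lies past the start of arm $\nal$'s block in the optimal allocation's resource axis---no absorption is needed. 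In the example above this gives $0.7/0.3+0.3\cdot(1/0.7)=2.76$, which is tight. To repair your write-up, replace the efficiency-averaging step by this two-segment split; the rest of your reduction can stand.
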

The proof of \lemref{lem:general} matches between the allocations by the optimal allocation, and those by the algorithm.
The amount in line~\eqref{eq:gen-l1} relates to the difference between the reward of arms $1, \dots, |A_t|$ in the optimal allocation, and the reward of the members of $A_t$ in the algorithm.
The amount in line~\eqref{eq:gen-l2} relates to possibly allocating $\sum_{k \in A_t} \add_{k,t} + \add'_t$
resource to the wrong arms.
Line~\eqref{eq:gen-l3} stands for the regret incurred from allocating $\min(\lbf_{k'_t,t-1}, \alloc_{k'_t,t})$
resources to arms in $B_t \cup C_t$, instead of allocating it either to arm $\nal$ or to arm $\nal + 1$.
One can bound the total regret of the algorithm by summing the bound obtained in \lemref{lem:general} over $t \in T$ and changing the order of summation:
\begin{align}
\E R^{(n)}
&= \sum_{t=1}^n \E R_t
= \sum_{t \in T} \E R_t + \sum_{t \notin T} \E R_t
= \sum_{t\in T} \E [\E[ R_t \mid Z_{t-1}]] + \sum_{t \notin T} \E R_t \notag\\
&\le \sum_{k=1}^\nban \sum_{t \colon k \in A_t} \E \left[ 1 - \min\left(\frac{\alloc_{k,t}}{\nu_k}, 1 \right)\right] \label{eq:reg1} \\
&\qquad + \sum_{k=1}^\nban \E \left[\sum_{t \colon k \in A_t} \frac{\add_{k,t}}{\lbf_{k,t-1}} 
+ \sum_{t \colon k \in B_t}^n  \frac{\add_t'}{\lbf_{k,t-1}} \right] \label{eq:reg2}\\
&\qquad + \sum_{k=1}^\nban \E\left[ \sum_{t \colon k \in B_t \cup C_t}
\begin{cases}
\min(\lbf_{k,t-1}, \alloc_{k,t}) (1/\amt_{\nal+1}-1/\amt_{k}) & |A_t| = \nal \\
\min(\lbf_{k,t-1}, \alloc_{k,t}) (1/\amt_{\nal}-1/\amt_{k}) & |A_t| < \nal
\end{cases} \right] \label{eq:reg3} \\
&\qquad + (n-\mathbb{E}\left\vert T  \right\vert) (\nal + 1), \label{eq:reg4}
\end{align}
where the term in line \eqref{eq:reg4} is obtained from $\sum_{t \notin T} \mathbb{E}R_t$ by the fact that the reward of the optimal allocation is at most $\ell+1$, hence the regret on any iteration is at most $\ell+1$.
The regret is decomposed into four parts,
appearing in lines \eqref{eq:reg1}, \eqref{eq:reg2}, \eqref{eq:reg3} and \eqref{eq:reg4},
each bounded separately, where the amount in line \eqref{eq:reg4} is bounded by \lemref{lem:notin-T}. 

First, we bound the amount in line \eqref{eq:reg1}.
\begin{lemma} \label{lem:At} There exists a constant 
	$C > 0$,
	depending only on $\mulc$, such that for every arm 
	$k$:
	\[
	\E \left[ \sum_{t \colon k \in A_t} \left(1 -
	\min\left(1, \alloc_{k,t} / \amt_k\right) \right) \right]
	\le C(\log n + \log \nban).
	\]
	%Furthermore, if $k > \nal + 1$, then this expected value is bounded by $C(\log \frac{\amt_k}{\amt_k - \amt_{\nal + 1}} + \log \nban)$.
\end{lemma}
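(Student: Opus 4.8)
The plan is to reinterpret the left-hand side as the expected number of \emph{failures} of arm $k$ on its case-A iterations, and then to run, on the subsequence of these iterations, the same lower-bound tracking argument used in \thmref{thm:ub-one-gen}. Since $\E[\scc_{k,t}\mid Z_{t-1}]=\min(1,\alloc_{k,t}/\amt_k)$, the summand $1-\min(1,\alloc_{k,t}/\amt_k)$ is exactly the conditional failure probability $\Pr[\scc_{k,t}=0\mid Z_{t-1}]$, so by the tower property the quantity to be bounded equals $\E\big[\sum_{t\,:\,k\in A_t}(1-\scc_{k,t})\big]$. On every case-A iteration (here $t\in T$, so $0<\lbf_{k,t-1}\le\amt_k$) the allocation is $\alloc_{k,t}=\lbf_{k,t-1}+\add_{k,t}$; this is precisely the single-arm update of \figref{alg:one-arm} with the fixed increment $2/t$ replaced by the adaptive increment $\add_{k,t}$, and it falls under the general setting of \remref{rem:gen:sin}. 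A case-A failure raises $\lbf_{k,\cdot}$ by exactly $\add_{k,t}$, while case-B/C failures and the extra mass fed into $\saf_{k,\cdot}$ by case-B allocations can only shrink the gap $\amt_k-\lbf_{k,\cdot}$ further or enlarge $\saf_{k,\cdot}$, both of which help an upper bound.

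Next I would index the case-A iterations of arm $k$ by $\tau=1,2,\dots$ and prove that the increment behaves like $\add\approx \mulc\,\lbf/\tau$. The defining recursion $\saf\mapsto\saf+\add$ with $\add=\mulc\lbf\exp(-\saf/(\mulc\lbf))$ is the discretisation of $\dot S=\mulc L\,e^{-S/(\mulc L)}$, whose solution gives $\saf\approx \mulc\lbf\log\tau$ and hence $\add\approx \mulc\lbf/\tau$. Concretely, setting $u_\tau=\saf/(\mulc\lbf)$, when $\lbf$ is (locally) constant one gets $e^{u_{\tau+1}}\ge e^{u_\tau}+1$, so $e^{u_\tau}\ge\tau$ and $\add_{k,t}\le \mulc\lbf_{k,t-1}/\tau$; a matching lower bound $\add\gtrsim \mulc\lbf/\tau$ follows once $\saf=O(\lbf\log\tau)$, which is itself implied by summing the per-step increments.

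With this control on $\add$, I would push through the gap recursion from \thmref{thm:ub-one-gen}. Writing $\delta_{k,\tau}=\amt_k-\lbf_{k,\cdot}$, the identity $\E[\delta_{k,\tau+1}\mid Z_{t-1}]=\delta_{k,\tau}-\add_{k,t}\,p$ with $p=(\delta_{k,\tau}-\add_{k,t})^+/\amt_k$ and the elementary bound $(\delta-\add)^+\ge\delta-\add$ yield $\E[\delta_{k,\tau+1}\mid\cdot]\le\delta_{k,\tau}(1-\add/\amt_k)+\add^2/\amt_k$, exactly as in the single-arm proof (note this step does not need $\amt_k\le 1$). Substituting $\add\asymp \mulc\amt_k/\tau$, valid once $\lbf_{k,\cdot}$ has reached a constant fraction of $\amt_k$, solves to $\E[\delta_{k,\tau}]\le \tfrac{\mulc^2}{\mulc-1}\,\tfrac{\amt_k}{\tau}$, so the expected failures in this ``steady'' phase are at most $\sum_\tau \E[\delta_{k,\tau}]/\amt_k=O(\log n)$, with constant depending only on $\mulc>2$ (the number of case-A steps is at most $n$). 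For the initial ``catch-up'' phase, where $\lbf_{k,\cdot}\ll\amt_k$, each failure multiplies $\lbf_{k,\cdot}$ by roughly $1+\mulc e^{-u}\ge 1+\mulc/e$; since the initialisation phase guarantees a starting value of order $\amt_k/\nban$, only $O(\log\nban)$ case-A failures are needed before $\lbf_{k,\cdot}$ exceeds, say, $\amt_k/2$. Adding the two phases gives the claimed $C(\log n+\log\nban)$.

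The main obstacle is making the two-phase split rigorous. The phase boundary is a random stopping time, and both the upper and (crucially) the lower estimate $\add\asymp \mulc\lbf/\tau$ must survive the regime where $\lbf_{k,\cdot}$ is still growing: the very failures that drive the catch-up disturb the clean relation $e^{u_\tau}\ge\tau$, since a failure increases $\lbf$ and thereby lowers $u=\saf/(\mulc\lbf)$. I would handle this by showing that the perturbations from case-B increments into $\saf_{k,\cdot}$ and from case-B/C failures are one-sided for the upper bound and hence absorbable, and by invoking the lower bound on $\add$ only after $\lbf_{k,\cdot}$ stabilises, charging the finitely many catch-up failures directly to the additive $\log\nban$ term rather than attempting to track the recursion through them.
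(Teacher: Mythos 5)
Your high-level plan --- an exponential catch-up phase costing $O(\log\nban)$ case-A failures followed by a steady phase in which the normalized gap decays like $1/\tau$ --- has the same skeleton as the paper's proof (\lemref{lem:start} together with \lemref{lem:first-bd} for the first phase, \lemref{lem:err-small} for the second), and your reduction of the left-hand side to $\E[\sum_{t\colon k\in A_t}(1-\scc_{k,t})]$ is valid. But there is a genuine gap exactly at the step you flag as ``the main obstacle'': the lower bound $\add_{k,t}\gtrsim \mulc\amt_k/\tau$, with $\tau$ the case-A index, which your gap recursion needs for its contraction, is false in general and you give no mechanism for recovering it. The quantity $\add_{k,t}=\mulc\lbf_{k,t-1}e^{-\saf_{k,t-1}/(\mulc\lbf_{k,t-1})}$ is driven down by every increment to $\saf_{k,\cdot}$, and $\saf_{k,\cdot}$ accumulates mass from case-B iterations that your clock $\tau$ does not count; this inflation is cumulative, so it cannot be ``charged to the catch-up phase'' --- it degrades $\add_{k,t}$ throughout the steady phase. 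Your remark that enlarging $\saf_{k,\cdot}$ ``helps an upper bound'' is only half right: it helps the upper bound on $\add_{k,t}$ but destroys the lower bound, and the lower bound is the one the error recursion lives on. The paper's fix is structural: it replaces the case-A count by the weighted clock $h_{k,t}=(\alloc_{k,t}-\lbf_{k,t-1})_+/\add_{k,t}$, so that every unit added to $\saf_{k,\cdot}$ advances the clock; it proves via a potential in the variable $w=e^{\saf/(\mulc\lbf)}$ that the expected weighted number of iterations until the error drops below $\eta$ is $O(1/\eta)$ (\lemref{lem:err-small}); and, crucially, it proves the separate concentration statement $\E[\exp(2\saf_{k,\rvt_{k,1}}/(\mulc\amt_k(1-p)))]\le C$ (\lemref{lem:k-sqr}) to control the $\saf$ accumulated by the start of the steady phase, which enters the steady-phase bound exponentially. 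Your proposal has no counterpart to this last lemma, and without it that term is uncontrolled.

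Two smaller gaps: in the catch-up phase, ``each case-A failure multiplies $\lbf_{k,\cdot}$ by at least $1+\mulc/e$'' presumes $\saf/(\mulc\lbf)\le 1$, but a long run of successes inflates this exponent and collapses the multiplicative gain; the paper's \lemref{lem:start} needs a potential with a $(c_2-w)_+$ correction precisely to absorb such runs. And in the steady phase only an expected (not almost-sure) bound on the weighted hitting times is available, so the error cannot be bounded pointwise in $\tau$; the paper instead sums by layers $1/(m+1)<\err_{k,t-1}\le 1/m$ using the hitting-time counts $w_{k,1/m}$, a dyadic summation that your argument would also need.
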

To give an intuition, recall that whenever $k \in A_t$, there is a sufficient amount of resource for arm $k$, and one allocates $\alloc_{k,t} = \lbf_{k,t-1} + \add_{k,t}$. 
Note that whenever $k \in A_t$, 
\[1 - \min\left(\frac{\alloc_{k,t}}{\amt_k},1\right) = \max\left( \frac{\amt_k - \alloc_{k,t}}{\amt_k},0 \right)
\le \frac{\amt_k - \lbf_{k,t-1}}{\amt_k}. \]
Similarly to the corresponding claim in the single armed problem, one can roughly show, by a potential function calculation, that after $m$ iterations when $k \in A_t$, it holds that $\mathbb{E} \left( \amt_k - \lbf_{k,t-1}\right)/\amt_k = O(1/m)$. Hence, one can roughly bound the amount in line \eqref{eq:reg1} corresponding to any arm $k$ by $\sum_{m=1}^n O(1/m) = O(\log n)$. The actual proof is inductively by a potential function.

Next, we bound the amount in line \eqref{eq:reg2}, which corresponds to the redundant resource given to 
the arms.
\begin{lemma} \label{lem:add-bound}
	There exists some constant $C > 0$, depending only on $\mulc$,
	such that for every arm $k$: 
	\[
	\E\left[ \sum_{t \colon k \in A_t} \frac{\add_{k,t}}{\lbf_{k,t-1}} 
	+ \sum_{t \colon k \in B_t} \frac{r'_t}{\lbf_{k,t-1}} \right]
	\le C(\log n + \log \nban).
	\]
	%Furthermore, if $k > \nal + 1$, then this value is bounded by $C(\log \frac{\amt_k}{\amt_k - \amt_{\nal + 1}} + \log \nban)$.
\end{lemma}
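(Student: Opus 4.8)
The plan is to reduce the entire sum to a telescoping argument for a single scalar potential. By the definition of $\add_{k,t}$ in \figref{alg:multi-arm}, whenever $\lbf_{k,t-1}>0$ we have the clean identity
\[
	\frac{\add_{k,t}}{\lbf_{k,t-1}} = c\,\exp\!\left(-\frac{\saf_{k,t-1}}{c\,\lbf_{k,t-1}}\right) = \frac{c}{P_{k,t-1}}, \qquad P_{k,t-1} := \exp\!\left(\frac{\saf_{k,t-1}}{c\,\lbf_{k,t-1}}\right).
\]
In case B the gap obeys $\add'_t < \add_{k,t}$, so $\add'_t/\lbf_{k,t-1} < c/P_{k,t-1}$ as well. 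Thus it suffices to bound $\E\big[\sum_{t\colon k\in A_t\cup B_t} c/P_{k,t-1}\big]$ by $O(\log n + \log\nban)$, i.e.\ to control the potential $P_{k,t}=\exp(\saf_{k,t}/(c\,\lbf_{k,t}))$ over the relevant rounds. A useful preliminary observation is that $P_{k,t}$ changes only on case-A and case-B rounds: in cases C and I (and on rounds where arm $k$ receives no resource) one has $\alloc_{k,t}\le\lbf_{k,t-1}$, so neither $\saf_{k,t}$ nor $\lbf_{k,t}$ moves.

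Next I would pin down the dynamics of $s_{k,t}:=\saf_{k,t}/(c\,\lbf_{k,t})$, equivalently $P_{k,t}=e^{s_{k,t}}$, over the case-A/B rounds. On a successful case-A round $\lbf$ is unchanged and $\saf$ grows by $\add_{k,t}$, so $s_{k,t}=s_{k,t-1}+e^{-s_{k,t-1}}$, which yields $P_{k,t}\ge P_{k,t-1}+1$: each success advances the potential by at least one. On a failing case-A round both $\saf$ and $\lbf$ rise by the same $\add_{k,t}$, and a short computation gives
\[
	s_{k,t-1}-s_{k,t} = \frac{\add_{k,t}\,(\saf_{k,t-1}-\lbf_{k,t-1})}{c\,\lbf_{k,t-1}\,(\lbf_{k,t-1}+\add_{k,t})} \le c\,s_{k,t-1}\,e^{-s_{k,t-1}} \le \frac{c}{e},
\]
so a failure lowers $\log P_{k,t}$ by at most the absolute constant $c/e$ and never raises it. A case-B round increases $\saf$ by $0$ or $\add'_t<\add_{k,t}$ according to the coin, and on failure lifts $\lbf$ by at most $\add'_t$; in all cases it moves $s_{k,t}$ by at most $e^{-s_{k,t-1}}$, and in conditional expectation over the coin it increases $s_{k,t}$ by exactly $\tfrac{1}{2c}\,\add'_t/\lbf_{k,t-1}$.

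With these facts I would use the elementary inequality $1/P\le 2\log(P'/P)$, valid whenever $P'\ge P+1$ and $P\ge 1$, on the successful case-A rounds, and charge each failing round the crude bound $1/P_{k,t-1}\le 1$. Telescoping $\sum(\log P_{k,t}-\log P_{k,t-1})=\log P_{k,n}-\log P_{k,0}=\log P_{k,n}$ and moving the nonpositive failure increments to the other side gives
\[
	\sum_{t\colon k\in A_t}\frac{c}{P_{k,t-1}} \;\le\; 2c\,\log P_{k,n} + O(1)\cdot F_k,
\]
where $F_k$ is the number of failing case-A/B rounds of arm $k$. The case-B payments $\add'_t/\lbf_{k,t-1}$ are handled identically after taking the conditional expectation over the case-B coin, under which their expectation equals $2c$ times the expected $\saf$-driven increment of $s_{k,t}$, and therefore also telescopes into $O(\log P_{k,n})$. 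Two quantities remain. For $\log P_{k,n}$ I would invoke the self-limiting growth: on every non-failure round $e^{s_{k,t}}\le e^{s_{k,t-1}}e^{e^{-s_{k,t-1}}}\le e^{s_{k,t-1}}+2$ (using $e^{-s}\le1$), while failures only shrink the potential, so $P_{k,n}\le 1+2n$ deterministically and $\log P_{k,n}=O(\log n)$. For $F_k$, a failing case-A round occurs with conditional probability $1-\min(1,\alloc_{k,t}/\amt_k)$, whence $\E[F_k]\le \E\big[\sum_{t\colon k\in A_t}(1-\min(1,\alloc_{k,t}/\amt_k))\big]+(\text{case-B failures})=O(\log n+\log\nban)$ by \lemref{lem:At}. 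Combining yields the claim.

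The main obstacle I anticipate is the bookkeeping around the non-case-A rounds. The deterministic bound $\lbf_{k,t-1}$ jumps upward precisely on failures --- the very progress that drives \lemref{lem:At} --- yet each such jump simultaneously contracts the potential $P_{k,t}$, and the case-B coin means the potential only advances in expectation. Obtaining a telescoping in which the failures are charged as an additive $O(F_k)$ term, rather than a weaker $O(\log^2 n)$ bound in which each of the $\Theta(\log n)$ failures is permitted to reset the potential arbitrarily far downward, is the delicate step; the quantitative facts that rescue the argument are that one failure costs only the constant $c/e$ in $\log P_{k,t}$ and that the potential can never grow by more than an additive constant in a single round.
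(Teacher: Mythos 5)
Your route is genuinely different from the paper's: you track the single potential $P_{k,t}=\exp\!\big(\saf_{k,t}/(\mulc\,\lbf_{k,t})\big)$ and telescope $\log P_{k,t}$, whereas the paper splits time at $\rvt_{k,1}$ (the first $t$ with $\lbf_{k,t}\ge(1-p)\amt_k$), disposes of the early phase via Lemma~\ref{lem:aux-At-add} (an $O(\log\nban)$ bound on $\sum(\alloc_{k,t}-\lbf_{k,t-1})_+/\add_{k,t}$), and in the late phase simply replaces $\lbf_{k,t-1}$ by $(1-p)\amt_k$ in the denominator and bounds the numerator deterministically by the harmonic-sum induction $\saf_{k,t}\le \mulc\,\amt_k H(t-1)$. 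Several of your ingredients are sound: $P_{k,t}\ge P_{k,t-1}+1$ on a successful case-A round, $P_{k,t}\le P_{k,t-1}+2$ on any non-failing round (hence $\log P_{k,n}=O(\log n)$), and failures only decrease $P$ (using $\saf_{k,t}\ge\lbf_{k,t}$).

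The gap is in the failure accounting. You charge each failing round a fixed $\log$-potential drop of $\mulc/e$ and then assert $\E[F_k]=O(\log n+\log\nban)$ "by \lemref{lem:At}". That lemma controls only $\sum_{t\colon k\in A_t}(1-\min(1,\alloc_{k,t}/\amt_k))$, i.e.\ case-A failures; it says nothing about case-B failures, and their number is \emph{not} logarithmic in general. If arm $k$ sits in case B for many rounds with a tiny leftover $\add'_t$ (say $\add'_t=\amt_k/(4n)$ with $\lbf_{k,t-1}\approx\amt_k/2$), then each round it fails with constant probability, $\lbf_{k,t}$ creeps up by only $\add'_t$ per failure, and the expected number of failing case-B rounds is $\Theta(n)$ — so charging $\mulc/e$ per failure yields a linear bound. (The lemma's statement survives this scenario only because each such failure drops $\log P_{k,t}$ by merely $\approx \add'_t\,s_{k,t-1}/(\mulc\,\lbf_{k,t-1})$, not by a constant.) If you instead charge each failure its true decrement, the total drop is bounded by $\big(\max_t s_{k,t}\big)\cdot\sum_{\text{failures}}\frac{\lbf_{k,t}-\lbf_{k,t-1}}{\lbf_{k,t}}\le O(\log n)\cdot\log\frac{\amt_k}{\lbf_{k,\rvt_{k,0}}}$, which is $O(\log n\log\nban)$ in expectation — a product, not the claimed sum $O(\log n+\log\nban)$. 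To repair this you essentially need the paper's phase split: after $\rvt_{k,1}$ the factor $\log(\amt_k/\lbf_{k,t})$ is $O(1)$, and before $\rvt_{k,1}$ one needs the separate $O(\log\nban)$ control of Lemma~\ref{lem:aux-At-add} rather than the global potential. The same issue infects your treatment of the case-B payments themselves, whose telescoped sum is again offset by the uncontrolled failure decrements.
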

We give an intuition for the proof.
Note that if $k \in A_t$ then $\max\left(0, \alloc_{k,t}- \lbf_{k,t-1}\right)=\add_{k,t}$, and if $k \in B_t$ then $k$ is of case B, hence $\max\left(0, \alloc_{k,t}- \lbf_{k,t-1}\right)=\add'_t$ with probability $1/2$ and $\max\left(0, \alloc_{k,t}- \lbf_{k,t-1}\right)=0$ with probability $0$. Therefore, one can bound
\begin{equation} \label{eq:add-outline}
\E\left[ \sum_{t \colon k \in A_t} \add_{k,t} + \sum_{t \colon k \in B_t} \add'_t \right]
\le 2\E\left[ \sum_{t =1}^n \max\left(0, \alloc_{k,t}- \lbf_{k,t-1}\right) \right].
\end{equation}
Note that by the definition of the algorithm,
\begin{align}
\max\left(0, \alloc_{k,t}- \lbf_{k,t-1}\right) 
\le \add_{k,t}
&= \lbf_{k,t-1} c \exp\left( - \frac{\sum_{i =1}^{t-1} \max\left(0, \alloc_{k,i}- \lbf_{k,i-1}\right)}{c \lbf_{k,i-1}} \right)\label{eq:wrong-sum}\\
&\le c \amt_k \exp\left( - \frac{\sum_{i =1}^{t-1} \max\left(0, \alloc_{k,i}- \lbf_{k,i-1}\right)}{c \amt_k} \right), \notag
\end{align}
where the last inequality follows from the fact that $\lbf_{k,t-1}\le \amt_k$ and the fact that $x e^{-\alpha/x}$ is monotonic nondecreasing in $x$ for $\alpha \ge 0$ and $x>0$\footnote{Note the sum in the right hand side of line \eqref{eq:wrong-sum} is over all $i \le t-1$. While the definition of $\add_{k,t}$ requires the sum to be over all $i \le t-1$ such that $\lbf_{k,i-1}>0$, we ignore this requirement, for simplicity of presentation.}.
One can show that this implies that
$\sum_{t =1}^n \max\left(0, \alloc_{k,t}- \lbf_{k,t-1}\right) \le \sum_{t=1}^n a_t$, where $a_1 = c \amt_k$ and $a_t = c \amt_k \exp\left(-\sum_{i=1}^{t-1} a_i / (c \amt_k)\right)$ for all $t>1$. It holds that $a_t \approx c\amt_k/t$, which implies that $\sum_{t=1}^n a_t = O(\amt_k\log n)$. Combining the last inequalities, one obtains a bound of $\amt_k \log n$ on the left hand side of Eq. \eqref{eq:add-outline}. This concludes the proof since $\lbf_{k,t-1} = \Omega(\amt_k)$ for most values of $t$.

Lastly, we bound on the amount in line \eqref{eq:reg3}, inspired by \cite{LCS} and \cite{UCB}.
\begin{lemma} \label{lem:slow}
	There exists some constant $C > 0$, depending only on $\mulc$, such that for every arm $k$:
	\[
	\E\left[ \sum_{t \in T \colon k \in B_t \cup C_t}
	\begin{cases}
	\min(\lbf_{k,t-1}, \alloc_{k,t}) (1/\nu_{\nal+1}-1/\nu_{k}) & |A_t| = \nal \\
	\min(\lbf_{k,t-1}, \alloc_{k,t}) (1/\nu_{\nal}-1/\nu_{k}) & |A_t| < \nal
	\end{cases} \right]
	\le \begin{cases}
	C \frac{\amt_k}{\amt_k - \amt_{\nal + 1}} \log n & k > \nal + 1 \\
	C \frac{\amt_k}{\amt_k - \amt_{\nal}} \log n & k = \nal + 1 \\
	0 & k < \nal + 1
	\end{cases}.
	\]
\end{lemma}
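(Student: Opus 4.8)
The plan is to dispose of the easy case $k \le \nal$ first and then treat the suboptimal arms $k \ge \nal+1$ by a UCB-style argument. For $k < \nal+1$, i.e. $k \le \nal$, every summand is nonpositive and the bound $0$ is immediate: when $|A_t| = \nal$ the coefficient $1/\amt_{\nal+1} - 1/\amt_k \le 0$ since $\amt_k \le \amt_\nal \le \amt_{\nal+1}$, and when $|A_t| < \nal$ the coefficient $1/\amt_\nal - 1/\amt_k \le 0$ since $\amt_k \le \amt_\nal$; in both cases it multiplies the nonnegative quantity $\min(\lbf_{k,t-1}, \alloc_{k,t})$, so the sum is $\le 0$ pointwise. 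From now on fix $k \ge \nal+1$, where the coefficient is positive, and observe that the summand is controlled by the ``clean'' allocation feeding the probabilistic estimate: in case C we have $\alloc_{k,t} \le \lbf_{k,t-1}$, so $\min(\lbf_{k,t-1},\alloc_{k,t}) = \alloc_{k,t}$ is exactly the increment added to $\sas_{k,t}$, while in case B the summand equals $\lbf_{k,t-1}$ and, in expectation over the case-B coin, the clean increment to $\sas_{k,t}$ is $\frac{1}{2}\lbf_{k,t-1} = \frac{1}{2}\min(\lbf_{k,t-1},\alloc_{k,t})$. Hence, up to a factor $2$, the summand is the expected growth of $\sas_{k,t}$ on the rounds where $k$ is a boundary arm, and it suffices to bound the total clean allocation arm $k$ can accumulate while still being selected into the first $\nal+1$ positions.

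The structural (ranking) step is as follows. On a good iteration $t \in T$ we have $\lb_{j,t-1} = \max(\lbf_{j,t-1},\lbs_{j,t-1}) \le \amt_j$ for every $j$, and arms are processed in increasing order of $\lb_{j,t-1}$. I claim that if $k > \nal+1$ and $\lbs_{k,t-1} > \amt_{\nal+1}$ then $k \notin B_t \cup C_t$: indeed each of the $\nal+1$ arms $1,\dots,\nal+1$ satisfies $\lb_{j,t-1} \le \amt_j \le \amt_{\nal+1} < \lbs_{k,t-1} \le \lb_{k,t-1}$, so all of them are processed strictly before $k$, pushing $k$ to a position $\ge \nal+2$, outside the first $\nal+1$ arms that define $A_t,B_t,C_t$. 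Taking the contrapositive, on every round with $k \in B_t \cup C_t$ we have $\lbs_{k,t-1} \le \amt_{\nal+1}$. The same argument with $\amt_\nal$ in place of $\amt_{\nal+1}$ shows that for $k = \nal+1$ the only rounds with a positive coefficient ($|A_t| < \nal$, coefficient $1/\amt_\nal - 1/\amt_{\nal+1}$) force $\lbs_{\nal+1,t-1} \le \amt_\nal$, and that for $k > \nal+1$ the sub-case $|A_t| < \nal$ forces $\lbs_{k,t-1} \le \amt_\nal \le \amt_{\nal+1}$, which only tightens the final estimate because $\amt_k - \amt_\nal \ge \amt_k - \amt_{\nal+1}$.

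The concentration step converts the ``stuck'' estimate $\lbs_{k,t-1} \le \amt_{\nal+1}$ into an upper bound on $\sas_{k,t-1}$. Since the clean successes $\xs_{k,t}$ concentrate around their mean $\sas_{k,t}/\amt_k$, a one-sided Chernoff bound applied to the defining formula of $\lbs_{k,t}$ shows that there is a threshold
\[
	\tau_k = \Theta\!\left( \frac{\amt_k^2\, \amt_{\nal+1}}{(\amt_k - \amt_{\nal+1})^2}\, \zeta_n \right), \qquad \zeta_n = O(\log n),
\]
such that, off a failure event of probability at most $\epsilon_t$, $\sas_{k,t-1} > \tau_k$ already forces $\lbs_{k,t-1} > \amt_{\nal+1}$. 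Combined with the ranking step, every boundary round $t \in T$ with $k \in B_t \cup C_t$ therefore satisfies $\sas_{k,t-1} \le \tau_k$, except on the failure event. The clean increments over the rounds with $\sas_{k,t-1} \le \tau_k$ form a subset of all increments to $\sas_{k,t}$, so they telescope to at most $\tau_k + 1$ (the value of $\sas$ just after it first exceeds $\tau_k$, each increment being $\le 1$). Taking expectations and using the factor $2$ from case B gives $\E\bigl[\sum \min(\lbf_{k,t-1},\alloc_{k,t})\bigr] \le 2(\tau_k+1)$; multiplying by the coefficient $1/\amt_{\nal+1} - 1/\amt_k = (\amt_k-\amt_{\nal+1})/(\amt_k\amt_{\nal+1})$ yields exactly $O\!\left(\frac{\amt_k}{\amt_k-\amt_{\nal+1}}\log n\right)$, and the analogous computation with $\amt_\nal$ gives the stated bound for $k=\nal+1$. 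The rounds on the failure event contribute at most $\sum_t \epsilon_t$ times a bounded coefficient, which is $O(1)$ by the choice $\epsilon_t = t^{-3}\nban^{-1}$.

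I expect the concentration step to be the main obstacle: one must invert the specific confidence-bound formula defining $\lbs_{k,t}$ to extract the threshold $\tau_k$ with the correct $(\amt_k - \amt_{\nal+1})^{-2}$ dependence, and then control the accumulated failure probability across all rounds through the schedule $\epsilon_t,\zeta_t$ (this is the upper-tail analogue, for $\xs_{k,t}$, of the concentration already used to define $T$ and prove \lemref{lem:notin-T}). The ranking argument and the telescoping of clean allocations are comparatively routine, as is the bookkeeping of the case-B coin, which only costs the constant factor $2$.
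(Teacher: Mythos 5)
Your proposal is correct and follows essentially the same route as the paper's proof: the nonpositivity observation for $k \le \nal$, the ranking step via the ordering of $\lb_{j,t-1}$ on good iterations (the paper's Lemma~\ref{lem:below}), the factor-$2$ accounting for the case-B coin, and the reduction to bounding the total ``clean'' allocation $\sum \indi_{\alloc_{k,t}\le\lbf_{k,t-1}}\alloc_{k,t}$ accumulated while $\lb_{k,t-1}$ stays below $\amt_{\nal+1}$ (resp.\ $\amt_\nal$), which the paper carries out in Lemma~\ref{lem:slow-grow} via exactly the martingale upper-tail Chernoff bound and threshold/telescoping argument you sketch. The only cosmetic differences are that the paper buckets rounds by the value of $\sas_{k,t}$ rather than using a single threshold $\tau_k$, and bounds each clean increment by $\amt_{k'}$ rather than $1$, but neither changes the substance.
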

We give an intuition for the proof, ignoring the dependency on $\amt_1 \cdots \amt_\nban$ for simplicity. Recall that $1/\lbs_{k,t-1}$ is estimated roughly by the number of successes divided by the total resource, $\left(\sum_i \scc_{k,i}\right) /\sum_i \alloc_{k,i}$ over iterations $i \le t-1$ for which $\alloc_{k,i} \le \lbf_{k,i-1}$. For a single $i$ in the sum, expectation of $\scc_{k,i}/\alloc_{k,i}$ is indeed $1/\amt_k$, and
a relative Chernoff bound can show that if $\sum_i \alloc_{k,i}$ is sufficiently large then this estimate is close to $1/\amt_k$ with high probability. Fix some $k > \ell+1$ and if $\sum_i \alloc_{k,i} = \Omega(\log n)$ for a sufficiently large constant, then $\lbs_{k,t-1} > \amt_{\ell+1}$. If $t \in T$ this implies that $\lbs_{k,t-1} >\amt_{\ell+1} \ge \lb_{1,t-1}, \dots, \lb_{\ell+1,t-1}$ and $k$ is not one of the first $\ell+1$ arms processed on iteration $t$. Hence, $k$ is not in $B_t \cup C_t$ from that point onwards, which implies that $\mathbb{E} \sum_t \alloc_{k,t} = O(\log n)$, where the sum is over iterations $1\le t \le n$ such that $\alloc_{k,t} \le \lbf_{k,t-1}$ and $k \in B_t \cup C_t$. Since $B_t$ and $C_t$ contain arms of cases B and C respectively, whenever $k \in C_t$ it holds that $\alloc_{k,t} \le \lbf_{k,t-1}$ and whenever $k \in B_t$ then $\alloc_{k,t} \le \lbf_{k,t-1}$ with probability $1/2$. In particular, this implies that 
\[
\mathbb{E} \left[\sum_{t \colon k \in B_t \cup C_t} \min(\lbf_{k,t-1}, \alloc_{k,t}) \right]
\le 2\mathbb{E} \left[\sum_{\substack{t \colon k \in B_t \cup C_t\\\alloc_{k,t}\le \lbf_{k,t-1}}} \min(\lbf_{k,t-1}, \alloc_{k,t}) \right]
= 2\mathbb{E} \left[\sum_{\substack{t \colon k \in B_t \cup C_t\\\alloc_{k,t}\le \lbf_{k,t-1}}} \alloc_{k,t}\right].
\]
The last term is $O(\log n)$, which concludes the lemma for any arm $k > \ell+1$. One can similarly bound the amount corresponding to $k = \ell+1$, while the amount corresponding to $k \le \ell$ is non-positive since $1/\amt_{\ell}- 1/\amt_k \le 0$.

\section{Simulations}
\begin{figure}%{r}{0.51\textwidth}
\centering
\includegraphics[width=0.51\textwidth]{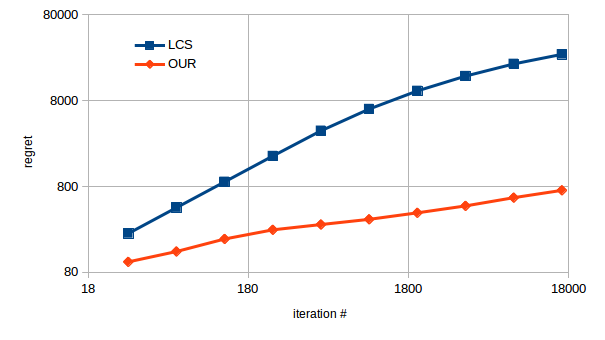}
\caption{\small{ Regret vs n for two algorithms (log scale).}}
\label{fig:sims}
\end{figure}
We conducted %few 
simulations to evaluate the merits of our
methods, each for $100$ executions. First, we followed the choice of \cite{LCS} and used a
problem with $\nban = 2$ and $\amt_1 = 0.4, \amt_2 = 0.6$ as a problem
where the regret contains only a term of the form $\nal \log^2 n$, and
indeed found out that the regret behaves as $45 \log^2 n$. We remind
the reader that the main improvement of our algorithm is by replacing 
the $\nal \log^2 n$ term with $\nal \log n$.  This term corresponds to the
regret obtained from the fact the algorithm does not know the exact
requirements ($\amt_k$) of the top $\nal$ arms. 
We experimented with 
$\log_2 n = 1,2, \dots, 18$, and $\mulc = 2.5$, and the regret behaves
as $3.5 \log n$ with high confidence. 
For $n = 2^{18}$ this is an improvement from $7053$ to $43$.

While our main improvement in the regret corresponds to reducing the term $\ell \log^2 n$ to $\ell \log n$, the other main term, $\log n \sum_{k=\ell+2}^\nban \frac{\amt_k}{\amt_k - \amt_{\ell-1}}$, which corresponds to arms $k > \ell+1$, appears in both papers. Hence, one expects that the greatest difference between the algorithms would be in situations where $\nban/\ell$ is low. Indeed, this is the case, as shown in our simulations.

We also performed experiments where the arm parameters $\amt_k$ are
uniformly spanned.  One execution was performed with $\nban \!\!=\!\! 50$, and
$\amt_k \!\!=\!\! \frac{2k}{25^2}$ for $k \!\!=\!\! 1, \dots, 50$. That is,
$\amt_1 \!\!=\!\! 2/25^2$, $\nu_{50}\!\!=\!\!100/25^2\!\!=\!\!4/25$, and $\nal \!\!=\!\! 24$. The regret vs
$n$ is plotted in \figref{fig:sims}. In each of the 100 executions, we ran one copy of our algorithm
as it is any-time, yet multiple-copies of the algorithm of \cite{LCS}: one
for each value of the horizon $n$. For $n\!\!=\!\!2^{14}$ our algorithm
suffers a regret of $721$ compared to $27,681$ by their algorithm.

Similar trends were observed with other choices of the parameters. For
example, with $\nban = 100$, and $\amt_k = \frac{2k}{100^2}$ for
$k = 1, \dots, 100$. Here $\nal = 99$, therefore only the
term $\ell \log n$ takes part, and for $n = 2^{18}$ our algorithm suffers a regret
of $1167$ compared to $352,173$ by their. Another example is when we
set $\nban = 50$ and $\amt_k = \frac{2k}{10^2}$ for
$k = 1, \dots, 50$ (therefore $\nal = 9$). For a horizon of $n = 2^{18}$
our algorithm suffers a regret of $1,544$ compared to the $21,665$ by
the benchmark. The regret of our algorithm in these two experiments as function of $n$ is drawn in \figref{fig:2-sims}, where the $x$-axis is in logarithmic scale and the $y$ axis is in a normal scale. One can see that in the first experiment, the regret is a linear function of $\log n$, while in the second experiment, the regret is a linear function of $\log n$ for any value $n \ge 2^{15}$ (we executed up to $n=2^{20}$).
\begin{figure}%{r}{0.51\textwidth}
	\centering
	\begin{subfigure}[b]{0.5\textwidth}
		\centering
		\includegraphics[height=1.6in]{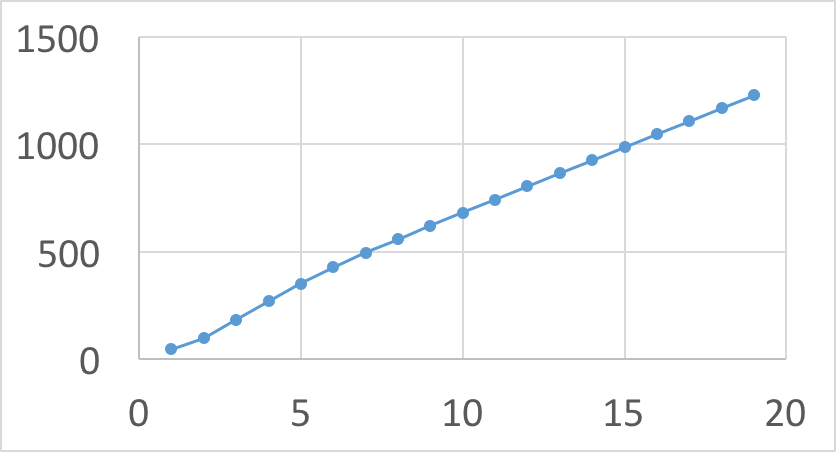}
		\caption{100 arms}
	\end{subfigure}%
	~ 
	\begin{subfigure}[b]{0.5\textwidth}
		\centering
		\includegraphics[height=1.5in]{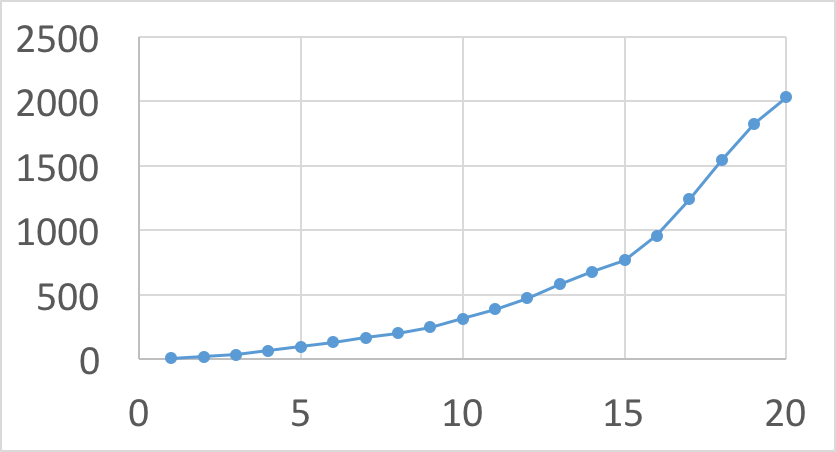}
		\caption{50 arms}
	\end{subfigure}
	\caption{Regret of the algorithm in \figref{alg:multi-arm} as a function of $\log_2 n$}
	\label{fig:2-sims}
\end{figure}
\section{Summary}
We described an algorithm for the multi-resource allocation problem
and proved both upper and lower regret bounds of $\Theta(\log n)$, an
improvement compared to the regret of $O(\log^2 n)$ of the previous
algorithm by \cite{LCS}. Additionally, we discussed a related settings, where
there is only a single-arm.
%
%
% \begin{figure}%[20]{l}{0.51\textwidth}
% %    \vspace{-1cm}
% \begin{center}
% \includegraphics[width=0.50\textwidth]{simulation1}
% \caption{Regret vs n for two algorithms (logarithmic scale).}
% \label{fig:sims}
% \end{center}
% \end{figure}%wrapfigure}
Simulations we performed showed
the supervisory of our algorithm. Future directions are extending our
results to the multi-resource problem~\citep{LCS2}, to the contextual
case where algorithms receive instance dependent side
information, and to the case where the parameters or total amount of
resource drifts in time.
Lastly, we believe that the algorithm can be modified to handle non linear bandits,
similarly to the generalization of the one arm problem in \remref{rem:gen:sin}.
% Acknowledgments---Will not appear in anonymized version
%\acks{We thank a bunch of people.}

%\nocite{*}
%{\small
%\bibliographystyle{plain}
\bibliography{paper}

\begin{thebibliography}{10}
\providecommand{\natexlab}[1]{#1}
\providecommand{\url}[1]{\texttt{#1}}
\expandafter\ifx\csname urlstyle\endcsname\relax
  \providecommand{\doi}[1]{doi: #1}\else
  \providecommand{\doi}{doi: \begingroup \urlstyle{rm}\Url}\fi

\bibitem[Abbasi-Yadkori et~al.(2011)Abbasi-Yadkori, P{\'a}l, and
  Szepesv{\'a}ri]{AYSC11}
Yasin Abbasi-Yadkori, D{\'a}vid P{\'a}l, and Csaba Szepesv{\'a}ri.
\newblock Improved algorithms for linear stochastic bandits.
\newblock In \emph{Advances in Neural Information Processing Systems}, pages
  2312--2320, 2011.

\bibitem[Agrawal and Goyal(2013)]{AR13}
Shipra Agrawal and Navin Goyal.
\newblock Thompson sampling for contextual bandits with linear payoffs.
\newblock In \emph{ICML (3)}, pages 127--135, 2013.

\bibitem[Auer et~al.(2002)Auer, Cesa-Bianchi, and Fischer]{UCB}
Peter Auer, Nicolo Cesa-Bianchi, and Paul Fischer.
\newblock Finite-time analysis of the multiarmed bandit problem.
\newblock \emph{Machine learning}, 47\penalty0 (2-3):\penalty0 235--256, 2002.

\bibitem[Broder and Rusmevichientong(2012)]{BM12}
Josef Broder and Paat Rusmevichientong.
\newblock Dynamic pricing under a general parametric choice model.
\newblock \emph{Operations Research}, 60\penalty0 (4):\penalty0 965--980, 2012.

\bibitem[Chernoff(1952)]{CHE52}
Herman Chernoff.
\newblock A measure of asymptotic efficiency for tests of a hypothesis based on
  the sum of observations.
\newblock \emph{The Annals of Mathematical Statistics}, pages 493--507, 1952.

\bibitem[Kveton et~al.(2015)Kveton, Wen, Ashkan, and Szepesvari]{KBW15}
Branislav Kveton, Zheng Wen, Azin Ashkan, and Csaba Szepesvari.
\newblock Tight regret bounds for stochastic combinatorial semi-bandits.
\newblock In \emph{AISTATS}, 2015.

\bibitem[Lai and Robbins(1985)]{LR}
Tze~Leung Lai and Herbert Robbins.
\newblock Asymptotically efficient adaptive allocation rules.
\newblock \emph{Advances in applied mathematics}, 6\penalty0 (1):\penalty0
  4--22, 1985.

\bibitem[Lattimore et~al.(2014)Lattimore, Crammer, and Szepesv{\'a}ri]{LCS}
Tor Lattimore, Koby Crammer, and Csaba Szepesv{\'a}ri.
\newblock Optimal resource allocation with semi-bandit feedback.
\newblock In \emph{UAI}, pages 477--486. AUAI Press, 2014.

\bibitem[Lattimore et~al.(2015)Lattimore, Crammer, and Szepesv{\'a}ri]{LCS2}
Tor Lattimore, Koby Crammer, and Csaba Szepesv{\'a}ri.
\newblock Linear multi-resource allocation with semi-bandit feedback.
\newblock In \emph{Advances in Neural Information Processing Systems}, pages
  964--972, 2015.

\bibitem[Rajeev et~al.(1989)Rajeev, Teneketzis, and Anantharam]{RAT89}
A~Rajeev, Demosthenis Teneketzis, and Venkatachalam Anantharam.
\newblock Asymptotically efficient adaptive allocation schemes for controlled
  iid processes: Finite parameter space.
\newblock \emph{IEEE Transactions on Automatic Control}, 34\penalty0 (3), 1989.

\end{thebibliography}
\appendix
%\tableofcontents
\section{Proof of Theorem~\ref{thm:ub-one-gen}}
\label{sec:proof:thm:ub-one-gen}
%This proof assumes that the arm is of the general case described in Section~\ref{sec:single:gen}

The proof is for the general setting discribed in Remark~\ref{rem:gen:sin}

Assume that the allocation rule of the algorithm is
$\alloc_t \gets \lb_{t-1} + \frac{\mulc}{t}$ for some $\mulc > 1$ ($2$
is replaced by $\mulc$), and bound the expected regret by
$\frac{\mulc^2}{\mulc-1}(\log n + 1)$.  Fix some arm, and let $\amt$
be its resource requirement.

We divide the expected regret into two parts:
\begin{align} 
\E \regret^{(n)}
&= \E \left[\nit(1 - \amt) - \sum_{t=1}^\nit \left( \scc_t - \alloc_t \right) \right] \nonumber \\
&= \E \left[\sum_{t=1}^\nit (\alloc_t - \amt) \right] +\E \left[\sum_{t=1}^\nit (1 - \scc_t) \right]. \label{eq:one-bandit-proof-1}
\end{align}

We start by bounding the first term of \eqref{eq:one-bandit-proof-1}.
Since the lower bound $\lb_t$ is always correct, namely, $\lb_t \le \amt$,
it holds that:
\begin{equation} \label{eq:one-bandit-proof-2}
\sum_{t=1}^\nit (\alloc_t - \amt)
= \sum_{t=1}^\nit \left(\frac{\mulc}{t} + \lb_{t-1} - \amt \right) 
\le \sum_{t=1}^\nit \frac{\mulc}{t}
\le \mulc(\log \nit + 1).
\end{equation}

Next, we bound the second term of \eqref{eq:one-bandit-proof-1}.
Define $\verr_t = \amt - \lb_t$. This is a random variable, since $\lb_t$ is also a random variable.
We will start by bounding $\E[\verr_t]$.

\begin{lemma}
	For any $0 \le t \le n$,
	\[
	\E[\verr_t] \le \frac{\mulc^2}{\mulc-1} \frac{1}{t+1}.
	\]
\end{lemma}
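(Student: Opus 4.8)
The plan is to first derive a one-step recursive inequality for $\E[\verr_t]$ and then solve it by induction on $t$. Throughout I condition on everything observed through iteration $t-1$, so that $\lb_{t-1}$ — and hence $\verr_{t-1} = \amt - \lb_{t-1} \ge 0$ — is fixed, and the allocation $\alloc_t = \lb_{t-1} + \mulc/t$ is determined. The whole proof is carried out in the general setting of \remref{rem:gen:sin}, so I only use restrictions (1) and (2) on the success probability.

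To get the recursion I split on whether $\alloc_t \ge \amt$. If $\alloc_t \ge \amt$, restriction (1) forces $\scc_t = 1$, so $\lb_t = \lb_{t-1}$ and $\verr_t = \verr_{t-1}$; moreover $\alloc_t \ge \amt$ is exactly $\verr_{t-1} \le \mulc/t$. If instead $\alloc_t < \amt$, then on failure $\lb_t = \alloc_t$ gives $\verr_t = \verr_{t-1} - \mulc/t$, while on success $\verr_t = \verr_{t-1}$; writing $q$ for the conditional failure probability, restriction (2) yields $q \ge \amt - \alloc_t = \verr_{t-1} - \mulc/t \ge 0$. Hence in this case the conditional expectation is $\verr_{t-1} - q\,\mulc/t \le \verr_{t-1} - (\verr_{t-1} - \mulc/t)\mulc/t = \verr_{t-1}(1 - \mulc/t) + \mulc^2/t^2$, where using the \emph{lower} bound on $q$ correctly produces an \emph{upper} bound on $\E[\verr_t]$. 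I then check this same right-hand side dominates the first case: there $\verr_t = \verr_{t-1}$, and the target minus $\verr_{t-1}$ equals $\mulc^2/t^2 - \verr_{t-1}\mulc/t \ge 0$ precisely because $\verr_{t-1} \le \mulc/t$. Taking total expectation gives
\[
\E[\verr_t] \le \E[\verr_{t-1}]\left(1 - \frac{\mulc}{t}\right) + \frac{\mulc^2}{t^2}.
\]

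Next I prove the claimed bound by induction, writing $K = \mulc^2/(\mulc-1)$. The base case $t=0$ is immediate: $\verr_0 = \amt - \lb_0 = \amt \le 1 \le K$, since the minimum of $K$ over $\mulc > 1$ is $4$ (attained at $\mulc = 2$). For the inductive step, assume $\E[\verr_{t-1}] \le K/t$ and substitute into the recursion; the key algebraic fact is $\mulc(K - \mulc) = K$, which follows from $K - \mulc = \mulc/(\mulc-1)$. This collapses the bound to $K(t-1)/t^2$, and $(t-1)(t+1) \le t^2$ then gives $K(t-1)/t^2 \le K/(t+1)$, closing the induction.

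The only genuinely delicate step is getting the drift term right in the recursion: the progress of $\lb_t$ toward $\amt$ comes entirely from the failure event, whose probability restriction (2) lower-bounds by $\verr_{t-1} - \mulc/t$, and one must both apply this lower bound in the correct direction and verify that the single recursion also covers the degenerate case $\alloc_t \ge \amt$. Everything afterward is the standard induction, where the constant $\mulc^2/(\mulc-1)$ is exactly the value that makes the cross term cancel.
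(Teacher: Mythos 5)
Your proposal is correct and follows essentially the same route as the paper: the identical one-step recursion $\E[\verr_t]\le \E[\verr_{t-1}](1-\mulc/t)+\mulc^2/t^2$ obtained from the lower bound $q\ge \amt-\alloc_t$ on the failure probability, followed by the same induction with the constant $\mulc^2/(\mulc-1)$ and the inequality $(t-1)(t+1)\le t^2$. Your explicit handling of the degenerate case $\alloc_t\ge\amt$ is a minor presentational refinement of what the paper handles implicitly (there the bound $p\ge\verr_{t-1}-\mulc/t$ is vacuous and the same algebra goes through), not a different argument.
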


\begin{proof}
	We start by bounding the conditional expectation $\E[\verr_t | \verr_{t-1}]$, for any $1 \le t \le \nit$.
	Fix some $1 \le t \le \nit$ and $0 \le \err \le 1$, and assume that $\verr_{t-1} = \err$.
	The problem definition assumes that the probability that $\scc_t = 0$ is at least
	\[
	\amt - \alloc_t
	= \amt - \lb_{t-1} - \frac{\mulc}{t} 
	= \verr_{t-1} - \frac{\mulc}{t}
	= \err - \frac{\mulc}{t}.
	\]
	Denote $p = \Pr[\scc = 0 | \verr_{t-1} = \err]$. As we have just showed,
	$p \ge \err - \frac{\mulc}{t}$.
	By the definition of the algorithm, with probability $p$, $\lb_t = \alloc_t$. At that case,
	\[
	\verr_t 
	= \amt - \lb_t
	= \amt - \alloc_t
	= \amt - \lb_{t-1} - \frac{\mulc}{t}
	= \verr_{t-1} - \frac{\mulc}{t}
	= \err - \frac{\mulc}{t}.
	\]
	With probability $1-p$, $\lb_t = \lb_{t-1}$. At that case,
	\[
	\verr_t
	= \amt - \lb_t
	= \amt - \lb_{t-1}
	= \verr_{t-1}
	= \err.
	\]
	Therefore,
	\[
	\E[\verr_t | \verr_{t-1} = \err]
	= p (\err - \frac{\mulc}{t}) + (1-p) \err
	= \err - p \frac{\mulc}{t}
	\le \err - (\err - \frac{\mulc}{t}) \frac{\mulc}{t}
	= \err - \err \frac{\mulc}{t} + \frac{\mulc^2}{t^2}.
	\]
	Writing it differently, this means that
	$\E[\verr_t | \verr_{t-1}] \le \verr_{t-1} (1- \frac{\mulc}{t}) + \frac{\mulc^2}{t^2}$.
	
	We conclude the lemma by induction on $0 \le t \le \nit$.
	For $t = 0$,
	\[
	\E[\verr_0]
	= \E[\amt - \lb_0]
	= \amt
	\le 1
	\le \mulc
	\le \frac{\mulc^2}{\mulc-1}
	= \frac{\mulc^2}{\mulc-1} \frac{1}{t+1}.
	\]
	For $1 \le t \le \nit$,
	\begin{align*}
	\E[\verr_t]
	&= \E[\E[\verr_t | \verr_{t-1}]] \\
	&= \E[\verr_{t-1} (1 - \frac{\mulc}{t}) + \frac{\mulc^2}{t^2}]\\
	&\le \frac{\mulc^2}{(\mulc-1)t} (1 - \frac{\mulc}{t}) +  \frac{\mulc^2}{t^2}\\
	&= \frac{\mulc^2}{(\mulc-1)t^2} (t - \mulc) +  \frac{\mulc^2}{(\mulc-1)t^2}(\mulc-1)\\
	&= \frac{\mulc^2}{(\mulc-1)}\frac{t-1}{t^2}\\
	&\le \frac{\mulc^2}{(\mulc-1)}\frac{1}{t+1},
	\end{align*}
	where the last inequality follows since $(t-1)(t+1)=t^2-1 < t^2$.
\end{proof}

The algorithm implies that
$\lb_t  = \lb_{t-1} + \frac{\mulc}{t}(1 - \scc_t)$, for all $1 \le t \le \nit$.
Therefore,
\[
\verr_{t-1} - \verr_t
= (\amt - \lb_{t-1}) - (\amt - \lb_t)
= \lb_t - \lb_{t-1}
= \frac{\mulc}{t}(1 - \scc_t).
\]
This implies
\[
\E[1 - \scc_t]
= \E\left[\frac{t(\verr_{t-1} - \verr_t)}{\mulc} \right]
= \frac{t(\E \verr_{t-1} - \E \verr_t)}{\mulc}.
\]
Summing over $1 \le t \le \nit$,
\begin{align} \label{eq:one-bandit-proof-3}
\sum_{t=1}^\nit \E[1 - \scc_t] \nonumber
&= \sum_{t=1}^\nit \frac{t(\E \verr_{t-1} - \E \verr_t)}{\mulc} \nonumber \\
&= \frac{\E \verr_0}{\mulc} + \sum_{t=1}^{\nit-1} \frac{\E \verr_{t}}{\mulc} - \frac{\nit \E \verr_\nit}{\mulc} \nonumber\\
&\le \sum_{t=0}^{\nit-1} \frac{\E \verr_{t}}{\mulc} \nonumber \\
&\le \sum_{t=0}^{\nit-1} \frac{1}{\mulc} \frac{\mulc^2}{(\mulc-1)(t+1)} \nonumber \\
&= \frac{\mulc}{\mulc-1} \sum_{t=1}^{\nit} \frac{1}{t} \nonumber \\
&\le \frac{\mulc}{\mulc-1} (\log \nit + 1). 
\end{align}

Equations \eqref{eq:one-bandit-proof-1}, \eqref{eq:one-bandit-proof-2} and \eqref{eq:one-bandit-proof-3} conclude that:
\[
	\E \regret^{(n)} 
	= \E \left[\sum_{t=1}^\nit (\alloc_t - \amt) \right] +\E \left[\sum_{t=1}^\nit (1 - \scc_t) \right]
	\le \mulc (\log n + 1) + \frac{\mulc}{\mulc - 1} (\log n + 1)
	= \frac{\mulc^2}{\mulc-1} (\log n + 1).
\]
This is minimized at $\mulc = 2$, with a value of $4 (\log n + 1)$.

\section{Proof of Theorem~\ref{thm:multi-arm}}
\label{sec:proof:thm:multi-arm}

This is section contains a proof for the lemmas appearing in the proof
outline in Section~\ref{sec:pr-outline}.
\secref{sec:pr-def} contains a list of all definitions,
\secref{sec:p-general} presents the proof of Lemma~\ref{lem:general},
\secref{sec:p-At} presents the proof of Lemma~\ref{lem:At},
\secref{sec:p-add} presents the proof of Lemma~\ref{lem:add-bound},
\secref{sec:p-notin-T} presents the proof of Lemma~\ref{lem:notin-T},
and \secref{sec:p-slow} presents the proof of Lemma~\ref{lem:slow}.

\subsection{Table of definitions} \label{sec:pr-def}
Below is the table of all definitions.

\begin{itemize}
	\item 
	$Z'_t$: contains everything the algorithm has seen up to and including iteration $t$.
	It includes the values of $\scc_{k,t}$ for all $1 \le k \le \nban$ and $1 \le t \le n$.
	The only difference between $Z'_t$ and $Z_t$ is that $Z_t$ contains the result of the random
	coin tossed by the algorithm on iteration $t+1$, while $Z'_t$ does not.
	\item
	$\rvt_0$: $\max_{1 \le k \le \nban} \rvt_{k,0}$.
	The first iteration where all arms have positive lower bound.
	\item
	$\mo(\cdot)$: equals $\min(1, \cdot)$.
	\item 
	$n$: number of iterations the arms are invoked.
	\item
	$\nban$: number of arms.
	\item
	$\amt_1, \dots, \amt_\nban$: these parameters determine the success probability of the arms.
	Given a resource of $\alloc$, arm $k$ succeeds with probability $\min\left(1, \frac{\alloc}{\amt_k} \right)$.
	\item
	$\nal$: the number of arms that are fully allocated under the optimal allocation.
	The highest number of $i \ge 0$ such that $\sum_{j=1}^i \amt_i \le 1$.
	\item
	$\alloc_{k,t}$: the amount of resource allocated to arm $k$ on iteration $t$.
	\item
	$\scc_{k,t}$: the indicator of the success of arm $k$ on iteration $t$.
	\item
	$\lbf_{k,t}$: the deterministic lower bound of $\amt_k$,
	calculated by the algorithm at the end of iteration $t$.
	\item
	$\lbs_{k,t}$: the probabilistic lower bound of $\amt_k$,
	calculated by the algorithm at the end of iteration $t$.
	\item
	$\lb_{k,t}$: $\max(\lbf_{k,t}, \lbs_{k,t})$.
	\item
	$\mulc$: a parameter given to the algorithm, that has to get a positive value
	greater than $2$.
	It takes part in the calculation of $\add_{k,t}$.
	\item
	$(\cdot)_+$: equals $\max(0, \cdot)$.
	\item
	$\saf_{k,t}$: equals $\sum_{i \le t \colon \lbf_{k,i} > 0} (\alloc_{k,i} - \lbf_{k,i-1})_+$.
	\item
	$\add_{k,t}$: equals $\mulc \lbf_{k,t-1} \exp \left( - \frac{\saf_{k,t-1}}{\mulc \lbf_{k,t-1}} \right)$.
	Equals $\alloc_{k,t} - \lbf_{k,t-1}$ if there are sufficient resources for arm $k$ on iteration $t$.
	\item
	$\sas_{k,t}$: equals $\sum_{1 \le i \le t \colon \alloc_{k,i} \le \lbf_{k,i-1}} \alloc_{k,i}$.
	\item
	$\xs_{k,t}$: equals $\sum_{1 \le i \le t \colon \alloc_{k,i} \le \lbf_{k,i-1}} \scc_{k,i}$.
	\item
	$T$: the set of ``good'' iterations. Equals
	$\{ 1 \le t \le n \colon 0 < \lb_{k,t-1} \le \amt_k \forall 1 \le k \le \nban \}$.
	\item
	$p$: equals $\frac{c-2}{2c}$.
	\item
	$k_{1,t}, \dots, k_{\nban, t}$: the arms $1, \dots, \nban$,
	by the order which they were iterated on the loop over the arms in line~\ref{algl:k-loop} of the algorithm,
	on iteration $t$.
	\item
	$\nal_t$: the highest value of $i$ such that for all 
	$1 \le j \le i$, $\alloc_{k_{j,t}, t} = \lbf_{k_{j,t},t-1} + \add_{k_{j,t},t}$.
	\item
	$A_t$: $\{ k_{1,t}, k_{2,t}, \dots, k_{\min(\nal_t, \nal+1),t}$.
	\item
	\[ 
	B_t = \begin{cases}
	\{k'_t \} & |A_t| < \min(\nal + 1,\nban) \ \BR{AND}\  1 - \sum_{k \in A_t} \alloc_{k,t} > \lbf_{k'_t,t-1} \\
	\emptyset & \BR{otherwise}
	\end{cases}, \\
	\]
	\item
	\[ 
	C_t = \begin{cases}
	\{k'_t \} & |A_t| < \min(\nal + 1,\nban) \ \BR{AND}\  1 - \sum_{k \in A_t} \alloc_{k,t} \le \lbf_{k'_t,t-1} \\
	\emptyset & \BR{otherwise}
	\end{cases}.
	\]
	\item
	$k'_t$:$k_{|A_t|+1,t}$.
	\item
	\[
	\add'_t = \begin{cases}
	1 - \sum_{k \in A_t} \alloc_k - \lbf_{k'_t,t-1} & B_t \ne \emptyset \\
	0 & B_t = \emptyset
	\end{cases}.
	\]
	\item
	$Z_t$:the random variable that contains everything the algorithm has seen up to and just before the point it gets to see the success statuses of the arms on iteration $t+1$. It contains all the success statuses $\scc_{k,i}$ for any arm $1 \le k \le \nban$ and any iteration $1 \le i \le t$, in addition to all the randomness of the algorithm up to and including iteration $t + 1$.
	\item
	$R_t$: the regret on iteration $t$.
	\item
	$R^{(n)}$:$\sum_{t=1}^n R_t$.
	\item
	$\rvt_{k,0}$:the lowest value of $t$ for which $\lbf_{k,t} > 0$.
	\item
	$\err_{k,t}$: $\frac{\amt_k - \lbf_{k,t}}{\amt_k}$.
	\item
	$\rvt_{k,1}$: the lowest value of $t$ such that $\err_{k,t} \le p$.
	\item
	$h_{k,t}$: $\frac{(\alloc_{k,t} - \lbf_{k,t-1})_+}{\add_{k,t}}$.
	\item 
	$Z'_t$: contains everything the algorithm has seen up to and including iteration $t$.
	It includes the values of $\scc_{k,t}$ for all $1 \le k \le \nban$ and $1 \le t \le n$.
	The only difference between $Z'_t$ and $Z_t$ is that $Z_t$ contains the result of the random
	coin tossed by the algorithm on iteration $t+1$, while $Z'_t$ does not.
	\item
	$\rvt_0$: $\max_{1 \le k \le \nban} \rvt_{k,0}$.
	The first iteration where all arms have positive lower bound.
	\item
	$\mo(\cdot)$: equals $\min(1, \cdot)$.
\end{itemize}

\subsection{Proof of Lemma~\ref{lem:general}} \label{sec:p-general}
Here is a result which appears in the original work of \cite{LCS}.

\begin{lemma} \label{lem:below}
	Fix $t \in T$, and $1 \le j \le \nban$.
	Then, $\lb_{k_{j,t},t-1} \le \amt_j$,
	namely, the arm with priority $j$ on iteration $t$
	has a lower bound of at most $\amt_j$.
\end{lemma}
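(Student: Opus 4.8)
The plan is to prove this by a rearrangement (pigeonhole) argument that combines the algorithm's priority order with the defining property of the set $T$. First I would collect the three monotonicities available to me. Because the loop in line~\ref{algl:k-loop} processes the arms in increasing order of $\lb_{k,t-1} = \max(\lbf_{k,t-1},\lbs_{k,t-1})$, the relabelling $k_{1,t},\dots,k_{\nban,t}$ satisfies $\lb_{k_{1,t},t-1} \le \cdots \le \lb_{k_{\nban,t},t-1}$. Because $t \in T$, every arm $k$ obeys $\lb_{k,t-1} \le \amt_k$. Finally, the arms are labelled so that $\amt_1 \le \cdots \le \amt_\nban$.

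The heart of the argument is a counting observation about the top $\nban-j+1$ priorities. Consider the arms $k_{j,t}, k_{j+1,t}, \dots, k_{\nban,t}$; their labels are $\nban-j+1$ distinct elements of $\{1,\dots,\nban\}$, so the smallest label among them is at most $j$ (the minimum of $\nban-j+1$ distinct integers drawn from $\{1,\dots,\nban\}$ is maximized by the set $\{j,\dots,\nban\}$, whose minimum is $j$). Let $i \ge j$ be a priority with $k_{i,t} = m \le j$. Then I would chain
\[
	\lb_{k_{j,t},t-1} \le \lb_{k_{i,t},t-1} \le \amt_m \le \amt_j,
\]
where the first inequality uses $i \ge j$ together with the priority monotonicity, the second uses the membership $t \in T$ applied to arm $m = k_{i,t}$, and the third uses $m \le j$ with the sortedness of $\amt$. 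This is exactly the claimed bound.

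I do not expect a serious obstacle here; the only thing to get right is the direction of the pigeonhole. The naive attempt to look at the first $j$ priorities $k_{1,t},\dots,k_{j,t}$ fails, because there the priority monotonicity points the wrong way, lower-bounding rather than upper-bounding $\lb_{k_{j,t},t-1}$. Selecting instead the top $\nban-j+1$ priorities is precisely what makes all three inequalities point in the same direction, and the argument is otherwise elementary.
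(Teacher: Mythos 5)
Your proof is correct and is essentially the same pigeonhole argument the paper uses: the paper counts that at least $j$ of the values $\lb_{1,t-1},\dots,\lb_{\nban,t-1}$ are at most $\amt_j$ (namely those of arms $1,\dots,j$), so the $j$-th smallest is at most $\amt_j$, while you phrase the identical counting fact dually by locating an arm with label at most $j$ among the top $\nban-j+1$ priorities. Both rest on the same three monotonicities and the same order-statistics observation, so there is nothing to add.
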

\begin{proof}
For any arm $k \le j$, it holds that $\lb_{k,t-1} \le \amt_k \le \amt_j$,
where the first inequality is due to the fact that $t \in T$,
and the second inequality follows from our assumption that $\amt_1 < \cdots < \amt_\nban$.
This implies that the list $\lb_{1,t-1}, \dots, \lb_{\nban,t-1}$ has at least $j$ values
lower or equal to $\amt_j$.
Therefore, if we sort the list $\lb_{1,t-1}, \dots, \lb_{\nban,t-1}$ in an 
increasing order, the value on place $j$ (counting from the start) is at most $\amt_j$.
This value is exactly $\lb_{k_{j,t},t-1}$, by definition of $k_{j,t}$.
\end{proof}

It holds that $\E \left[ \scc_{k,t} \mid Z_{t-1} \right] = \mo\para{M_{k,t}/\nu_k}$, for all $1\le k \le \nban$.
If $|A_t| = \min(\nal + 1, \nban)$, then 
\begin{align*}
\E[\regret_t \mid Z_{t-1}]
&\le \nal + \indi_{\nban > \nal} - \sum_{k=1}^\nban \mo\para{M_{k,t}/\nu_k} \\
&\le |A_t| - \sum_{k=1}^\nban \mo\para{M_{k,t}/\nu_k} \\
&\le |A_t| - \sum_{k \in A_t} \mo\para{M_{k,t}/\nu_k} \\
&= \sum_{k \in A_t} \para{1- \mo\para{M_{k,t}/\nu_k}}.
\end{align*}
Therefore, the proof follows for this case.

Assume next that $|A_t| < \min(\nal + 1, \nban)$.
Let $h \colon [0,\infty) \to \mathbb{R}$ be a function such that for all $1 \le k \le \nban$, $h(x) = 1/\nu_k$ in the range $x \in [\sum_{i=1}^{k-1} \nu_i, \sum_{i=1}^k \nu_i)$, and $h(x) = 0$ for all $x \ge \sum_{i=1}^\nban \amt_i$.
It holds that $h(x)$ is monotonic non-increasing, and its integral function $H(x) = \int_{y=0}^x h(y) dy$ satisfies that $H(1)$ is the award achieved by the optimal policy in round $t$.
Therefore, 
\begin{equation} \label{eq:lem-general-1}
\E[\regret_t \mid Z_{t-1}] = H(1) - \sum_{k=1}^\nban \mo\para{M_{k,t}/\nu_k}.
\end{equation}
Let $a = \sum_{k=1}^{|A_t|} \amt_k$ and
$b = \sum_{k=1}^{|A_t|} \amt_k + \sum_{k \in A_t} \add_{k,t} + \add'_t$.
Using equality \eqref{eq:lem-general-1},
\begin{align} \label{eq:lem-general-2}
\E[\regret_t \mid Z_{t-1}]
&\le \left(H(a) - \sum_{k \in A_t} \mo\para{M_{k,t}/\nu_k} \right) \\
&+\left( H(b) - H(a) \right) \label{eq:genm2} \\
&+\left( H(1) - H(b) - \mo\left( \frac{\alloc_{k'_t,t}}{\amt_{k'_t}} \right) \right) \label{eq:genm3}.
\end{align}
We will bound each of these three terms separately.

The right hand side in \eqref{eq:lem-general-2} is bounded by
\begin{equation} \label{eq:lem-general-3}
H\para{\sum_{k=1}^{|A_t|} \nu_k} - \sum_{k\in A_t} \mo\para{M_{k,t}/\nu_k}
= \sum_{k\in A_t} (1 - \mo\para{M_{k,t}/\nu_k}).
\end{equation}

We proceed to bounding the quantity in \eqref{eq:genm2}.
Lemma~\ref{lem:below} implies that any $k \in A_t \cup \{ k'_t \} = \{ k_{1,t}, \dots, k_{|A_t|+1,t} \}$ 
satisfies $\lbf_{k,t-1} \le \amt_{|A_t|+1}$.
Therefore,
\begin{align}
H(b) - H(a)
&\le \int_a^b \frac{1}{\amt_{|A_t|+1}} \nonumber \\
&= \frac{\sum_{k \in A_t} \add_{k,t} + \add'_t}{\amt_{|A_t|+1}} \nonumber \\
&\le \sum_{k \in A_t} \frac{\add_{k,t}}{\lbf_{k,t-1}} + \frac{\add'_t}{\lbf_{k'_t,t-1}}. \label{eq:lem-general-4}
\end{align}

Lastly, bound the quantity in \eqref{eq:genm3}.
Lemma~\ref{lem:below} implies that
\[
	\sum_{k \in A_t} \lbf_{k,t-1} 
	= \sum_{j=1}^{|A_t|} \lbf_{k_{j,t},t-1}
	\le \sum_{k=1}^{|A_t|} \amt_j.
\]
This implies that 
\begin{equation} \label{eq:lem-general-6}
\sum_{k \in A_t} \alloc_{k,t} 
= \sum_{k \in A_t} (\lbf_{k,t-1} + \add_{k,t})
\le \sum_{k=1}^{|A_t|} \amt_k + \sum_{k \in A_t} \add_{k,t}.
\end{equation}
We will show that 
\begin{equation} \label{eq:lem-general-5}
b \ge 1 - \min(\alloc_{k'_t,t}, \lbf_{k'_t,t-1}).
\end{equation}
First, assume that $B_t = \emptyset$. Inequality~\eqref{eq:lem-general-6} implies that
\[
b
\ge \sum_{k \in A_t} \alloc_{k,t}
= 1 - \alloc_{k'_t, t}
= 1 - \min(\alloc_{k'_t,t}, \lbf_{k'_t,t-1}).
\]
If $B_t \ne \emptyset$, then 
$\add_t' = 1 - \sum_{k \in A_t} \alloc_k - \lbf_{k'_t,t-1}$, and
\[
b
\ge \sum_{k \in A_t} \alloc_{k,t} + \add'_t
= 1 - \lbf_{k'_t,t-1}
= 1 - \min(\lbf_{k'_t,t-1}, \alloc_{k'_t,t}),
\]
which concludes the proof of Equation~\eqref{eq:lem-general-5}.
This implies that 
\begin{align} 
H(1) - H(b)
&= \int_{x=b}^1 h(x) dx \nonumber \\
&\le (1 - b) h(b) \nonumber \\
&\le \min(\lbf_{k'_t,t-1}, \alloc_{k'_t,t}) h(b). \label{eq:lem-general-7}
\end{align}

If $|A_t| = \nal$, then 
$b \ge \sum_{k=1}^{\nal} \nu_k$.
Therefore, 
$h(b) \le \frac{1}{\amt_{\nal+1}}$,
which implies, together with Equation~\eqref{eq:lem-general-7}, that 
\begin{equation} \label{eq:lem-general-8}
H(1) - H(b)
\le \min(\lbf_{k'_t,t-1}, \alloc_{k'_t,t}) \frac{1}{\amt_{\nal+1}}.
\end{equation}

If $|A_t| < \nal$, then, we know that 
$ \lb_{k'_t,t-1} \le \amt_{|A_t|+1} \le \amt_{\nal}$,
which implies, together with equation~\eqref{eq:lem-general-5} that
\[
b
\ge 1 - \min(\lbf_{k'_t,t-1}, \alloc_{k'_t,t})
\ge 1 - \lbf_{k'_t,t-1}
\ge 1 - \amt_\nal.
\]
Therefore,
\[
h(b)
\le h(1 - \amt_\nal)
\le \frac{1}{\amt_\nal}.
\]
This implies, together with Equation~\eqref{eq:lem-general-7}, that
\begin{equation} \label{eq:lem-general-9}
H(1) - H\left( b \right)
\le \min(\lbf_{k'_t,t-1}, \alloc_{k'_t,t}) \frac{1}{\amt_{\nal}}.
\end{equation}
Additionally,
\begin{equation} \label{eq:lem-general-10}
\mo\left( \alloc_{k'_t,t}/\amt_{k'_t} \right)
\ge \mo\left( \min(\lbf_{k'_t,t-1}, \alloc_{k'_t,t})/\amt_{k'_t} \right)
= \min(\lbf_{k'_t,t-1}, \alloc_{k'_t,t})/\amt_{k'_t}.
\end{equation}
Equations \eqref{eq:lem-general-8}, \eqref{eq:lem-general-9} and \eqref{eq:lem-general-10} imply that
\begin{equation} \label{eq:lem-general-11}
H(1) - H(b) - \mo\left( \alloc_{k'_t,t}/\amt_{k'_t} \right)
\le \begin{cases}
\min(\lbf_{k'_t,t-1}, M_{k'_t,t}) (1/\nu_{\nal+1}-1/\nu_{k'_t}) & |A_t| = \nal \\
\min(\lbf_{k'_t,t-1}, M_{k'_t,t}) (1/\nu_{\nal}-1/\nu_{k'_t}) & |A_t| < \nal
\end{cases}.
\end{equation}
Equations \eqref{eq:lem-general-2}, \eqref{eq:lem-general-3}, \eqref{eq:lem-general-4} and \eqref{eq:lem-general-11} conclude the proof.

\subsection{Proof of Lemma~\ref{lem:At}} \label{sec:p-At}

We present the lemmas required for the proof, together with an intuition for the proof.
Define the \emph{error} of arm $k$ on iteration
$t$ by $\err_{k,t} = \frac{\amt_k - \lbf_{k,t}}{\amt_k}$. We would
like to bound the convergence rate of $\err_{k,t}$ to $0$.
The rate is in terms of the number of iterations: how many iterations it takes for $\err_{k,t}$ to get below some threshold? Optimally, when there are sufficient resources, arm $k$ is allocated with $\lbf_{k,t-1} + \add_{k,t}$ resources. However, if there are insufficient resources and one allocates $\alloc_{k,t}\le \lbf_{k,t-1}$, then one knows that $\lbf_{k,t}$ will not improve, namely, $\lbf_{k,t} = \lbf_{k,t-1}$. Hence, one should not count iterations when $\alloc_{k,t} \le \lbf_{k,t-1}$ while estimating the number of iterations it takes for $\lbf_{k,t}$ to get below some threshold. One might ask: if iterations where $\alloc_{k,t} = \lbf_{k,t-1}+\add_{k,t}$ are counted as $1$ and iterations where $\alloc_{k,t} \le \lbf_{k,t-1}$ are counted as $0$, how should iterations where $\lbf_{k,t-1} < \alloc_{k,t} < \lbf_{k,t-1}+\add_{k,t}$ be counted? The answer is that these iterations are counted as $\left(\alloc_{k,t} - \lbf_{k,t-1}\right)/\add_{k,t}$. Combining everything together, every iteration $t$ is counted as $h_{k,t} := \frac{(\alloc_{k,t} - \lbf_{k,t-1})_+}{\add_{k,t}}$, where $(\cdot)_+ = \max(0,\cdot)$. In particular, every iteration that $k$ is case A is counted as $1$, every iteration that $k$ is case B is counted as some positive number less than $1$, and iterations that $k$ is case $C$ are counted as $0$.

Define $\rvt_{k,0}$ as the lowest value of $t$ for which $\lbf_{k,t} > 0$ (equivalently, the last iteration that $k$ is allocated according to case I),
and define $p = \frac{\mulc - 2}{2 \mulc}$.
We start by bounding the number of iterations (weighted by $h_{k,t}$)
that pass from $\rvt_{k,0}$ up to the point that the error $\err_{k,t}$ is at most $p$
(equivalently, from the first iteration that $\lbf_{k,t}>0$ to the first iteration that $\lbf_{k,t} \ge (1 - p) \amt_k$).
This number is bounded by $O\left( \log \frac{(1-p)\amt_k}{\lbf_{k,\rvt_{k,0}}}\right)$,
which implies that the estimate $\lbf_{k,t}$ grows exponentially fast in the beginning.
\begin{lemma} \label{lem:start}
	Fix $1 \le k \le \nban$.
	Fix $\gamma \le (1 - p) \amt_k$.
	Let $\rvt$ be the first iteration $t$ that $\lbf_{k,t} \ge \gamma$. Then
	\[
	\E \left[ \sum_{\rvt_{k,0} < t \le \rvt} h_{k,t} \middle\vert \rvt_{k,0},\ \lbf_{k,\rvt_{k,0}} \right]
	\le C \left(\log \frac{\gamma}{\lbf_{k,\rvt_{k,0}}} \right)_+,
	\]
	where $C > 0$ is some constant, depending only on $\mulc$.
\end{lemma}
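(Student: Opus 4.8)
The plan is to run a logarithmic potential argument on $\Phi_t := \log \lbf_{k,t}$ over the window $\rvt_{k,0} < t \le \rvt$, charging each unit of the weight $h_{k,t}$ against a constant expected increase of $\Phi$. First I would record the elementary bounds that frame the window. If $\lbf_{k,\rvt_{k,0}} \ge \gamma$ the sum is empty and the $(\cdot)_+$ makes the claim trivial, so assume $\lbf_{k,\rvt_{k,0}} < \gamma$. For every $t \le \rvt$ we have $\lbf_{k,t-1} < \gamma \le (1-p)\amt_k$, hence $\err_{k,t-1} = (\amt_k - \lbf_{k,t-1})/\amt_k > p$. Moreover a failure sets $\lbf_{k,t} = \alloc_{k,t} = \lbf_{k,t-1} + (\alloc_{k,t}-\lbf_{k,t-1})_+ \le \lbf_{k,t-1}(1+\mulc)$, so the lower bound overshoots $\gamma$ by at most a factor $1+\mulc$; since $\lbf_{k,t}$ is nondecreasing, the total change of $\Phi$ across the window is between $0$ and $\log(\gamma/\lbf_{k,\rvt_{k,0}}) + \log(1+\mulc)$.

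The core step is to relate one conditional increment of $\Phi$ to $h_{k,t}$. Conditioning on $Z_{t-1}$ (which determines $\alloc_{k,t}$, hence $h_{k,t}$ and $\saf_{k,t-1}$), a failure raises $\Phi$ by $\log(\alloc_{k,t}/\lbf_{k,t-1})$ while a success leaves it unchanged, and by definition of $h_{k,t}$ we have $\alloc_{k,t}/\lbf_{k,t-1} = 1 + h_{k,t}\,\add_{k,t}/\lbf_{k,t-1} = 1 + \mulc\, h_{k,t} w_t$ where $w_t := \exp(-\saf_{k,t-1}/(\mulc\lbf_{k,t-1})) \in (0,1]$. Writing $q_t$ for the failure probability, this gives $\E[\Phi_t - \Phi_{t-1}\mid Z_{t-1}] = q_t \log(1 + \mulc\, h_{k,t} w_t)$ on iterations with $\alloc_{k,t} < \amt_k$, where $q_t \ge 1 - \alloc_{k,t}/\amt_k$. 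The goal is to show this is at least $c_0 h_{k,t}$ for a constant $c_0(\mulc) > 0$; summing over the window and applying optional stopping (the increments are conditionally controlled and $\rvt$ is an adapted stopping time) would then yield $c_0\,\E[\sum_{\rvt_{k,0} < t \le \rvt} h_{k,t}] \le \log(\gamma/\lbf_{k,\rvt_{k,0}}) + \log(1+\mulc)$, which is exactly the asserted bound.

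I expect the main obstacle to be establishing that per-step lower bound, because two quantities can conspire to make $q_t \log(1 + \mulc h_{k,t} w_t)$ small relative to $h_{k,t}$: the damping factor $w_t$ can be small when $\saf_{k,t-1}/\lbf_{k,t-1}$ is large, and the failure probability $q_t \ge 1 - \alloc_{k,t}/\amt_k$ can be small when the allocation sits just below $\amt_k$. Both effects are, however, self-limiting and I would control them with an auxiliary potential built from $\saf_{k,t}/\lbf_{k,t}$ (equivalently $w_t^{-1}$): on a failure $\saf$ and $\lbf$ increase by the \emph{same} amount $h_{k,t}\add_{k,t}$, driving the ratio toward $1$, while only successes inflate it, and a near-$\amt_k$ allocation forces a large jump in $\saf$ on the (likely) success that immediately shrinks the next $\add_{k,t}$ and restores a constant failure probability. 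Since failures occur with probability $\Omega(1)$ whenever $\alloc_{k,t} < \amt_k$, this keeps $\saf_{k,t}/\lbf_{k,t} = O(\mulc)$ in expectation. Finally I would separately bound the ``wasteful'' iterations with $\alloc_{k,t} \ge \amt_k$ (guaranteed success, no progress, yet $h_{k,t} = 1$): during any such run $\lbf$ is frozen and $\add_{k,t+1} = \add_{k,t}\exp(-\add_{k,t}/(\mulc\lbf_{k,t-1}))$ decays like $\mulc\lbf_{k,t-1}/m$, so the run ends in $O(1)$ steps and can be charged to the preceding failure, of which there are only $O(\log(\gamma/\lbf_{k,\rvt_{k,0}}))$. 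The delicate part is packaging the two self-limiting mechanisms into a single potential so that optional stopping applies without case analysis.
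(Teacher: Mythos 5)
Your plan has the right skeleton, and it is in fact the skeleton of the paper's own proof: the paper runs an induction with the potential $\phi(u,w)=w+\alpha_1\ln\frac{2\gamma}{u}+\alpha_2(c_2-w)_+$, where $w=\exp\bigl(\saf_{k,t}/(\mulc\lbf_{k,t})\bigr)$ is the reciprocal of your damping factor --- i.e.\ exactly your $\log\lbf$ potential corrected by an auxiliary potential in $\saf/\lbf$ --- and the whole lemma reduces to the one-step recursion $q\,\phi^{(0)}+(1-q)\,\phi^{(1)}+h\le\phi$. The problem is that you defer precisely this step (``the delicate part is packaging the two self-limiting mechanisms into a single potential''), and that step is the entire technical content of the proof: choosing the constants and verifying the recursion separately in the regimes $w\le c_1$, $c_1\le w\le c_3$ and $w\ge c_3$ occupies essentially all of the paper's argument. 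A proof outline that stops where the difficulty starts leaves the lemma unproved.

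Beyond the deferral, the auxiliary claims you invoke to argue that the deferral is harmless do not hold as stated. First, ``failures occur with probability $\Omega(1)$ whenever $\alloc_{k,t}<\amt_k$'' is false: with $h_{k,t}=1$ and $\alloc_{k,t}=(1-\epsilon)\amt_k$ the failure probability is exactly $\epsilon$, so your per-step inequality $q_t\log(1+\mulc h_{k,t}w_t)\ge c_0h_{k,t}$ fails on the whole band of allocations just below $\amt_k$, not only on the iterations with $\alloc_{k,t}\ge\amt_k$ that you propose to excise; the correct statement, which the paper proves and uses, is that $q\ge p/2$ only once $w\ge 2\mulc/p$ in the paper's normalization, i.e.\ once the increment $\add_{k,t}$ has become a small fraction of $\lbf_{k,t-1}$. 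Second, your charge of the $O(1)$-length wasteful runs ``to the preceding failure, of which there are only $O(\log(\gamma/\lbf_{k,\rvt_{k,0}}))$'' rests on a false count: a failure occurring while $\saf_{k,t-1}/\lbf_{k,t-1}$ is large increases $\lbf$ by an arbitrarily small multiplicative factor, so the number of failures in the window is not bounded by the log-ratio. What is bounded by the log-ratio is the number of constant-factor growth events of $\lbf$, and showing that every re-entry into the bad band is preceded by such an event is exactly the bookkeeping that the combined potential (through the terms $w$ and $(c_2-w)_+$) performs automatically. Until the uniform one-step inequality is actually established, there is no supermartingale for your optional-stopping step to act on.
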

In order to give an intuitive reason to this exponential growth, recall the definition of $\add_{k,t}$ in \figref{alg:multi-arm}. Fix some $t$ and assume that $\lbf_{k,t-1} \le (1-p)\amt_k$ and $\add_{k,i}/\amt_k \le p/2$ for all $i \le t$. Then, for all $i \le t$,
\[
\mathbb{E}\left[1-\scc_{k,i}\middle| \alloc_{k,i}\right]
\ge 1 - \frac{\alloc_{k,i}}{\amt_k}
\ge 1 - \frac{\lbf_{k,i-1}+\add_{k,i}}{\amt_k}
\ge p/2.
\]
This implies that
\[
\mathbb{E}\left[\frac{\lbf_{k,t-1}}{\saf_{k,t-1}}\right]
= \mathbb{E}\left[\frac{\sum_{i=\rvt_{k,0}}^{t-1} \lbf_{k,i} - \lbf_{k,i-1}}{\sum_{i=\rvt_{k,0}}^{t-1} \left( \alloc_{k,i} - \lbf_{k,i-1} \right)_+}\right]
= \mathbb{E}\left[\frac{\sum_{i=\rvt_{k,0}}^{t-1} \left( \alloc_{k,i} - \lbf_{k,i-1} \right)_+(1-\scc_{k,i})}{\sum_{i=\rvt_{k,0}}^{t-1} \left( \alloc_{k,i} - \lbf_{k,i-1} \right)_+}\right]
\ge p/2.
\]
Hence,
\[
\frac{\add_{k,t}}{\lbf_{k,t-1}} = c \exp\left(-\frac{\saf_{k,t-1}}{c\lbf_{k,t-1}} \right) = \Omega(1)
\]
with high probability, which implies that
\[
\mathbb{E} \left[\frac{\lbf_{k,t}}{\lbf_{k,t-1}} \right]
= \mathbb{E} \left[\frac{\lbf_{k,t-1} + (1 - \scc_{k,t})\left(\alloc_{k,t} - \lbf_{k,t-1}\right)_+}{\lbf_{k,t-1}} \right]
= 1 + \mathbb{E}\left[1-\scc_{k,t} \right] \frac{h_{k,t} \add_{k,t}}{\lbf_{k,t-1}}
= 1 + \Omega(1) h_{k,t}.
\]
This implies that $\lbf_{k,t}$ is indeed growing exponentially fast (with respect to $h_{k,t}$), however, recall we assumed that $\add_{k,i} /\amt_k \le p/2$ for all $i\le t$. This assumption was made in order to ensure that $\mathbb{E}\left[1 - \scc_{k,i} \right]$ is sufficiently large, so that $\scc_{k,i}=0$ sufficiently often. However, one does not need this assumption: if $\scc_{k,i} = 1$ for a sufficiently large constant number of times, $\add_{k,t}$ shrinks and gets below $p/2$. The formal claim is proved inductively using a potential function.

Define by $\rvt_{k,1}$ the first iteration that $\lbf_{k,t} \ge (1-p) \amt_k$, or, equivalently, the first iteration that $\epsilon_{k,t} \le p$. The next lemma bound the number of iterations that pass from $\rvt_{k,1}$ until $\epsilon_{k,t}\le \eta$ by $O(1/\eta)$ plus another term which depends on $\saf_{k,\rvt_{k,1}}$, for any $\eta>0$. 
\begin{lemma} \label{lem:err-small}
	Fix an integer $k$, $1 \le k \le \nban$.
	Fix some number $0 < \eta < 1$.
	Let $\rvt$ be the first iteration $t$ such that $\err_t \le \eta$.
	Then, there exists a numerical constant $C > 0$ depending only on $\mulc$, such that
	\[
	\E \left[ \sum_{t=\rvt_{k,1}+1}^{\rvt} h_{k,t}
	\middle\vert \rvt_{k,1},\, \saf_{k,\rvt_{k,1}} \right]
	\le C \left( \exp\left( \frac{2 \saf_{k,\rvt_{k,1}}}{\mulc \amt_k (1-p)} \right) + \frac{1}{\eta} \right).
	\]
\end{lemma}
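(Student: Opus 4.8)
The plan is to mirror the single-arm analysis of \thmref{thm:ub-one-gen}, but to measure progress in the weighted count $\sum h_{k,t}$ rather than in the number of iterations. First I would record the one-step mechanics of the deterministic bound. Since $\lbf_{k,t}=\max_{i\le t:\scc_{k,i}=0}\alloc_{k,i}$, one has $\lbf_{k,t}=\lbf_{k,t-1}+(1-\scc_{k,t})(\alloc_{k,t}-\lbf_{k,t-1})_+$, and $(\alloc_{k,t}-\lbf_{k,t-1})_+=h_{k,t}\add_{k,t}$ by definition of $h_{k,t}$. Writing $\delta_t:=h_{k,t}\add_{k,t}/\amt_k$ and using $\alloc_{k,t}/\amt_k=(1-\err_{k,t-1})+\delta_t$, the success probability is $\min(1,(1-\err_{k,t-1})+\delta_t)$, so $\E[1-\scc_{k,t}\mid Z_{t-1}]=(\err_{k,t-1}-\delta_t)_+$. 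Combining, I obtain exactly the single-arm recursion
\[
\E[\err_{k,t}\mid Z_{t-1}]\le\err_{k,t-1}(1-\delta_t)+\delta_t^2,
\]
with $\delta_t$ now playing the role that $\mulc/t$ played in \appref{sec:proof:thm:ub-one-gen}. Since $\err_{k,t}$ is non-increasing and $t-1\ge\rvt_{k,1}$ forces $\err_{k,t-1}\le p$, throughout the relevant range $1-\err_{k,t-1}\in[1-p,1]$.

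Second, I would tie $\delta_t$ and the weighted count to the accumulated resource. Setting $u_t:=\saf_{k,t}/(\mulc\amt_k)$ gives $u_t-u_{t-1}=\delta_t/\mulc$ and, from the definition of $\add_{k,t}$, $\delta_t=\mulc\,h_{k,t}(1-\err_{k,t-1})\exp(-u_{t-1}/(1-\err_{k,t-1}))$. Two couplings then follow from elementary estimates of $e^x$. Using $1-\err_{k,t-1}\ge 1-p$ lower-bounds the growth of $w_t:=\exp(u_t/(1-p))$, namely $w_t\ge w_{t-1}+h_{k,t}$, hence the crude bound $m_t\le w_t$, where $m_t:=\sum_{\rvt_{k,1}<i\le t}h_{k,i}$. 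Using instead $1-\err_{k,t-1}\le 1$ gives $u_t-u_{t-1}\le h_{k,t}e^{-u_{t-1}}$ and therefore the sharp bound $e^{u_t}\le e^{u_{\rvt_{k,1}}}+2m_t$. The offset $\exp(2\saf_{k,\rvt_{k,1}}/(\mulc\amt_k(1-p)))=w_{\rvt_{k,1}}^2$ in the statement is exactly a (loose) power of the initial value of the crude potential $w$.

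Third, to convert the $1/\delta$-type decay into a bound on $m$ I would control the error against $u$ through the potential $P_t:=\err_{k,t}e^{u_t}$ (the analogue of the product that is constant in the steady state of the single-arm problem). Because $u_t$ is $Z_{t-1}$-measurable, the recursion above yields $\E[P_t\mid Z_{t-1}]\le P_{t-1}(1-(1-1/\mulc)\delta_t)+c'\delta_t$ for a constant $c'=c'(\mulc)$, after absorbing the $\delta_t^2$ terms using $\delta_t\le \mulc e^{-u_{t-1}}$; this gives $\E[P_t]\le P_{\rvt_{k,1}}+O(1)$ with $P_{\rvt_{k,1}}\le p\,e^{u_{\rvt_{k,1}}}=p\,w_{\rvt_{k,1}}^{1-p}$. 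I would then split the run into an initial phase and a steady phase. In the initial phase, while $P_t$ (equivalently $u_t\err_{k,t}$) is still large, I bound the weighted count crudely by $m\le w_t\le w_{\rvt_{k,1}}^2$, which is lossy but comfortably produces the offset term. In the steady phase I use the sharp coupling: once the error is small, $e^{u_t}-e^{u_{t-1}}\ge h_{k,t}(1-\err_{k,t-1})\exp(-u_{t-1}\err_{k,t-1}/(1-\err_{k,t-1}))\ge c''h_{k,t}$, since $u_t\err_{k,t}\le P_t/e=O(1)$, so $e^{u_t}$ grows linearly in $m_t$ with exponent one; combined with $e^{u_t}\le e^{u_{\rvt_{k,1}}}+2m_t$ and the error decay $\err_{k,t}=O(1/m_t)$ this forces $\err_{k,t}$ below $\eta$ once $m_t=O(1/\eta)$. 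An overshoot estimate $\err_{k,\rvt}\ge\eta-\delta_\rvt$ at the stopping iteration controls the final value.

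The main obstacle is precisely the steady-phase coupling, where the varying error sits in the exponent of $\add_{k,t}$: the crude potential only gives $m\le\exp(u/(1-p))$, which degrades the desired $1/\eta$ into $(1/\eta)^{1/(1-p)}$. Avoiding this requires showing $u_t\err_{k,t}=O(1)$, so that $\exp(-u_{t-1}\err_{k,t-1}/(1-\err_{k,t-1}))$ stays bounded below, upgrading the effective exponent from $1/(1-p)$ to $1$; this in turn rests on the expected bound for $P_t$ and on the fact that its initial value $p\,e^{u_{\rvt_{k,1}}}$ is dominated by the generous offset $w_{\rvt_{k,1}}^2$. Reconciling the in-expectation control of $P_t$ with the pathwise increment estimates at the stopping time $\rvt$, and handling the overshoot, is the delicate part; here the non-increasing monotonicity of $\err_{k,t}$ in the iteration index is what keeps the stopping-time manipulation tractable.
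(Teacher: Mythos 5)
Your identification of the key quantities matches the paper's proof in \secref{sec:At-err} almost exactly: the paper also works with a potential built from $w_{k,t}=\exp\left(\saf_{k,t}/(\mulc\lbf_{k,t})\right)$ (your $e^{u_t}$, up to replacing $\amt_k$ by $\lbf_{k,t}\ge(1-p)\amt_k$ in the denominator), also exploits that $w$ grows by $\Theta(h_{k,t})$ per step while the product $w^2\err$ shrinks, and also distinguishes the regimes $w\err$ large versus small. The one-step recursions you derive for $\err_{k,t}$, for $w_t$, and for $P_t=\err_{k,t}e^{u_t}$ are all correct, and your accounting of where the offset $\exp\left(2\saf_{k,\rvt_{k,1}}/(\mulc\amt_k(1-p))\right)$ comes from is the right one.

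The gap is in how these pieces compose. You establish three statements \emph{in expectation} --- the boundedness of $\E[P_t]$, the crude bound $m_t\le w_t$, and the linear growth $e^{u_t}-e^{u_{t-1}}\ge c''h_{k,t}$ \emph{conditional on} $u_{t-1}\err_{k,t-1}=O(1)$ --- and then chain them as though they held pathwise along the whole trajectory up to the stopping time $\rvt$. Markov's inequality applied to $\E[P_t]$ only controls $u_{t-1}\err_{k,t-1}$ at a fixed $t$ with constant probability; on the complementary event the steady-phase increment bound fails and the crude coupling degrades $1/\eta$ back to $(1/\eta)^{1/(1-p)}$, which is exactly the obstacle you name but do not remove. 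Similarly, the inference from ``$\err_{k,t}=O(1/m_t)$ in expectation'' to ``$\E[m_{\rvt}]=O(1/\eta)$'' is a statement about a stopping time that does not follow from an expectation bound at fixed times. What closes the argument in the paper is a single potential that upper-bounds the \emph{conditional expectation of the remaining weighted count}: $\phi(w,\err)=c_2w^2\err+c_4(c_1/\err-w)_+ +c_5(2/\eta-1/\err)$ for $\err\ge\eta/2$ (and $0$ otherwise), proved by induction to satisfy $\phi(w,\err)\ge h+q\,\phi(w_0,\err_0)+(1-q)\,\phi(w_1,\err_1)$ given $Z_{t-1}$, with case analysis on $w\err\le c_3$ versus $w\err\ge c_3$. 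The middle term $c_4(c_1/\err-w)_+$ is precisely the device that glues your two phases together; without it, or an equivalent stopping-time construction, the proof does not go through, even though the shape of your final answer is right.
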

One would expect the term $O(1/\eta)$, since the estimate $\lbf_{k,t}$ behaves as the estimante $\lb_k$ in the single armed problem, which requires roughly $O(1/\eta)$ iterations to bet below $\eta$. However, since the algorithm for the multi armed setting involves some complications not existant in the single armed algorithm, the proof is obtained by induction using a potential function.

We add two comments. Firstly, one may ask why the sum in \lemref{lem:err-small} begins with $\rvt_{k,1}+1$ instead of $\rvt_{k,0}$ or $1$. Since the construction of $\add_{k,t}$ uses $\lbf_{k,t-1}$ to approximate $\amt_k$, one requires this approximation to be accurate in order for the lemma to hold. Secondly, note the term $\saf_{k,\rvt_{k,1}}$ in the bound in \lemref{lem:err-small}. If this term is very large, $\add_{k,t}$ would be small, and the estimate $\lbf_{k,t}$ would not be able to improve fast. However, one can bound this term. As explained in the intuition for \lemref{lem:start}, $\add_{k,t}/\lbf_{k,t}$ is expected not to be low in the beginning, which implies that $\saf_{k,t}$ is not high. We present the lemma which bounds this term. The formal proof is by induction using a potential function, and requires some case analysis.

\begin{lemma} \label{lem:k-sqr}
	Fix some arm $0 \le k \le \nban$.
	Then, for some constant $C > 0$ depending only on $c$, 
	\[
	\E\left[ \exp\left( \frac{2 \saf_{k,\rvt_{k,1}}}{\mulc \amt_k (1-p)} \right) \right] \le C.
	\]

\end{lemma}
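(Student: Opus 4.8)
The plan is to exhibit a bounded supermartingale whose size controls $\exp\!\big(\tfrac{2\saf_{k,t}}{\mulc\amt_k(1-p)}\big)$ and then apply optional stopping at $\rvt_{k,1}$. Throughout I write $\sigma_t = \tfrac{2\saf_{k,t}}{(\mulc+2)\amt_k}$ and $\phi_t = \lbf_{k,t}/\amt_k$; since $p=\tfrac{\mulc-2}{2\mulc}$ gives $1-p=\tfrac{\mulc+2}{2\mulc}$ and $\mulc(1-p)=\tfrac{\mulc+2}{2}$, the claim is \emph{exactly} $\E[e^{2\sigma_{\rvt_{k,1}}}]\le C$. I restrict attention to $\rvt_{k,0}<t\le\rvt_{k,1}$, where $\saf$ starts from $0$, both $\saf_{k,t}$ and $\lbf_{k,t}$ are nondecreasing, and $\lbf_{k,t-1}<(1-p)\amt_k$.

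First I would record a deterministic self-limiting bound on the increments. Because $x\mapsto \mulc x\,e^{-\saf_{k,t-1}/(\mulc x)}$ is nondecreasing in $x>0$ and $\lbf_{k,t-1}<(1-p)\amt_k$, the definition of $\add_{k,t}$ gives $\add_{k,t}\le \tfrac{\mulc+2}{2}\amt_k e^{-\sigma_{t-1}}$, so each increment $\xi_t:=\sigma_t-\sigma_{t-1}$ satisfies $\xi_t\le e^{-\sigma_{t-1}}$. On a \emph{waste step}, where $\alloc_{k,t}>\lbf_{k,t-1}$ (case A, or the high branch of case B), a failure — which has conditional probability $\pi_t=(1-\alloc_{k,t}/\amt_k)_+$ — raises $\phi_t$ by $\tfrac{\mulc+2}{2}\xi_t$, while a success leaves $\phi_t$ fixed; in both outcomes $\sigma_t$ rises by $\xi_t$. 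On non-waste steps ($\alloc_{k,t}\le\lbf_{k,t-1}$: case C and the low branch of B) neither $\sigma$ nor $\phi$ moves.

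Next I would fix constants $\alpha,\sigma^\ast$ depending only on $\mulc$, set the potential $\Phi_t=e^{2\sigma_t}e^{-\alpha\phi_t}$, and split the horizon at the stopping time $\rho=\min\{t>\rvt_{k,0}:\sigma_t\ge\sigma^\ast\}$. In the \emph{bad regime} $t\le\rho$, the bound $\xi_t\le e^{-\sigma_{t-1}}\le 1$ yields $\sigma_\rho\le\sigma^\ast+1$, so $e^{2\sigma}$ is bounded by a constant up to $\rho$ and no probabilistic claim is needed. In the \emph{good regime} $t>\rho$, monotonicity of $\sigma$ forces $\sigma_{t-1}\ge\sigma^\ast$, hence $\xi_t\le e^{-\sigma^\ast}$; choosing $\sigma^\ast\ge\log\frac{\mulc+2}{p}$ makes $\tfrac{\mulc+2}{2}\xi_t\le p/2$, which forces $\alloc_{k,t}<\amt_k$ and $\pi_t\ge 1-\phi_{t-1}-\tfrac{\mulc+2}{2}\xi_t\ge p/2$. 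The heart of the argument is then the one-step inequality $\E[\Phi_t\mid\mathcal F_{t-1}]\le\Phi_{t-1}$ for good-regime waste steps, which after dividing by $\Phi_{t-1}$ reduces to
\[
e^{2\xi_t}\Big(1-\pi_t\big(1-e^{-\alpha\frac{\mulc+2}{2}\xi_t}\big)\Big)\le 1 .
\]
Using $\pi_t\ge p/2$, the estimate $1-e^{-2\xi_t}\le 2\xi_t$, and $1-e^{-u}\ge u/2$ for small $u$, this holds once $\alpha\ge\frac{16}{p(\mulc+2)}$ and $\sigma^\ast$ is large enough that $\alpha\tfrac{\mulc+2}{2}\xi_t\le1$ (non-waste steps leave $\Phi$ unchanged). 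Thus $\Phi$ is a supermartingale after $\rho$. Combining the two regimes, $\E[e^{2\sigma_{\rvt_{k,1}}}\indi_{\rvt_{k,1}\le\rho}]\le e^{2(\sigma^\ast+1)}$, while on $\{\rho<\rvt_{k,1}\}\in\mathcal F_\rho$ optional stopping together with $e^{-\alpha}\le e^{-\alpha\phi}\le1$ gives $\E[e^{2\sigma_{\rvt_{k,1}}}\indi_{\rho<\rvt_{k,1}}]\le e^{\alpha}\E[\Phi_\rho\indi_{\rho<\rvt_{k,1}}]\le e^{\alpha}e^{2(\sigma^\ast+1)}$, so $\E[e^{2\sigma_{\rvt_{k,1}}}]\le C$.

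The main obstacle will be the case analysis around $\rho$: the steps on which $\add_{k,t}$ is large — so that a single allocation can reach or exceed $\amt_k$ and $\pi_t$ is not bounded away from $0$ — are precisely those on which the drift created by $e^{-\alpha\phi}$ cannot pay for the factor $e^{2\xi_t}$, so the exponential supermartingale genuinely fails there and must be replaced by the deterministic estimate $\sigma_\rho\le\sigma^\ast+1$. Making the two regimes fit together — in particular verifying that once $\sigma$ exceeds $\sigma^\ast$ it can never re-enter the bad regime (which relies on monotonicity of $\sigma$ and on the $\lbf$-monotonicity of $\add_{k,t}$, both valid only while $\lbf_{k,t-1}<(1-p)\amt_k$) — and checking the displayed one-step inequality uniformly for every $\mulc>2$, is where the care lies. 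The random coin of case B and the convention that $\saf$ omits the initialization rounds are handled routinely by conditioning on the full allocation of iteration $t$ before averaging over $\scc_{k,t}$.
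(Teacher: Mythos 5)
Your proposal is correct and takes essentially the same route as the paper: the paper also splits at the first time $\saf_{k,t}$ crosses a threshold $s_0=\Theta(\mulc\amt_k\log(\mulc/p))$ (your $\rho$, up to normalization), bounds $\saf$ deterministically up to that time, and afterwards uses the fact that the increments are then at most $p\amt_k/2$ --- forcing failure probability at least $p/2$ --- to run exactly your exponential supermartingale, in the form $\E\bigl[\exp\bigl(c'(\saf_{k,\cdot}-\saf_{k,t})\bigr)\bigr]\le\exp\bigl(c''(\amt_k-\lbf_{k,t})\bigr)$ with $c''=4c'/p$, which matches your compensator $e^{-\alpha\lbf_{k,t}/\amt_k}$ and your choice $\alpha\ge 16/(p(\mulc+2))$. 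The only differences are presentational: the paper verifies the one-step inequality via the auxiliary function $\phi(y)=\tfrac p2+(1-\tfrac p2)e^{c''y}-e^{(c''-c')y}$ and a backward induction on the remaining horizon rather than optional stopping.
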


\secref{sec:At-first} and \secref{sec:At-aux} present auxiliary lemmas,
\secref{sec:At-start} presents the proof of \lemref{lem:start},
\secref{sec:At-err} presents the proof of \lemref{lem:err-small},
\secref{sec:At-sqr} presents the proof of \lemref{lem:k-sqr}
and \secref{sec:At-conc} concludes the proof.

\subsubsection{Lemma~\ref{lem:first-bd}} \label{sec:At-first}
This lemma bounds the number of iterations before $\lbf_{k,t} > 0$, for any arm $k$.

\begin{lemma} \label{lem:first-bd}
	For any $1 \le k \le \nban$,
	\[
	\E\left[ \log_2 \frac{\amt_k}{\lbf_{k,\rvt_{k,0}}} \right] \le \max(2, \log_2 (\amt_k \nban)) + 1.
	\]
	Additionally
	\[
	\E[\rvt_0] \le \max\left( 1, \left\lceil \log_2 \frac{1}{\amt_1} + 3 \right\rceil \right).
	\]
\end{lemma}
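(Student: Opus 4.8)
Both bounds reduce to a single tail estimate on the length of the initialization phase (Case~I). First I would record two facts. Fix an arm $k$. While arm $k$ is still in Case~I it is allocated exactly $\alloc_{k,t}=\frac{1}{\nban 2^{t-1}}$ on iteration $t$ (the total demand of all Case~I arms is at most $\nban\cdot\frac{1}{\nban 2^{t-1}}\le 1$, so resources never run out and the allocation is independent of the other arms), and it leaves Case~I on its first failure. Writing $d=\log_2(\nban\amt_k)$, the two facts are: (i) the \emph{failure identity} $\lbf_{k,\rvt_{k,0}}=\alloc_{k,\rvt_{k,0}}=\frac{1}{\nban 2^{\rvt_{k,0}-1}}$, so that
\[
\log_2\frac{\amt_k}{\lbf_{k,\rvt_{k,0}}}=d+\rvt_{k,0}-1\qquad\text{deterministically};
\]
and (ii) the \emph{tail bound}: for $\tau\ge 1$, since each per-iteration success probability is $\min(1,\alloc_{k,t}/\amt_k)\le 1$,
\[
\Pr[\rvt_{k,0}>\tau]=\prod_{t=1}^{\tau}\min\Big(1,\tfrac{1}{\nban 2^{t-1}\amt_k}\Big)\le\min\Big(1,\tfrac{1}{\nban 2^{\tau-1}\amt_k}\Big)=\min(1,2^{1-d-\tau}),
\]
bounding the product by its last factor. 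By fact (i), $\E[\log_2(\amt_k/\lbf_{k,\rvt_{k,0}})]=d-1+\E[\rvt_{k,0}]$, so the first inequality amounts to a bound on $\E[\rvt_{k,0}]$.

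Next I would introduce the elementary sum $S(b):=\sum_{\tau=0}^{\infty}\min(1,2^{b-\tau})$ and prove $S(b)\le b+2$ for $b\ge 0$ (split at $\tau=\lfloor b\rfloor$ and use $2^{x}\le 1+x$ on $[0,1]$ for the fractional part) and $S(b)=2^{b+1}$ for $b<0$. Using fact (ii) with the reindexing $\tau'=\tau-1$,
\[
\E[\rvt_{k,0}]=1+\sum_{\tau\ge 1}\Pr[\rvt_{k,0}>\tau]\le 1+S(-d),
\]
hence $\E[\log_2(\amt_k/\lbf_{k,\rvt_{k,0}})]\le d+S(-d)$. A short case analysis finishes part one: for $d\le 0$ one gets $d+S(-d)\le d+(-d+2)=2\le\max(2,d)+1$; for $0<d<2$ one checks $d+2^{1-d}\le 3=\max(2,d)+1$; and for $d\ge 2$ one has $2^{1-d}\le\tfrac12$, giving $d+2^{1-d}\le d+1=\max(2,d)+1$.

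For the second inequality I would pass from one arm to $\rvt_0=\max_k\rvt_{k,0}$ by a union bound. Using fact (ii) and $\amt_k\ge\amt_1$, for $\tau\ge 1$,
\[
\Pr[\rvt_0>\tau]\le\sum_{k=1}^{\nban}\Pr[\rvt_{k,0}>\tau]\le\sum_{k=1}^{\nban}\frac{1}{\nban 2^{\tau-1}\amt_k}\le\frac{1}{2^{\tau-1}\amt_1}=2^{a+1-\tau},
\]
where $a=\log_2\frac{1}{\amt_1}$; the factor $\nban$ cancels, which is exactly what makes the union bound affordable. Then for $\amt_1\le 1$ (so $a\ge 0$), $\E[\rvt_0]=1+\sum_{\tau\ge 1}\Pr[\rvt_0>\tau]\le 1+S(a)\le 1+(a+2)=a+3\le\lceil a+3\rceil$, which is the claim; the $\max(1,\cdot)$ only matters in the degenerate regime $\amt_1>1$, where $\rvt_0\ge 1$ already gives the bound.

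I expect the main obstacle to be purely bookkeeping: the target constants are tight, so the split $d\ge 2$ versus $d<2$ in part one must use the sharp form $S(-d)=2^{1-d}$ (the crude bound $S(b)\le\max(0,b)+2$ loses the needed slack once $d\ge 2$), and the boundary term at $\tau=0$ in the geometric sum must be tracked exactly to land on $\lceil\cdot\rceil$ rather than $\lceil\cdot\rceil+1$. No step is conceptually hard once facts (i) and (ii) are in place.
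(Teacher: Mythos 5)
Your proposal is correct and follows essentially the same route as the paper: both rest on the deterministic identity $\log_2(\amt_k/\lbf_{k,\rvt_{k,0}})=\log_2(\nban\amt_k)+\rvt_{k,0}-1$ and on the fact that once $\frac{1}{\nban 2^{t-1}}$ drops below a constant multiple of $\amt_k$ the per-round failure probability is bounded away from zero (with the factor $\nban$ cancelling in the union bound for $\rvt_0$); you sum the tail probabilities $\Pr[\rvt_{k,0}>\tau]$ explicitly where the paper dominates the overshoot past a threshold $t'$ by a geometric random variable, which is a purely cosmetic difference. The one soft spot --- dismissing the regime $\amt_1>1$ with ``$\rvt_0\ge 1$ already gives the bound'' (a lower bound on $\rvt_0$ cannot certify an upper bound on $\E[\rvt_0]$) --- mirrors an identical looseness in the paper's own handling of that degenerate corner, so it is not a gap relative to the paper's proof.
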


Fix $k$, $1 \le k \le \nban$.
Let $t' = \max \left( \left\lceil \log_2\left( \frac{1}{\nban\amt_k} \right) + 1 \right\rceil, 0 \right)$.
At iteration $t'$ it holds that
\[
\frac{1}{\nban 2^{t'-1}} 
\le \frac{1}{\nban 2^{\log_2(1/(\nban \amt_k))}}
= \amt_k. 
\]
Therefore, for any $t > t'$, assuming that $\lbf_{k,t-1} = 0$ it holds that
\begin{align*}
\Pr[\lbf_{k,t} > 0 \mid \lbf_{k,t-1} = 0]
&= \Pr[\scc_{k,t} = 0 \mid \lbf_{k,t-1} = 0]
= 1 - \mo\left(\alloc_{k,t}/\amt_k\right)\\
&= 1- \frac{1}{\nban 2^{t-1} \amt_k}
\ge 1- \frac{1}{2}\frac{1}{\nban 2^{t'-1} \amt_k}
\ge 1/2.
\end{align*}
Therefore, for any iteration $t > t'$, $\Pr[\rvt_{k,0} = t \mid \rvt_{k,0} > t-1] \ge 1/2$.
Therefore, given that $\rvt_{k,0} > t'$, $\E[\rvt_{k,0}] - t'$ equals at most the expectancy of a geometric random variable with parameter $1/2$, which implies that 
\[
\E[\rvt_{k,0}] \le t' + 2 
\le \max\left(\log_2\left( \frac{1}{\nban\amt_k} \right) + 2,0 \right) + 2.
\]

We calculate the expected value of $\log_2 \frac{\amt_k}{\lbf_{k,\rvt_{k,0}}}$. It holds that
\begin{align*}
\E\left[\log_2 \frac{\amt_k}{\lbf_{k,\rvt_{k,0}}} \right] 
&= \E \left[\log_2 \left(\amt_k \nban 2^{\rvt_{k,0}-1}\right) \right] 
= \E \left[\log_2 (\amt_k \nban) + \rvt_{k,0}-1 \right] \\
&\le \log_2 (\amt_k \nban)  + \max(\log_2 \frac{1}{\nban \amt_k} + 2, 0) + 1  
=  \max(2, \log_2 (\amt_k \nban)) + 1.
\end{align*}

Lastly, let $t' = \max\left( 0, \left\lceil \log_2 \frac{1}{\amt_1} + 2 \right\rceil \right)$.
For any $t \ge t'$ it holds that
\[
\frac{1}{\nban 2^{t-1}} 
\le \frac{1}{\nban 2^{\log_2 \frac{1}{\amt_1} + 1}}
= \frac{\amt_1}{2\nban}.
\]
This implies that for any $1 \le k \le \nban$, for any $t \ge t'$, it holds that whenever $\lbf_{k,t-1} = 0$,
$\alloc_{k,t} = \frac{1}{\nban 2^{t-1}}  \le \frac{\amt_1}{2\nban}$.
Therefore, it holds that $\scc_{k,t} > 0$ with probability at most $\frac{\alloc_{k,t}}{\amt_k} \le \frac{1}{2\nban}$.
Therefore, given that $t \ge t'$ and that $\rvt_0 > t-1$, the probability that there exists $1 \le k \le \nban$ such that $\lbf_{k,t-1} = 0$ and $\scc_{k,t} > 0$, is at most $\sum_{k \colon \lbf_{k,t-1} = 0} \frac{1}{2\nban} \le 1/2$.
This implies that for any $t \ge t'$, given that $\rvt_0 > t - 1$, it holds that with probability at least $1/2$, $\rvt_0 = t$.
This implies that conditioned on $\rvt_0 > t'-1$, it holds that $\rvt_0 - (t'-1)$ is bounded by a geometric random variable with parameter 2. Therefore,
\[
\E \left[ \rvt_0 - (t'-1) \middle\vert \rvt_0 > t'-1 \right]
\le 2.
\]
Thus,
\[
\E\left[ \rvt_0 \right] \le t' - 1 + 2 = t'+1.
\]

\subsubsection{Lemma~\ref{lem:aux-At-add}} \label{sec:At-aux}
\begin{lemma} \label{lem:aux-At-add}
	There exist constants $C, C' > 0$, depending only on $\mulc$, 
	such that for any $k$, $1 \le k \le \nban$,
	\begin{equation} \label{eq:cor-start-bd-0}
	\E \left[ \sum_{t = \rvt_{k,0}+1}^{\rvt_{k,1}} \frac{(\alloc_{k,t} - \lbf_{k,t-1})_+}{\add_{k,t}} \right]
	\le C \log \nban + C'.
	\end{equation}
\end{lemma}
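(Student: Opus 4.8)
The plan is to reduce this auxiliary statement to two lemmas already established, namely \lemref{lem:start} and \lemref{lem:first-bd}, since the quantity in \eqref{eq:cor-start-bd-0} is exactly an instance of the weighted sum controlled by \lemref{lem:start}. First I would observe that the summand is precisely the weight $h_{k,t} = (\alloc_{k,t} - \lbf_{k,t-1})_+/\add_{k,t}$, so the left-hand side of \eqref{eq:cor-start-bd-0} equals $\E[\sum_{\rvt_{k,0} < t \le \rvt_{k,1}} h_{k,t}]$. Next I would identify the relevant stopping time: by definition $\rvt_{k,1}$ is the first iteration $t$ with $\err_{k,t} \le p$, and since $\err_{k,t} = (\amt_k - \lbf_{k,t})/\amt_k$ this is equivalent to $\lbf_{k,t} \ge (1-p)\amt_k$. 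Hence, setting $\gamma = (1-p)\amt_k$, the time $\rvt_{k,1}$ coincides with the stopping time $\rvt$ of \lemref{lem:start}, and since $p = \frac{\mulc-2}{2\mulc} \in (0,\tfrac12)$ for $\mulc > 2$ the choice $\gamma = (1-p)\amt_k$ trivially satisfies the hypothesis $\gamma \le (1-p)\amt_k$.

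Applying \lemref{lem:start} then gives the conditional bound
\[
\E\left[ \sum_{\rvt_{k,0} < t \le \rvt_{k,1}} h_{k,t} \;\middle\vert\; \rvt_{k,0},\ \lbf_{k,\rvt_{k,0}} \right]
\le C\left( \log \frac{(1-p)\amt_k}{\lbf_{k,\rvt_{k,0}}} \right)_+ .
\]
I would then discard the $(\cdot)_+$ and the factor $(1-p)$: because $\lbf$ is always a valid lower bound we have $\lbf_{k,\rvt_{k,0}} \le \amt_k$, so $\log(\amt_k/\lbf_{k,\rvt_{k,0}}) \ge 0$, and since $1-p < 1$ the right-hand side is at most $C\log(\amt_k/\lbf_{k,\rvt_{k,0}})$. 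Taking total expectation by the tower rule and converting the natural logarithm to $\log_2$ (a constant factor absorbed into $C$), \lemref{lem:first-bd} yields $\E[\log_2(\amt_k/\lbf_{k,\rvt_{k,0}})] \le \max(2, \log_2(\amt_k\nban)) + 1$. Recalling $\amt_k \le 1$, the $\log_2(\amt_k\nban)$ term is $O(\log\nban)$, and combining everything gives the stated bound $C\log\nban + C'$ with constants depending only on $\mulc$.

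The proof is essentially bookkeeping: all of the analytic content resides in \lemref{lem:start} (the exponential growth of $\lbf_{k,t}$ during the initial phase) and \lemref{lem:first-bd} (the number of initialization rounds and the size of $\lbf_{k,\rvt_{k,0}}$). The only points that require care, and where a slip is most likely, are correctly matching the conditioning used in \lemref{lem:start}, verifying that the $(\cdot)_+$ becomes vacuous once $\lbf_{k,\rvt_{k,0}} \le \amt_k$ and $1-p<1$ are invoked, and tracking that the $\log_2(\amt_k\nban)$ term is what produces the $\log\nban$ dependence rather than a dependence on $\amt_k$.
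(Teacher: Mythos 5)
Your reduction to \lemref{lem:start} and \lemref{lem:first-bd} is exactly the paper's strategy, and your argument is correct in the regime where it applies. But there is one genuine gap: the step ``Recalling $\amt_k \le 1$'' is not available. \thmref{thm:multi-arm} is stated for an arbitrary $\mathbf{\amt} \in \mathbb{R}_+^\nban$, and the lemma must hold for every arm $1 \le k \le \nban$, including arms with $\amt_k > 1$ (these are precisely the arms the optimal policy never fully allocates; the lower bound in \secref{sec:lb-nu} even takes $\amt_k > 1$ for all $k > \nal$). For such an arm, $\lbf_{k,\rvt_{k,0}} = \frac{1}{\nban 2^{\rvt_{k,0}-1}} \le \frac{1}{\nban}$ deterministically, so $\log\frac{\amt_k}{\lbf_{k,\rvt_{k,0}}} \ge \log(\amt_k \nban)$, and the route you take can only yield a bound of order $\log \nban + \log \amt_k$, which is not of the form $C\log\nban + C'$ with constants depending only on $\mulc$. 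The issue is not with your application of \lemref{lem:start} (which is legitimate for $\gamma = (1-p)\amt_k$ even when this exceeds $1$), but with the final expectation: the $\log \amt_k$ contribution does not go away.

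The paper closes this by a case split. When $\amt_k \le \frac{1}{1-p}$ your argument goes through verbatim, since then $\log_2(\amt_k\nban) \le \log_2\nban + \log_2\frac{1}{1-p}$. When $\amt_k > \frac{1}{1-p}$, the total resource constraint forces $\alloc_{k,t} \le 1$, hence $\lbf_{k,t} \le 1 < (1-p)\amt_k$ for all $t$; once $\lbf_{k,t-1} = 1$ the summand $\frac{(\alloc_{k,t}-\lbf_{k,t-1})_+}{\add_{k,t}}$ vanishes, so the sum up to $\rvt_{k,1}$ is really a sum up to the first time $\lbf_{k,t} = 1$. One then applies \lemref{lem:start} with $\gamma = 1$ (admissible because $1 \le (1-p)\amt_k$ in this case), obtaining a bound of $c_1\log\frac{1}{\lbf_{k,\rvt_{k,0}}} + c_1'$, and
\[
\E\left[\log\frac{1}{\lbf_{k,\rvt_{k,0}}}\right]
= \E\left[\log\frac{\amt_k}{\lbf_{k,\rvt_{k,0}}}\right] - \log\amt_k
\le \log\nban + \log\amt_k + c_2 - \log\amt_k
= \log\nban + c_2,
\]
so the $\log\amt_k$ term cancels. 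You would need to add this second case to make the proof complete.
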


	Start by assuming that $\amt_k \le \frac{1}{1-p}$.
	From Lemma~\ref{lem:start}, it holds that there exist a constants $c_1, c_1' > 0$, such that for any values of $\rvt_{k,0}$ and $\lbf_{k,\rvt_{k,0}}$,
	\begin{equation} \label{eq:cor-start-bd-1}
	\E \left[ \sum_{t = \rvt_{k,0}+1}^{\rvt_{k,1}} \frac{(\alloc_{k,t} - \lbf_{k,t-1})_+}{\add_{k,t}} \middle\vert \rvt_{k,0},\ \lbf_{k,\rvt_{k,0}} \right]
	\le c_1 \log \frac{(1-p)\amt_k}{\lbf_{k,\rvt_{k,0}}} + c_1'.
	\end{equation}
	From Lemma~\ref{lem:first-bd}, 
	there exists a constant $c_2$ such that 
	\begin{equation} \label{eq:cor-start-bd-4}
	\E \log_2 \frac{\amt_k}{\lbf_{k,\rvt_{k,0}}} 
	\le (\log_2 \nban + \log_2 \amt_k)_+ + c_2
	\le \log_2 \nban + \log_2 \frac{1}{1-p} + c_2.
	\end{equation}
	Together, Equalities~\eqref{eq:cor-start-bd-1} and \eqref{eq:cor-start-bd-4} conclude the proof for the case $\amt_k \le \frac{1}{1-p}$.
	
	Next, assume that $\amt_k > \frac{1}{1-p}$.
	Let $\rvt$ be the first iteration $t$ such that $\lbf_{k,t} = 1$.
	For any $t$, $1 \le t \le n$, if $\lbf_{k,t-1} = 1$, then $\frac{(\alloc_{k,t} - \lbf_{k,t})_+}{\add_{k,t}} = 0$.
	This, together with Lemma~\ref{lem:first-bd}, imply that
	\begin{align} 
	&\E \left[ \sum_{t = \rvt_{k,0}+1}^{\rvt_{k,1}} \frac{(\alloc_{k,t} - \lbf_{k,t-1})_+}{\add_{k,t}} \middle\vert \rvt_{k,0},\ \lbf_{k,\rvt_{k,0}} \right] \nonumber \\
	&= \E \left[ \sum_{t = \rvt_{k,0}+1}^{\rvt} \frac{(\alloc_{k,t} - \lbf_{k,t-1})_+}{\add_{k,t}} \middle\vert \rvt_{k,0},\ \lbf_{k,\rvt_{k,0}} \right] \nonumber \\
	&\le c_1 \log \frac{1}{\lbf_{k,\rvt_{k,0}}} + c_1'. \label{eq:cor-start-bd-2}
	\end{align}
	Lemma~\ref{lem:first-bd} implies that
	\begin{equation} \label{eq:cor-start-bd-3}
	\E\left[\log \frac{1}{\lbf_{k,\rvt_{k,0}}}\right] 
	= \E\left[\log \frac{\amt_k}{\lbf_{k,\rvt_{k,0}}}\right]  - \log \amt_k
	\le \log \nban + \log \amt_k  + c_2 - \log \amt_k
	= \log \nban + c_2.
	\end{equation}
	Equations~\eqref{eq:cor-start-bd-2} and \eqref{eq:cor-start-bd-3} suffice to complete the proof.

\subsubsection{Proof of Lemma~\ref{lem:start}} \label{sec:At-start}

Fix some integer $k$, $1 \le k \le \nban$.
Given any $t \ge \rvt_{k,0}$, define
\[ w_t = \exp \left( \frac{\saf_{k,t}}{\lbf_{k,t} \mulc} \right). \]

We will prove by induction on $m \ge 0$ that for all $t > \rvt_{k,0}$, whenever $\lbf_{k,t-1} \le 2 \gamma$, it holds that
\[
\E \left[ \sum_{t \le i < t+m \colon x_{t-1} < \gamma} 
\frac{(\alloc_{k,i} - \lbf_{k,i-1})_+}{\add_{k,i}} \middle\vert Z_{t-1} \right]
\le \phi(\lbf_{k,t-1}, w_{t-1}),
\]
where 
\[
\phi(u, w)= w + \alpha_1 \ln \frac{2\gamma}{u} + \alpha_2 (c_2 - w)_+,
\]
and
\begin{align*}
c_1 &= \frac{2 \mulc}{p},\\
c_2 &= c_1 + 2,\\
c_3 &= \max(4c \log (2c) ,2 c_2, 12/p, \exp(\frac{24/p+1}{\mulc})),\\
\alpha_1 &= \frac{6/p + \mulc \log c_3 + 1}{\log (1 + \mulc/c_3)},\\
\alpha_2 &= 2.
\end{align*}

For the base of induction, assume that $m = 0$. Since we assumed that $\lbf_{k,t-1} \le 2 \gamma$, the potential function is non-negative.

For the step of induction, assume that $m > 0$. 
Fix some $t > \rvt_{k,0}$, and fix $Z_{t-1}$.
Assume that $\lbf_{k,t-1} < \gamma$, otherwise the bound is trivially correct.
Denote shortly $u = \lbf_{k,t-1}$, $s = \saf_{k,t-1}$ and $w = w_{t-1}$.
Let $h$ be the value such that
\[ (\alloc_{k,t} - \lbf_{k,t-1})_+ = h \add_{k,t} = \frac{h \mulc u}{w}. \] 
Let $q = \Pr[\scc_{k,t} = 0 \mid Z_{t-1}]$.
It holds that
\[ \saf_{k,t} = \saf_{k,t-1} + h \add_{k,t} = s + \frac{h \mulc u}{w}, \]
and
\[
\lbf_{k,t} = \begin{cases}
u & \scc_{k,t} = 1 \\
\alloc_{k,t} = u + \frac{h \mulc u}{w} & \scc_{k,t}=0
\end{cases}.
\]

Let $u^{(0)}$ and $u^{(1)}$ be the corresponding values of $\lbf_{k,t}$ given the value of $\scc_{k,t}$, namely
\[
u^{(0)} = u + \frac{h \mulc u}{w},\quad
u^{(1)} = u.
\]
Let $w^{(0)}$ and $w^{(1)}$ be defined similarly, and denote $s^{(0)} = s^{(1)} = s$.
It remains to prove the following inequality:
\begin{equation} \label{eq:lem-start-recursive}
q \phi(u^{(0)}, w^{(0)}) + (1-q) \phi(u^{(1)}, w^{(1)}) + h \le \phi(u, w).
\end{equation}
We use the following shorthand definitions:
\[
\phi^{(0)} = \phi(u^{(0)}, w^{(0)}), 
\phi^{(1)} = \phi(u^{(1)}, w^{(1)}),
\phi(u, w) = \phi.
\]

We proceed by proving some inequalities which will be required in the proof.

\begin{proposition} \label{prop:x-log-x}
	For all $a > 1$, and all $y \ge 2 a \log a$, it holds that $y \ge a \log y$.
\end{proposition}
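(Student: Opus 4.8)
The plan is to reduce the claim to elementary calculus on the single-variable function $f(y) = y - a\log y$ on $(0,\infty)$, and to show that the hypothesis $y \ge 2a\log a$ forces $f(y)\ge 0$ (here $\log$ denotes the natural logarithm, consistent with the rest of the paper). First I would compute $f'(y) = 1 - a/y$, so that $f$ is decreasing on $(0,a)$ and increasing on $(a,\infty)$; hence $f$ attains its global minimum at $y=a$, with value $f(a) = a(1-\log a)$.

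I would then split into two cases according to how $a$ compares with $e$. If $1 < a \le e$, then $\log a \le 1$, so the global minimum satisfies $f(a) = a(1-\log a) \ge 0$, and therefore $f(y)\ge 0$ for \emph{every} $y>0$, in particular for every $y \ge 2a\log a$. Note this case does not even invoke the hypothesis; the content is entirely in the case $a > e$, where $f(a) < 0$.

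For $a > e$ I would argue that the threshold sits on the increasing branch of $f$, so it suffices to check the endpoint. Indeed $\log a > 1$ gives $2a\log a > a$, and since $f$ is increasing on $[a,\infty) \supseteq [2a\log a,\infty)$, the inequality $f(y)\ge 0$ for all $y \ge 2a\log a$ follows once $f(2a\log a)\ge 0$. Expanding,
\[
f(2a\log a) = 2a\log a - a\log(2a\log a) = a\bigl(\log a - \log 2 - \log\log a\bigr),
\]
so the required inequality reduces to $\log a - \log\log a \ge \log 2$, i.e. $a \ge 2\log a$. This last elementary bound holds for all $a>0$: the function $g(a)=a-2\log a$ has $g'(a)=1-2/a$, is minimized at $a=2$, and $g(2)=2(1-\log 2)>0$. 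This closes the second case and completes the proof.

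I expect the only genuine step to be the reduction in the case $a>e$: observing that monotonicity of $f$ past its minimum lets one verify the bound solely at $y=2a\log a$, and then unwinding $f(2a\log a)\ge 0$ into the clean inequality $a\ge 2\log a$; the rest is routine. As a sanity check on the intended use, the proposition is applied with $a=2c$ and $y=c_3$, where the definition gives $c_3 \ge 4c\log(2c) = 2a\log a$, yielding precisely $c_3 \ge 2c\log c_3$.
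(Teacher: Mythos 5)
Your proof is correct and follows essentially the same route as the paper's: verify the inequality at the endpoint $y = 2a\log a$ and extend by monotonicity of $y - a\log y$ on $[a,\infty)$. If anything, your version is slightly more careful — the case split $1 < a \le e$ versus $a > e$ explicitly covers the regime where $2a\log a < a$ (which the paper's monotonicity step glosses over), and your endpoint computation reduces cleanly to the elementary bound $a \ge 2\log a$ rather than the paper's substitution $b+1=\log a$ combined with $x \ge \log(1+x)$.
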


\begin{proof}
	Start by setting $y = 2 a \log a$, and $b + 1 = \log a$.
	It holds that 
	\[
	y
	= 2 a \log a
	= a (\log a + b + 1)
	\ge a(\log a + \log (b+1) + 1)
	= a(\log a + \log \log a + 1)
	= a \log(2a \log a)
	= a \log y,
	\]
	using the inequality $x \ge \log (x+1)$ for all $x \in \mathbb{R}$.
	Next, note that the function $y - a \log y$ monotonic increasing in $y$ for all $y \ge a$,
	therefore the inequality indeed holds for all $y \ge 2 a \log a$.
\end{proof}

\begin{lemma} \label{lem:simple-ineq}
	Let $u, s, w$ be defined as above.
	The following inequalities hold:
	\begin{enumerate}
		\item \label{itm:simple-ineq-1}
		If $w \le c_3$, then
		\[
		- (s/u + 1) (\alpha_2 - 1) + \alpha_1 \log (1 + c/c_3) \ge 6/p.
		\]
		\item \label{itm:simple-ineq-2}
		If $w \ge c_3$, then
		\[
		w  \ge c_2 + s/u.
		\]
		\item \label{itm:simple-ineq-3}
		If $w \ge c_3$, then $\frac {p w}{4} \ge 3$.
		\item \label{itm:simple-ineq-4}
		If $w \ge c_3$, then $\frac{s}{u} \ge \frac{24}{p} + 1$.
	\end{enumerate}
\end{lemma}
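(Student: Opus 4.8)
The plan rests on a single identity. In the proof of \lemref{lem:start} one sets $w_{t-1} = \exp(\saf_{k,t-1}/(\lbf_{k,t-1}\mulc))$, and with the abbreviations $u=\lbf_{k,t-1}$, $s=\saf_{k,t-1}$, $w=w_{t-1}$ this reads $w = \exp(s/(u\mulc))$, i.e. the identity $s/u = \mulc\log w$. I would make this the first line of the proof, because it collapses each of the four claims into a one-variable inequality in $w$ alone. The threshold $c_3$ is defined as a maximum of four quantities, and the point is that these four quantities line up one-to-one with the four items, so in each case exactly the needed estimate is on hand.

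Next I would dispatch the three direct items. For item~\ref{itm:simple-ineq-1}, substitute $\alpha_2=2$ and $\alpha_1 = (6/p + \mulc\log c_3 + 1)/\log(1+\mulc/c_3)$; the factor $\log(1+\mulc/c_3)$ cancels, the $+1$ cancels the $-1$ produced by $-(s/u+1)$, and the summand $6/p$ matches the right-hand side, leaving the requirement $\mulc\log c_3 - s/u \ge 0$. By the identity $s/u = \mulc\log w$ this is precisely $\mulc\log c_3 \ge \mulc\log w$, i.e. $w \le c_3$, which is the hypothesis. Item~\ref{itm:simple-ineq-3} is immediate: $c_3 \ge 12/p$ is one of the terms in the max, so $w \ge c_3 \ge 12/p$ gives $pw/4 \ge 3$. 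For item~\ref{itm:simple-ineq-4}, the identity rewrites the goal $s/u \ge 24/p+1$ as $\mulc\log w \ge 24/p+1$, i.e. $w \ge \exp((24/p+1)/\mulc)$; since $c_3$ dominates this last term by construction, $w \ge c_3$ finishes it.

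The only genuine computation is item~\ref{itm:simple-ineq-2}, where I would invoke \propref{prop:x-log-x}. Expressing the goal through the identity as $w - \mulc\log w \ge c_2$, apply the proposition with $a = 2\mulc$ (legitimate since $\mulc>2$ makes $2\mulc>1$): because $w \ge c_3 \ge 4\mulc\log(2\mulc) = 2a\log a$, the proposition yields $w \ge 2\mulc\log w$, hence $\mulc\log w \le w/2$. Then $w - \mulc\log w \ge w/2 \ge c_3/2 \ge c_2$, using that $c_3 \ge 2c_2$ is another of the terms in the max; re-expressing $\mulc\log w = s/u$ gives $w \ge c_2 + s/u$.

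I do not anticipate a real obstacle: once the identity $s/u = \mulc\log w$ is established, every part is bookkeeping against the definition of $c_3$, with item~\ref{itm:simple-ineq-2} the sole place needing the $x \ge a\log x$ estimate of \propref{prop:x-log-x}. The one point I would state explicitly is that the four arguments of the max defining $c_3$ serve distinct roles (the bound $4\mulc\log(2\mulc)$ feeding the proposition in item~\ref{itm:simple-ineq-2}, and the remaining three supplying items~\ref{itm:simple-ineq-2}, \ref{itm:simple-ineq-3} and~\ref{itm:simple-ineq-4} respectively), so that no term is redundant.
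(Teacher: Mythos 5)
Your proposal is correct and follows essentially the same route as the paper: the paper's proof likewise opens with the identity $\mulc\log w = s/u$, checks item~\ref{itm:simple-ineq-1} by direct substitution of $\alpha_1,\alpha_2$, derives item~\ref{itm:simple-ineq-2} from Proposition~\ref{prop:x-log-x} with $a=2\mulc$ together with $c_3\ge 2c_2$, and reads items~\ref{itm:simple-ineq-3} and~\ref{itm:simple-ineq-4} off the remaining two terms in the definition of $c_3$. No substantive differences.
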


\begin{proof}
	Note that $c \log w = \frac{s}{u}$.
	Start with proving item~\ref{itm:simple-ineq-1}.
	Whenever $w \le c_3$, it holds that
	\begin{align*}
	- (s/u + 1) (\alpha_2 - 1) + \alpha_1 \log (1 + c/c_3)
	&= - (\mulc \log w + 1) + \alpha_1 \log (1 + c/c_3) \\
	&\ge -(\mulc \log c_3 + 1) + 6/p + 3 \mulc \log c_3 \\
	&= 6/p.
	\end{align*}
	
	Next, we prove item~\ref{itm:simple-ineq-2}.
	It is clear that $w/2 \ge c_3 / 2 \ge c_2$.
	Proposition~\ref{prop:x-log-x} implies that for all $w \ge c_3 \ge 2 (2 \mulc) \log (2 \mulc)$
	it holds that $w \ge 2\mulc \log w = 2 s/u$ by substituting $a = 2 \mulc$.
	Therefore,
	\[
	w = w/2 + w/2 \ge c_2 + s/u
	\]
	as required.
	
	Items~\ref{itm:simple-ineq-3} and \ref{itm:simple-ineq-4} trivially follow from the definition of $c_3$, and the equality
	$c \log w = \frac{s}{u}$.
\end{proof}

\begin{lemma} \label{lem:start-w}
	Let $w$, $w^{(0)}$, $w^{(1)}$, $s$, $u$ and $h$ be defined as above.
	Then 
	\begin{itemize}
		\item 
		$w^{(0)} \le w \le w^{(1)}$.
		\item
		$w + h \le w^{(1)} \le w + 2h$.	
		\item
		\[
		w - (s/u-1) h \le w^{(0)} \le \max\left(w/2, w - \frac{(s/u-1)h}{2(1 + \mulc /w)} \right).
		\]
	\end{itemize}
\end{lemma}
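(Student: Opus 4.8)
The plan is to reduce every inequality to a single scalar update relation and then read off the three bullets from two elementary exponential estimates. First I would record the closed forms. Since $w = \exp(\saf_{k,t-1}/(\lbf_{k,t-1}\mulc))$ we have the identity $s/u = \mulc\log w$, whence $w\ge 1$, and $\add_{k,t} = \mulc u/w$. The quantity $(\alloc_{k,t}-\lbf_{k,t-1})_+ = h\,\add_{k,t} = h\mulc u/w$ is added to $\saf$ on iteration $t$ regardless of the outcome, so both branches share $\saf_{k,t} = s + h\mulc u/w$, while $\lbf_{k,t}$ equals $u^{(1)}=u$ on success and $u^{(0)} = u(1 + h\mulc/w)$ on failure. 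Writing $x := h/w$ (note $h\in[0,1]$ and $w\ge 1$, so $x\in[0,1]$ and $wx=h$), this yields $w^{(1)} = w\,e^{x}$ and $\log w^{(0)} = (L+x)/(1+\mulc x)$ with $L := \log w$.

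The second bullet then follows from two elementary bounds applied to $w^{(1)}=we^{x}$ with $x\in[0,1]$: from $1+x\le e^{x}$ we get $w^{(1)}\ge w(1+x)=w+h$, and from the fact that $e^{x}\le 1+2x$ on $[0,1]$ we get $w^{(1)}\le w(1+2x)=w+2h$. The inequality $w\le w^{(1)}$ of the first bullet is then immediate. For $w^{(0)}\le w$ I would use the mediant form: $\log w^{(0)} = \frac{1}{\mulc}\cdot\frac{s+a}{u+a}$ with $a=h\mulc u/w\ge 0$, and the mediant $\frac{s+a}{u+a}$ of $\frac{s}{u}$ and $\frac{a}{a}=1$ lies between $s/u$ and $1$, so when $s/u\ge 1$ it is at most $s/u$; combined with $s/u=\mulc\log w$ this gives $w^{(0)}\le w$ in the relevant regime $s\ge u$ (equivalently $w\ge e^{1/\mulc}$), the sign of $s/u-1$ being exactly what controls whether $w^{(0)}$ sits above or below $w$.

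The crux is the two-sided estimate of the third bullet, which I would obtain by writing $w^{(0)}/w = e^{-D}$, where a short computation from $\log w^{(0)} = (L+x)/(1+\mulc x)$ gives $D = \frac{x(s/u-1)}{1+\mulc x}$, so that $wD = \frac{h(s/u-1)}{1+\mulc x}$. For the lower bound, apply $e^{-D}\ge 1-D$ to get $w^{(0)}\ge w-wD$, then discard the denominator using $1+\mulc x\ge 1$ to obtain $wD\le (s/u-1)h$, hence $w^{(0)}\ge w-(s/u-1)h$. For the upper bound I would invoke the elementary fact that $e^{-D}\le\max\!\big(\tfrac12,\,1-\tfrac{D}{2}\big)$ for every $D\ge 0$, giving $w^{(0)}\le\max\!\big(w/2,\,w-wD/2\big)$; finally, since $h\le 1$ forces $\mulc x=\mulc h/w\le \mulc/w$, we have $1+\mulc x\le 1+\mulc/w$ and therefore $\frac{wD}{2}=\frac{(s/u-1)h}{2(1+\mulc x)}\ge\frac{(s/u-1)h}{2(1+\mulc/w)}$, which converts the bound into the stated form.

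The main obstacle is this upper bound: the single clean inequality $e^{-D}\le\max\!\big(\tfrac12,\,1-\tfrac{D}{2}\big)$ is what makes the $\max$ appear, and verifying it — showing the linear branch $1-D/2$ dominates on the initial interval while the constant branch $e^{-D}\le\tfrac12$ takes over before $1-D/2$ turns vacuous — is the one genuinely nonroutine calculus step. Everything else is bookkeeping organized around the identity $s/u=\mulc\log w$ and the normalization $x=h/w\in[0,1]$, and I would track the sign of $s/u-1$ carefully throughout, since it fixes the direction of each inequality.
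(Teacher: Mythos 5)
Your algebra and the elementary exponential estimates are exactly the ones the paper uses: $w^{(1)}=we^{h/w}$ together with $1+y\le e^{y}\le 1+2y$ on $[0,1]$ gives the second bullet, and $w^{(0)}=w\exp\bigl(-\tfrac{(s/u-1)(h/w)}{1+\mulc h/w}\bigr)$ together with $e^{-y}\ge 1-y$ and $e^{-y}\le\max(\tfrac12,1-\tfrac{y}{2})$ gives the third, with $1\le 1+\mulc h/w\le 1+\mulc/w$ used to adjust the denominator in each direction. Your mediant observation for $w^{(0)}\le w$ is a slightly cleaner packaging of the same sign argument.

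The one genuine gap is that you never establish $s\ge u$, and every inequality involving $w^{(0)}$ silently depends on it. You write that $w^{(0)}\le w$ holds ``in the relevant regime $s\ge u$'' and that you would ``track the sign of $s/u-1$,'' but the lemma asserts $w^{(0)}\le w$ and both sides of the third bullet \emph{unconditionally}, and they fail when $s<u$: in that case $D=\tfrac{(h/w)(s/u-1)}{1+\mulc h/w}<0$, so $w^{(0)}=we^{-D}>w$, and moreover dropping the denominator via $1+\mulc h/w\ge 1$ flips direction when the numerator is negative, so even your lower bound $w^{(0)}\ge w-(s/u-1)h$ breaks. The paper closes this with a short induction showing $\saf_{k,t}\ge\lbf_{k,t}$ for all $t\ge\rvt_{k,0}$: at $t=\rvt_{k,0}$ both equal $\alloc_{k,\rvt_{k,0}}$, and for $t>\rvt_{k,0}$,
\begin{align*}
\lbf_{k,t}-\lbf_{k,t-1}
=(1-\scc_{k,t})\,(\alloc_{k,t}-\lbf_{k,t-1})_+
\le(\alloc_{k,t}-\lbf_{k,t-1})_+
=\saf_{k,t}-\saf_{k,t-1}.
\end{align*}
You need to supply this step (or an equivalent one); with it, $s/u-1\ge 0$ throughout and the rest of your argument goes through as written.
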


\begin{proof}
	The upper bound for $w^{(1)}$ is as follows:
	\begin{equation}
	w^{(1)}
	= e^{\frac{s^{(1)}}{\mulc u^{(1)}}}
	= \exp\left(\frac{s + \frac{h \mulc u}{w}}{\mulc u}\right)
	= w e^{h/w}
	\le w (1 + 2h/w)
	= w + 2h,
	\end{equation}
	using the inequality $\exp(y) \le 1 + 2y$ for all $0 \le y \le 1$.
	The lower bound is calculated similarly:
	\begin{equation}
	w^{(1)}
	= w e^{h/w}
	\ge w (1 + h/w)
	= w + h,
	\end{equation}
	using the inequality $e^y \ge 1 + y$ for all $y \in \mathbb{R}$.
	
	Next, we calculate the inequalities regarding $w^{(0)}$:
	\begin{align} 
	w^{(0)}
	&= \exp \left( \frac{s^{(0)}}{\mulc u^{(0)}} \right) \nonumber \\
	&= \exp \left( \frac{s + c h u / w}{\mulc u + \mulc^2 h u /w} \right) \nonumber \\
	&= \exp \left( \frac{s/(cu) + h / w}{1 + \mulc h /w} \right) \nonumber \\
	&= \exp \left( \frac{s/(cu) + (s/u)(h/w) - (s/u-1)(h/w)}{1 + \mulc h /w} \right) \nonumber \\
	&= \exp \left(s/(cu) - \frac{(s/u-1)(h/w)}{1 + \mulc h /w} \right) \nonumber \\
	&= w \exp \left(- \frac{(s/u-1)(h/w)}{1 + \mulc h /w} \right) \nonumber \\
	&\ge w \exp \left(- (s/u-1)(h/w) \right) \nonumber \\
	&\ge w(1 - (s/u-1)(h/w)) \label{eq:start-0} \\
	&\ge w - (s/u-1)h, \nonumber
	\end{align}
	where \eqref{eq:start-0} follows from the inequality $e^y \ge 1 + y$, for all $y \in \mathbb{R}$.
	
	Before calculating the upper bound on $w^{(0)}$, we first show that $s \ge u$, by proving that $\saf_{k,t} \ge \lbf_{k,t}$, for all $t \ge \rvt_{k,0}$.
	For $t = \rvt_{k,0}$, it holds that $\saf_{k,t} = \lbf_{k,t} = \alloc_{k, t}$.
	For $t > \rvt_{k,0}$ it holds that
	\[ 
	\lbf_{k,t} - \lbf_{k,t-1}
	= (\alloc_{k,t} - \lbf_{k,t-1})_+ (1 - \scc_{k,t})
	\le (\alloc_{k,t} - \lbf_{k,t-1})_+
	= \saf_{k,t} - \saf_{k,t-1}.
	\]
	Next, we proceed to bounding $w^{(0)}$.
	\begin{align}
	w^{(0)}
	&= w \exp \left(- \frac{(s/u-1)(h/w)}{1 + \mulc h /w} \right) \nonumber \\
	&\le w \exp \left(- \frac{(s/u-1)(h/w)}{1 + \mulc /w} \right) \nonumber \\
	&\le w \max\left(1/2, 1 - \frac{(s/u-1)(h/w)}{2(1 + \mulc /w)} \right) \label{eq:lem-start-6} \\
	&= \max\left(w/2, w - \frac{(s/u-1)h}{2(1 + \mulc /w)} \right) \nonumber,
	\end{align}
	where \eqref{eq:lem-start-6} follow from the inequality $e^{-x} \le 1-x/2$ whenever $0 \le x \le 1$ and $e^{-x} \le 1/2$ whenever $x \ge 1$. It cannot happen that $\frac{(s/u-1)(h/w)}{1 + \mulc /w} < 0$ since, as we explained $s \ge u$, and this confirms that $w^{(0)} \le w$.
\end{proof}

\begin{lemma} \label{lem:ineq-start-c3}
	If $w \le c_3$ then
	\[
	\phi^{(1)} - \phi^{(0)} \ge \frac{6h}{p}.
	\]
\end{lemma}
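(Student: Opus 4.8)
The plan is to expand the difference $\phi^{(1)}-\phi^{(0)}$ coordinate by coordinate using the definition $\phi(u,w)=w+\alpha_1\ln\frac{2\gamma}{u}+\alpha_2(c_2-w)_+$, and then argue that the gain coming from the logarithmic $u$-term dominates every contribution that works against us. Since $u^{(1)}=u$ while $u^{(0)}=u(1+h\mulc/w)$, the $u$-part contributes exactly $\alpha_1\ln\frac{u^{(0)}}{u}=\alpha_1\ln(1+h\mulc/w)\ge 0$. The $w$-part contributes $w^{(1)}-w^{(0)}\ge 0$, which is non-negative by Lemma~\ref{lem:start-w} (first bullet, $w^{(0)}\le w\le w^{(1)}$). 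The only term pulling in the wrong direction is the penalty $\alpha_2(c_2-\cdot)_+$: because $w^{(0)}\le w^{(1)}$ and $x\mapsto(c_2-x)_+$ is non-increasing and $1$-Lipschitz, its contribution $\alpha_2\big[(c_2-w^{(1)})_+-(c_2-w^{(0)})_+\big]$ is at least $-\alpha_2(w^{(1)}-w^{(0)})$.

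Collecting the three pieces and substituting $\alpha_2=2$ gives
\[
\phi^{(1)}-\phi^{(0)}\ \ge\ (w^{(1)}-w^{(0)})+\alpha_1\ln\!\big(1+h\mulc/w\big)-2(w^{(1)}-w^{(0)})\ =\ \alpha_1\ln\!\big(1+h\mulc/w\big)-(w^{(1)}-w^{(0)}).
\]
Next I would control the remaining $w$-gap. Combining the two estimates of Lemma~\ref{lem:start-w}, namely $w^{(1)}\le w+2h$ and $w^{(0)}\ge w-(s/u-1)h$, yields $w^{(1)}-w^{(0)}\le (s/u+1)h$. Hence it suffices to establish the pointwise inequality
\[
\alpha_1\ln\!\big(1+h\mulc/w\big)\ \ge\ \Big(\tfrac{s}{u}+1+\tfrac{6}{p}\Big)h\qquad\text{for all }h\in[0,1],
\]
since then $\phi^{(1)}-\phi^{(0)}\ge \alpha_1\ln(1+h\mulc/w)-(s/u+1)h\ge 6h/p$, as required (recall $h\in[0,1]$ by the definition of $h$).

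To prove this pointwise inequality I would use a concavity reduction: both sides vanish at $h=0$, the left side is concave in $h$ while the right side is linear, so by concavity of $\ln(1+\cdot)$ it is enough to verify the inequality at the single endpoint $h=1$, where it reads $\alpha_1\ln(1+\mulc/w)\ge s/u+1+6/p$. Finally I would discharge this endpoint case using the hypothesis $w\le c_3$: monotonicity of $\ln(1+\mulc/\cdot)$ gives $\alpha_1\ln(1+\mulc/w)\ge \alpha_1\ln(1+\mulc/c_3)$, and Lemma~\ref{lem:simple-ineq}, item~\ref{itm:simple-ineq-1} (with $\alpha_2-1=1$) states precisely that $-(s/u+1)+\alpha_1\ln(1+\mulc/c_3)\ge 6/p$, i.e. $\alpha_1\ln(1+\mulc/c_3)\ge s/u+1+6/p$. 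This closes the endpoint inequality, hence the pointwise inequality for every $h\in[0,1]$, and substituting back yields $\phi^{(1)}-\phi^{(0)}\ge 6h/p$.

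The only genuine subtlety is the penalty term, which moves against us because $(c_2-\cdot)_+$ decreases as $w$ grows from $w^{(0)}$ to $w^{(1)}$; the whole argument rests on the logarithmic growth in the $u$-coordinate (amplified by the large constant $\alpha_1$) outrunning both the Lipschitz loss $2(w^{(1)}-w^{(0)})$ from the penalty and the linear term $(s/u+1)h$. This is exactly the balance that the calibration of $\alpha_1$, $\alpha_2$ and $c_3$ in Lemma~\ref{lem:simple-ineq} was designed to guarantee, so once the expansion above is in place the estimate is essentially forced.
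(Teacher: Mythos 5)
Your proposal is correct and follows essentially the same route as the paper: the same three-part expansion of $\phi^{(1)}-\phi^{(0)}$, the same $1$-Lipschitz bound on the $(c_2-\cdot)_+$ penalty, the bound $w^{(1)}-w^{(0)}\le (s/u+1)h$ from Lemma~\ref{lem:start-w}, and the final appeal to Lemma~\ref{lem:simple-ineq}.\ref{itm:simple-ineq-1}. Your concavity-plus-endpoint reduction of $\alpha_1\ln(1+h\mulc/w)\ge(s/u+1+6/p)h$ to the case $h=1$ is just a repackaging of the paper's inequality $\log(1+\alpha x)\ge\alpha\log(1+x)$ for $0\le\alpha\le 1$ (valid here since $h\in[0,1]$ by definition), so the two arguments coincide.
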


\begin{proof}
	We start with an inequality:
	\begin{equation} \label{eq:start-c3-1}
	\log \frac{1}{u^{(1)}} - \log \frac{1}{u^{(0)}}
	= \log \frac{u^{(0)}}{u^{(1)}}
	= \log \frac{u + h \mulc u/w}{u}
	= \log \left( 1 + h \mulc /w \right)
	\ge h \log (1 + \mulc/w)
	\ge h \log (1 + \mulc/c_3),
	\end{equation}
	using the inequality $\log (1 + \alpha x) \ge \alpha \log (1+x)$, for $x \ge 0$ and $0 \le \alpha \le 1$.
	
	Next, we prove
	\begin{equation} \label{eq:start-c3-2}
	(c_2 - w^{(0)})_+ - (c_2 - w^{(1)})_+ \le w^{(1)} - w^{(0)}.
	\end{equation}
	Lemma~\ref{lem:start-w} states that $w^{(1)} \ge w^{(0)}$.
	Whenever $w^{(1)} \le c_2$ it holds
	\[
	(c_2 - w^{(0)})_+ - (c_2 - w^{(1)})_+ = w^{(1)} - w^{(0)}.
	\]
	And whenever $w^{(1)} \ge c_2$ it holds
	\[
	(c_2 - w^{(0)})_+ - (c_2 - w^{(1)})_+ = (c_2 - w^{(0)})_+ \le w^{(1)} - w^{(0)},
	\]
	which confirms the validity of inequality~\eqref{eq:start-c3-2}.
	
	Thus,
	\begin{align}
	\phi^{(1)} - \phi^{(0)}
	&=w^{(1)} + \alpha_2 (c_2 - w^{(1)})_+ - w^{(0)} - \alpha_2 (c_2 - w^{(0)})_+ + \alpha_1\left(\log \frac{1}{u^{(1)}} - \log \frac{1}{u^{(0)}} \right) \nonumber\\
	&\ge (w^{(0)} - w^{(1)})(\alpha_2 - 1) + h \alpha_1 \log (1 + c/c_3) \label{eq:lem-start-phi0-phi1-1}\\
	&\ge - (s/u + 1) h (\alpha_2 - 1) + h \alpha_1 \log (1 + c/c_3) \label{eq:lem-start-phi0-phi1-2}\\
	&\ge \frac{6h}{p}, \label{eq:lem-start-small-k-2}
	\end{align}
	where line~\eqref{eq:lem-start-phi0-phi1-1} follows from inequalities~\eqref{eq:start-c3-1} and \eqref{eq:start-c3-2},
	line~\eqref{eq:lem-start-phi0-phi1-2} follows from Lemma~\ref{lem:start-w},
	and line~\eqref{eq:lem-start-small-k-2} follows from Lemma~\ref{lem:simple-ineq}.\ref{itm:simple-ineq-1}.
\end{proof}

We start by proving inequality~\eqref{eq:lem-start-recursive} for the case $w \le c_1$.
From Lemma~\ref{lem:start-w}, $w^{(1)} \le w + 2 \le c_1 + 2 = c_2$, which implies that
$(c_2 - w^{(1)})_+ = c_2 - w^{(1)}$.
Therefore,
\begin{align}
\phi - q \phi^{(0)} - (1 - q) \phi^{(1)} - h 
&= \phi - \phi^{(1)} - h + q(\phi^{(1)} - \phi^{(0)}) \nonumber\\
&\ge \phi - \phi^{(1)} - h \label{eq:lem-start-2}\\
&= w + \alpha_2 (c_2 - w)_+ - w^{(1)} - \alpha_2 (c_2 - w^{(1)})_+ - h \nonumber\\
&= w + \alpha_2 (c_2 - w) - w^{(1)} - \alpha_2 (c_2 - w^{(1)}) - h \nonumber \\
&= (w^{(1)} - w)(\alpha_2 - 1) - h \nonumber\\ 
&\ge h(\alpha_2 - 1) - h \label{eq:lem-start-3} \\ 
&\ge 0, \nonumber
\end{align}
where inequality~\eqref{eq:lem-start-2} follows from Lemma~\ref{lem:ineq-start-c3} and the fact that $w \le c_1 \le c_3$,
and inequality~\eqref{eq:lem-start-3} follows from Lemma~\ref{lem:start-w}.

Whenever $w \ge c_1$, the following inequality holds:
\begin{align} 
q
&\ge 1 - \frac{\alloc_{k,t}}{\amt_k} \nonumber\\
&\ge 1 - \frac{u + h\mulc u/w}{\amt_k} \nonumber\\
&\ge 1 - (1-p)(1 + h\mulc/w) \label{eq:start-1} \\
&\ge p - \mulc/w \nonumber\\
&\ge p/2, \label{eq:lem-start-bound-q}
\end{align}
where inequality~\eqref{eq:start-1} follows from the fact that
$u = \lbf_{k,t-1} < \gamma \le (1-p) \amt_k$.

Next, we prove \eqref{eq:lem-start-recursive} for the case $c_1 \le w \le c_3$.
Therefore
\begin{align}
\phi - q \phi^{(0)} - (1 - q) \phi^{(1)} - h 
&= \phi - \phi^{(1)} - h + q(\phi^{(1)} - \phi^{(0)}) \nonumber\\
&\ge \phi - \phi^{(1)} - h + q(\frac{6h}{p}) \label{eq:lem-start-4}\\
&\ge \phi - \phi^{(1)} - h + 3h \label{eq:lem-start-10}\\
&\ge w - w^{(1)} - h + 3h \nonumber\\
&\ge 0 \label{eq:lem-start-5}
\end{align}
where inequality~\eqref{eq:lem-start-4} follows from Lemma~\ref{lem:ineq-start-c3},
inequality~\eqref{eq:lem-start-10} follows from inequality~\eqref{eq:lem-start-bound-q},
and inequality~\eqref{eq:lem-start-5} follows from Lemma~\eqref{lem:start-w}.

Lastly, we prove inequality~\eqref{eq:lem-start-recursive} for $w \ge c_3$.
The bounds on $w^{(0)}$ and $w^{(1)}$, and Lemma~\ref{lem:simple-ineq}.\ref{itm:simple-ineq-2} imply that 
\begin{equation} \label{eq:lem-start-11}
w^{(1)} \ge w \ge w^{(0)} \ge w - (s/u-1)h \ge c_2.
\end{equation}
Thus,
\begin{align}
\phi - q \phi^{(0)} - (1 - q) \phi^{(1)} - h 
&\ge w - q w^{(0)} - (1-q) w^{(1)} - h \label{eq:lem-start-12} \\
&\ge w - \frac{p}{2} w^{(0)} - \left(1-\frac{p}{2}\right) w^{(1)} - h \label{eq:lem-start-7}\\
&\ge w - \frac{p}{2} \max\left(w/2, w - \frac{(s/u-1)h}{2(1 + \mulc /w)} \right) - \left(1-\frac{p}{2}\right) (w+2h) - h \label{eq:lem-start-8} \\
&\ge w - \frac{p}{2} \max\left(w/2, w - \frac{(s/u-1)h}{4} \right) - \left(1-\frac{p}{2}\right) w -2h - h \label{eq:lem-start-9} \\
&\ge \frac{p}{2} \min\left(w/2, \frac{(s/u-1)h}{4} \right) - 2h - h \nonumber \\
&\ge 0. \label{eq:lem-start-high-k}
\end{align}
where 
inequality~\eqref{eq:lem-start-12} follows from inequality~\eqref{eq:lem-start-11},
line~\eqref{eq:lem-start-7} follows from \eqref{eq:lem-start-bound-q},
line~\eqref{eq:lem-start-8} follows from Lemma~\ref{lem:start-w},
line~\eqref{eq:lem-start-9} follows from the fact that $w \ge c_1 \ge \mulc$,
and line~\eqref{eq:lem-start-high-k} follows from Lemma~\ref{lem:simple-ineq}.\ref{itm:simple-ineq-3}-\ref{itm:simple-ineq-4}.
\subsubsection{Proof of Lemma~\ref{lem:err-small}} \label{sec:At-err}

Fix an integer $k$, $1 \le k \le n$.
For any $t$, $1 \le t \le n$, let
\[ w_{k,t} = \exp\left( \frac{\saf_{k,t}}{\mulc \lbf_{k,t}} \right). \]
We will prove by induction on $m \ge 0$ that for any $t > \rvt_{k,1}$,
\begin{equation} \label{eq:small-main}
\E \left[ \sum_{i = t}^{\min(t+m-1, \rvt)} \frac{(\alloc_{k,i} - \lbf_{k,i-1})_+}{\add_{k,i}} \middle\vert Z_{k,t-1} \right]
\le \phi(w_{k,t-1}, \err_{k,t-1}),
\end{equation}
where
\[
\phi(w, \err) = \begin{cases}
c_2 w^2 \err + c_4\left( \frac{c_1}{\epsilon} - w \right)_+ + c_5 \left( \frac{2}{\eta} - \frac{1}{\epsilon} \right) &
\err \ge \frac{\eta}{2} \\
0 & \err < \frac{\eta}{2}.
\end{cases},
\]
and
\begin{align*}
c_2 &= 1\\
c_3 &= \frac{c^2 (1-p) + 7}{c(1-p) - 2} \\
c_1 &= c_3 + 2\\
c_4 &= 1 + c_2 (2 c_3 + 2) \\
c_5 &= c_4 c_1 (\log c_1 + 3).
\end{align*}

The proof is by induction on $m$. If $m = 0$ then inequality~\eqref{eq:small-main} holds since $\phi(w, \err) \ge 0$.
Assume therefore that $m > 0$. Fix some values of $t > \rvt_{k,1}$, and fix $Z_{t-1}$.
If $\err_{k,t-1} \le \eta$, then $\rvt < t$, and inequality~\eqref{eq:small-main} holds since $\phi(w, \err) \ge 0$.
Assume therefore that $\err_{k,t-1} > \eta$.
Denote $w = w_{k,t-1}$, $\err = \err_{k,t-1}$ and $u = \lbf_{k,t-1}$.
Let $w_0$ be the value that $w_{k,t}$ gets if $\scc_{k,t} = 0$, and let $w_1$ be its value if $\scc_{k,t} = 1$.
Similarly define $\err_0$, $\err_1$, $u_0$ and $u_1$.
Denote $h = \frac{(\alloc_{k,t} - \lbf_{k,t-1})_+}{\add_{k,t}}$.
Let $q = \Pr[\scc_{k,t} = 0 \mid Z_{t-1}]$.
To complete the proof, it is sufficient to prove that
\begin{align} \label{eq:small-rec}
\phi(w,\err) \ge h + (1 - q) \phi(w_1, \err_1) + q \phi(w_0, \err_0).
\end{align}
We can replace $\err_1$ with $\err$, since they are equal.

\begin{lemma} \label{lem:small-w}
	The following hold:
	\begin{enumerate}
		\item \label{itm:small-w-1}
		\[ w + h \le w_1 \le w + h + \frac{h^2}{k}. \]
		\item \label{itm:small-w-2}
		\[ w_0 = (w_1)^{\frac{1-\epsilon}{1-\epsilon_0}}. \]
		\item \label{itm:small-w-3}
		\[ w_0 \le w \le w_1. \]
	\end{enumerate}
\end{lemma}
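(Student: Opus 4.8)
The plan is to prove all three items by direct computation from the definitions, handling the two outcomes $\scc_{k,t}=1$ and $\scc_{k,t}=0$ separately. The key preliminary observation is that $\saf_{k,t}$ does not depend on the success indicator, whereas $\lbf_{k,t}$ does: in both cases $\saf_{k,t} = \saf_{k,t-1} + (\alloc_{k,t}-\lbf_{k,t-1})_+ = s + h\,\add_{k,t}$, while $\lbf_{k,t}=u$ on success and $\lbf_{k,t}=\alloc_{k,t}=u+h\,\add_{k,t}$ on failure. Using $w=\exp(s/(\mulc u))$ I would first rewrite $\add_{k,t} = \mulc u\exp(-s/(\mulc u)) = \mulc u/w$, so that $h\,\add_{k,t} = h\mulc u/w$. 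Throughout I will use that $h\in[0,1]$ for $t>\rvt_{k,1}$ (case~A gives $h=1$, case~C gives $h=0$, and in case~B one has $\alloc_{k,t}-\lbf_{k,t-1}<\add_{k,t}$) and that $w\ge 1$.

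For item~1 (the success case), $u_1=\lbf_{k,t-1}=u$ is unchanged, so $w_1 = \exp((s + h\mulc u/w)/(\mulc u)) = \exp(s/(\mulc u) + h/w) = w\,e^{h/w}$. The bounds then follow from Taylor estimates on $e^{h/w}$: the inequality $e^x\ge 1+x$ gives the lower bound $w_1\ge w+h$, and $e^x\le 1+x+x^2$ on $[0,1]$ gives $w_1\le w+h+h^2/w$ (the denominator being $w$, not the arm index). Applying these requires $0\le h/w\le 1$, which holds since $h\le 1$ and $w\ge 1$.

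For item~2 I would treat the failure case. Here $u_0=\alloc_{k,t}=u+h\mulc u/w = u(1+h\mulc/w)$, while the numerator of the exponent is again $s+h\mulc u/w$. Dividing, $\ln w_0 = (s/(\mulc u) + h/w)/(1+h\mulc/w) = (\ln w_1)/(1+h\mulc/w)$, i.e. $w_0 = (w_1)^{1/(1+h\mulc/w)}$. To match this with the claimed exponent I would use $1-\err = \lbf_{k,t-1}/\amt_k = u/\amt_k$ and $1-\err_0 = \lbf_{k,t}/\amt_k = u_0/\amt_k$, so that $(1-\err)/(1-\err_0) = u/u_0 = 1/(1+h\mulc/w)$, giving item~2.

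Finally, for item~3, $w\le w_1$ is immediate from $e^{h/w}\ge 1$. For $w_0\le w$ I would take logarithms of the power form: $\ln w_0\le \ln w$ reduces, after clearing the positive factor $1+h\mulc/w$, to $h/w\le h\mulc\ln w/w$, i.e. to $\mulc\ln w\ge 1$ whenever $h>0$, i.e. $w\ge e^{1/\mulc}$. The single nonalgebraic input is the invariant $\saf_{k,t}\ge\lbf_{k,t}$ for $t\ge\rvt_{k,0}$ (that is $s\ge u$), already established inside the proof of Lemma~\ref{lem:start-w}, which gives $s/(\mulc u)\ge 1/\mulc$ and hence $w\ge e^{1/\mulc}$. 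The computation is otherwise routine; the only real obstacle is bookkeeping — keeping straight which of $\saf$, $\lbf$, $w$, $\err$ change under success versus failure — together with remembering that item~3 is not purely algebraic but rests on the previously proved monotonicity $\saf\ge\lbf$.
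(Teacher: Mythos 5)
Your proposal is correct and follows essentially the same route as the paper: item~1 via $w_1=we^{h/w}$ and the bounds $1+x\le e^x\le 1+x+x^2$ on $[0,1]$, item~2 via the fact that $\saf_{k,t}$ is independent of $\scc_{k,t}$ so $w_0=(w_1)^{u_1/u_0}$ with $u_1/u_0=(1-\err)/(1-\err_0)$, and item~3 via the invariant $\saf_{k,t}\ge\lbf_{k,t}$ (the paper simply defers item~3 to Lemma~\ref{lem:start-w}, which uses exactly this invariant). You also correctly identify that the stated $h^2/k$ is a typo for $h^2/w$, matching what the paper's argument actually proves.
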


\begin{proof}
	Start by proving item~\ref{itm:small-w-1}. It holds that
	\[
	w_1
	= \exp \left( \frac{\saf_{k,t}}{\mulc u_1} \right)
	= \exp \left( \frac{\saf_{k,t-1} + h \add_{k,t}}{\mulc \lbf_{k,t-1}} \right)
	= \exp \left( \frac{\saf_{k,t-1} + h \lbf_{k,t-1} \mulc / w_{t-1}}{\mulc \lbf_{k,t-1}} \right)
	= w e^{h/w}.
	\]
	Since $0 \le \frac{h}{w} \le 1$, applying the inequality $1 + x \le \exp(x) \le 1 + x + x^2$ which holds whenever $0 \le x \le 1$, suffices to complete the proof of item~\ref{itm:small-w-1}.
	
	We proceed to proving item~\ref{itm:small-w-2}. The value of $\saf_{k,t}$ is defined by $Z_{t-1}$, and does not depend on $\scc_{k,t}$. Therefore,
	\[
	w_0
	= \exp \left( \frac{\saf_{k,t}}{\mulc u_0} \right)
	= \exp \left( \frac{\saf_{k,t}}{\mulc u_1} \right)^{\frac{u_1}{u_0}}
	= \left(w_1\right)^{\frac{(1 - \err_1)\amt_k}{(1 - \err_0)\amt_k}},
	\]
	which completes the proof of item~\ref{itm:small-w-2}.
	
	Item~\ref{itm:small-w-3} is proved in Lemma~\ref{lem:start-w}.
\end{proof}

\begin{proposition} \label{prop:small-mon}
	The function $\phi(w, \err)$ is monotonic non-decreasing in $\err$.
\end{proposition}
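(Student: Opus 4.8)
The plan is to exploit that, by the paper's macro conventions, $\err$ and $\epsilon$ denote the same real variable, so that $\phi(w,\cdot)$ is a piecewise-defined function of a single nonnegative variable with a branch point at $\epsilon=\eta/2$. I would establish monotonicity on $[0,\infty)$ in two pieces: the jump across $\epsilon=\eta/2$, and the behaviour on the active branch $\epsilon\ge\eta/2$. On the lower branch $\phi\equiv 0$, so it suffices to check that the value just above the branch point is nonnegative and that $\phi(w,\cdot)$ is non-decreasing on $[\eta/2,\infty)$.

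For the branch point I would evaluate $\phi(w,\eta/2)$ and observe that the third term $c_5\bigl(2/\eta-1/\epsilon\bigr)$ vanishes there, since $2/\eta-1/(\eta/2)=0$, while the first term $c_2 w^2\epsilon$ and the middle term $c_4(c_1/\epsilon-w)_+$ are both nonnegative. Hence $\phi(w,\eta/2)\ge 0=\lim_{\epsilon\uparrow\eta/2}\phi(w,\epsilon)$, so the jump across the branch point is upward and cannot violate monotonicity.

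For the active branch the only obstruction is the middle term $c_4(c_1/\epsilon-w)_+$, which is \emph{decreasing} in $\epsilon$; the idea is to absorb it into the increasing term $-c_5/\epsilon$ arising from $c_5(2/\eta-1/\epsilon)$. Concretely I would group $g(\epsilon):=c_4(c_1/\epsilon-w)_+-c_5/\epsilon$ and show $g$ is non-decreasing: where $c_1/\epsilon>w$ one has $g(\epsilon)=(c_4c_1-c_5)/\epsilon-c_4w$, and where $c_1/\epsilon\le w$ one has $g(\epsilon)=-c_5/\epsilon$. Both expressions are non-decreasing in $\epsilon$ precisely because $c_4c_1\le c_5$ (a nonpositive coefficient times the decreasing $1/\epsilon$ is non-decreasing), and the two pieces agree at the kink $\epsilon=c_1/w$, so $g$ is non-decreasing throughout; the case $w=0$ is subsumed since then $c_1/\epsilon>w$ for every $\epsilon$. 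Adding the non-decreasing term $c_2w^2\epsilon$ and the constant $2c_5/\eta$ then yields that $\phi(w,\cdot)$ is non-decreasing on $[\eta/2,\infty)$, completing the argument.

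The crux — and the one genuinely nonroutine step — is the inequality $c_4c_1\le c_5$, which is exactly where the specific constant choices are used. Since $c_5=c_4c_1(\log c_1+3)$, we get $c_5-c_4c_1=c_4c_1(\log c_1+2)$, so it remains to verify $c_4,c_1\ge 0$ and $\log c_1\ge -2$. I would check that $c>2$ forces $c(1-p)-2=(c-2)/2>0$ (using $1-p=(c+2)/(2c)$), hence the denominator of $c_3$ is positive and $c_3>0$; therefore $c_1=c_3+2>2$ and $c_4=1+c_2(2c_3+2)>0$, and then $\log c_1>0$ gives $c_5-c_4c_1>0$ with room to spare. Everything else is elementary algebra and continuity bookkeeping.
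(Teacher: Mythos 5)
Your proof is correct and rests on exactly the same observation as the paper's one-line proof, which simply states that the claim ``follows immediately from the fact that $c_5 \ge c_1 c_4$''; you have merely filled in the routine bookkeeping (the upward jump at $\err = \eta/2$, the grouping of the $(c_1/\err - w)_+$ term with $-c_5/\err$, and the verification that $c_5 - c_4 c_1 = c_4 c_1(\log c_1 + 2) > 0$ under $\mulc > 2$). No substantive difference from the paper's argument.
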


\begin{proof}
	Follows immediately from the fact that $c_5 \ge c_1 c_4$.
\end{proof}

\begin{lemma} \label{lem:small-e0}
	It holds that
	\[
	\phi(w_0, \err_0) - \phi(w_1, \err_0)
	\le c_4 c_1 (\log c_1 + 2) \left( \frac{1}{\err_0} - \frac{1}{\err} \right).
	\]
\end{lemma}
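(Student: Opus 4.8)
The plan is to exploit that both potential evaluations share the \emph{same} second argument $\err_0$, so only the $w$-dependent part of $\phi$ survives. If $\err_0 < \eta/2$ then $\phi(w_0,\err_0)=\phi(w_1,\err_0)=0$ and the claim is immediate, since the right-hand side is nonnegative: a failure only shrinks the error, so $\err_0\le\err$ and $1/\err_0-1/\err\ge 0$. When $\err_0\ge\eta/2$ the term $c_5(2/\eta-1/\err_0)$ is identical in both evaluations and cancels, reducing the claim to bounding $F(w_0)-F(w_1)$ with $F(w)=c_2 w^2\err_0+c_4(c_1/\err_0-w)_+$. From \lemref{lem:small-w} I would record the two facts I need: $w_0\le w_1$ (item~\ref{itm:small-w-3}) and the power relation $w_0=w_1^{(1-\err)/(1-\err_0)}$ (item~\ref{itm:small-w-2}).

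Next I would analyze the shape of $F$. It is decreasing on $[0,w^\ast]$ and increasing on $[w^\ast,\infty)$ with $w^\ast=c_4/(2c_2\err_0)$, and since the constants satisfy $c_4/(2c_2)=c_3+\tfrac32<c_3+2=c_1$ one has $w^\ast<c_1/\err_0$. As $w_0\le w_1$, whenever $w_0\ge w^\ast$ the pair lies on the increasing branch and $F(w_0)-F(w_1)\le 0$, settling those cases. Thus it suffices to treat $w_0<w^\ast$, i.e.\ $\err_0 w_0=O(1)$ with the hinge active at $w_0$. The essential extra ingredient is the one-step identity from the failure update $\lbf_{k,t}=\lbf_{k,t-1}(1+hc/w)$, namely $\err-\err_0=(1-\err)\,hc/w$, which I would derive directly from the definitions of $\add_{k,t}$, $w_{k,t-1}$ and $\err_{k,t}$. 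This yields the bridge $1/\err_0-1/\err=(1-\err)hc/(w\err\err_0)$ connecting the target's right-hand side to $h$ and $w$.

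To bound the difference I would use that $(\,\cdot\,)_+$ is $1$-Lipschitz and $w_0^2\le w_1^2$, giving $F(w_0)-F(w_1)\le c_4(w_1-w_0)$; then the power relation together with $1-e^{-x}\le x$ yields $w_1-w_0\le w_1\log w_1\cdot\frac{hc/w}{1+hc/w}$. Combining with $w_1=we^{h/w}\le w+2h$ and $\log w_1\le\log w+1$, the claim reduces to the scalar inequality $3(\log w+1)\,w\,\err\,\err_0\le c_1(\log c_1+2)(1-\err)$.

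I expect this last scalar inequality to be the main obstacle, since $\log w$ is unbounded while the target carries no logarithm. The resolution I would pursue is twofold. In the regime $w<c_1/\err_0$ the hinge bound gives $\err_0 w<c_1$, hence $\log w<\log c_1+\log(1/\err)$ (using $\err_0\le\err$), and the offending term $\err\log(1/\err)$ is absorbed by the elementary bound $x\log(1/x)\le 1/e$ together with $\err\le p$; this is exactly where the explicit values $c_1=c_3+2$ and $c_3=(c^2(1-p)+7)/(c(1-p)-2)$ are engineered to make the inequality close. In the complementary regime $w\ge c_1/\err_0$ the hinge at $w_1$ vanishes, and I would instead keep the negative quadratic term $c_2\err_0(w_0^2-w_1^2)$, using $w_1\ge c_1/\err_0$ to get $F(w_0)-F(w_1)\le(c_1/\err_0-w_0)\bigl(c_4-c_2c_1-c_2\err_0 w_0\bigr)$ and checking that $c_4-c_2c_1=c_3+1$ keeps this below the target (indeed nonpositive once $\err_0 w_0$ is moderately large). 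Carrying out the constant bookkeeping in these two regimes is the only delicate part; everything else is the structural reduction above.
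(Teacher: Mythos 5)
Your skeleton is essentially the paper's: dispose of $\err_0<\eta/2$ trivially, cancel the $c_5$-term, use $w_0\le w_1$ on the quadratic part, and control the hinge difference through the power relation $w_0=w_1^{(1-\err)/(1-\err_0)}$ from \lemref{lem:small-w}. (The observation that $F$ is increasing beyond $w^\ast=c_4/(2c_2\err_0)$ is a nice shortcut the paper does not use.) However, the quantitative core of your argument has a genuine gap, in two places. First, the deduction ``$\err_0 w<c_1$, hence $\log w<\log c_1+\log(1/\err)$ (using $\err_0\le\err$)'' is backwards: $\err_0\le\err$ gives $\log(1/\err_0)\ge\log(1/\err)$, so from $w<c_1/\err_0$ you only obtain $\log w<\log c_1+\log(1/\err_0)$, and the quantity you must then absorb is $\err\log(1/\err_0)$, not $\err\log(1/\err)$; the former is unbounded as $\err_0\to 0$ with $\err$ fixed, so the $x\log(1/x)\le 1/e$ trick does not apply. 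Second, and more fundamentally, the scalar inequality you reduce to, $3(\log w+1)\,w\,\err\,\err_0\le c_1(\log c_1+2)(1-\err)$, is false for admissible configurations. Take $c=10$, so $p=0.4$, $c_3=16.75$, $c_1=18.75$, $c_4=36.5$; take $\err=\err_0=0.39$ (i.e.\ $h\to 0^+$) and $w=30$. Then $w<c_1/\err_0\approx 48.1$ and $w_0<w^\ast\approx 46.8$, so this lies in your ``hard'' regime, yet the left side is $3(\ln 30+1)\cdot 30\cdot 0.39^2\approx 60$ while the right side is $18.75\,(\ln 18.75+2)\cdot 0.61\approx 56$. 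The lemma itself holds there with a wide margin (roughly $450h$ versus $4500h$); it is your chain of bounds that is too lossy --- the factor $3$ from $w_1\le w+2h\le 3w$, the additive $+1$ in $\log w_1\le\log w+1$, and the complete discarding of the negative quadratic term $c_2\err_0(w_0^2-w_1^2)$.

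The paper avoids this trap by never bounding $w_1-w_0$ by $w_1\log w_1$ times an increment. It first reduces the hinge difference, via a case analysis and the monotonicity of $x\mapsto x-x^{\alpha}$ for fixed $0<\alpha<1$, to $c_4(c_1/\err_0-c_1/\err)+c_4\left(c_1/\err-(c_1/\err)^{(1-\err)/(1-\err_0)}\right)$; that is, it substitutes the bound $c_1/\err$ for $w_1$ \emph{inside} the function before any logarithm is taken. In the resulting exponent $(1-\err)/(1-\err_0)-1=-\gamma\err_0/(1-\err_0)$ with $\gamma=\err/\err_0-1$, the dangerous $\log(1/\err_0)$ is multiplied by $\err_0$ itself, so $\err_0\log(1/\err_0)\le 1/2$ closes the estimate with the constant $\log c_1+1$, and adding the first piece gives exactly $\log c_1+2$. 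To salvage your route you would need to reproduce this pairing of $\log(1/\err_0)$ with $\err_0$ (rather than with $\err$) and eliminate the spurious factor $3$; as written, the argument does not go through.
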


\begin{proof}
	If $\err_0 > \eta/2$ then $\phi(w_0,\err_0) - \phi(w_1,\err_0)=0$.
	Otherwise, since $w_0 \leq w_1$,
	\begin{align}
	\phi(w_0,\err_0) - \phi(w_1,\err_0)
	&= c_2 \err_0 \left((w_0)^2-(w_1)^2 \right) + c_4\left((c_1/\err_0 - w_0)_+ - (c_1/\err_0 - w_1)_+\right) 		
	\nonumber\\
	&\leq c_4 \left((c_1/\err_0 - w_0)_+ - (c_1/\err_0 - w_1)_+\right). \label{eq:small-0}
	\end{align}
	
	We will show that 
	\begin{equation} \label{eq:small-4}
	\eqref{eq:small-0} \le c_4 (c_1/\err_0 - c_1/\err) + c_4 (c_1/\err - (c_1/\err)^{\frac{1-\err}{1-\err_0}}).
	\end{equation}
	If $w_0 \geq c_1/\err_0$ then this inequality holds since $\eqref{eq:small-0} = 0$.
	If $w_0 \le c_1/\err \le w_1$ then
	\begin{align}
	\eqref{eq:small-0}
	&= c_4 c_1/\err_0 - c_4 w_0 \nonumber\\
	&= c_4 (c_1/\err_0 - c_1/\err) + c_4 (c_1/\err - w_0) \nonumber \\
	&= c_4 (c_1/\err_0 - c_1/\err) + c_4 (c_1/\err - w_1^{\frac{1-\err}{1-\err_0}}) \label{eq:small-1} \\
	&\leq c_4 (c_1/\err_0 - c_1/\err) + c_4 (c_1/\err - (c_1/\err)^{\frac{1-\err}{1-\err_0}}), \nonumber
	\end{align}
	where inequality~\eqref{eq:small-1} follows from Lemma~\ref{lem:small-w}.
	
	If $w_1 \le c_1/\err$ then
	\begin{align}
	\eqref{eq:small-0}
	&= c_4(w_1 - w_0) \nonumber \\
	&= c_4 \left(w_1 - w_1^{\frac{1-\err}{1-\err_0}} \right) \label{eq:small-2} \\
	&\leq c_4\left(c_1/\err - (c_1/\err)^{\frac{1-\err}{1-\err_0}}\right), \label{eq:small-3}
	\end{align}
	where equality~\eqref{eq:small-2} follows from Lemma~\ref{lem:small-w},
	and inequality~\eqref{eq:small-3} follows from the fact that the function
	$x - x^\alpha$ is monotonic increasing in $x$, assuming a fixed $0 < \alpha < 1$.
	This completes the proof of inequality~\eqref{eq:small-4}.
	
	To conclude the proof, it is sufficient to show that
	\begin{equation} \label{eq:small-5}
	(c_1/\err - (c_1/\err)^{\frac{1-\err}{1-\err_0}}) 
	\leq (1/\err_0 - 1/\err) c_1 (\log(c_1) +1).
	\end{equation}
	%	Note that
	%	\[
	%	c_1/\err - (c_1/\err)^{\frac{1-\err}{1-\err_0}}
	%	= c_1/\err(1 - (c_1/\err)^{\frac{1-\err}{1-\err_0}-1}).
	%	\]
	Let $\gamma = \err/\err_0-1$.
	Bounding
	\begin{align}
	1 - (c_1/\err)^{(1-\err)/(1-\err_0)-1}
	&= 1 - (c_1/\err)^{(\err_0-\err)/(1-\err_0)}\nonumber \\
	&= 1 - e^{\log(c_1/\err)(\err_0-\err)/(1-\err_0)}\nonumber \\
	&= 1 - e^{-\log(c_1/\err)\gamma \err_0/(1-\err_0)}\nonumber \\
	&\le 1 - e^{-\log(c_1/\err_0)\gamma \err_0/(1-\err_0)}\nonumber \\
	&= 1 - e^{-\log(c_1) \gamma \err_0/(1-\err_0) - \log(1/\err_0) \gamma \err_0/(1-\err_0)} \nonumber\\
	&\leq 1 - e^{-\log(c_1) \gamma - \frac{1}{2} \gamma /(1-\err_0)}\label{eq:small-6}\\
	&\leq 1 - e^{-(\log(c_1) +1)\gamma} \label{eq:small-7} \\
	&\leq (\log(c_1) +1)\gamma, \label{eq:small-8}
	\end{align}
	where inequality~\eqref{eq:small-6} follows from the fact that $\epsilon_0 \le p \le \frac{1}{2}$, therefore $\frac{\epsilon_0}{1-\epsilon_0} \le 1$, and from the fact that $\epsilon_0 \log \frac{1}{\epsilon_0} \le \frac{1}{2}$, for any $\epsilon_0 > 0$;
	inequality~\eqref{eq:small-7} follows from the fact that $\epsilon_0 \le p \le \frac{1}{2}$;
	and inequality~\eqref{eq:small-8} follows from the inequality $e^{-x} \geq 1-x$ which holds for all $x \in \mathbb{R}$.
	
	Thus, we conclude the proof of inequality~\eqref{eq:small-5} and the proof of this lemma:
	\begin{align}
	c_1/\err - (c_1/\err)^{\frac{1-\err}{1-\err_0}} 
	&= \frac{c_1}{\err} \left(1 - \left( \frac{c_1}{\err} \right)^{\frac{1-\err}{1-\err_0}-1}\right) \nonumber \\
	&\leq \frac {c_1}{\err} (\log(c_1) +1)\gamma \label{eq:small-9} \\
	&= (1/\err_0 - 1/\err) c_1 (\log(c_1) +1). \nonumber
	\end{align}
	where inequality~\eqref{eq:small-9} follows from inequality~\eqref{eq:small-8}.
\end{proof}

We start by proving \eqref{eq:small-rec}, assuming that $w \err \leq c_3$.
Let $\Delta = w_1-w$.
Lemma~\ref{lem:small-w} states that $h \leq \Delta \leq 2h$. Therefore, 
\begin{equation} \label{eq:small-14}
w_1\err \leq c_3 + 2 = c_1.
\end{equation}
This implies that
\begin{align}
\phi(w,\err) - \phi(w_1,\err)
&= c_2(w^2-w_1^2) \err + c_4(c_1/\err-w) - c_4(c_1/\err - w_1) \nonumber\\
&= c_2\err (-2w \Delta - \Delta^2) + c_4 \Delta \nonumber \\
&\geq  -c_2 (2 c_3 \Delta + 2\Delta) + c_4 \Delta \label{eq:phi-kkp-epsk}\\
&=\Delta (-c_2 (2 c_3 + 2) + c_4) \nonumber\\
&= \Delta \label{eq:use-def-c4} \\
&\geq h, \label{eq:small-13}
\end{align}
where \eqref{eq:phi-kkp-epsk} follows from the assumption $\err w \leq c_3$ and the inequality $\Delta \le 2$; and \eqref{eq:use-def-c4} follows from the definition of $c_4$.

If $\err_0 < \eta/2$, then
\begin{align}
&\phi(w,\err) - h - (1-q) \phi(w_1,\err) - q \phi(w_0,\err_0) \nonumber\\
&= (\phi(w,\err) - \phi(w_1,\err)) - h - q (\phi(w_1,\err_0) - \phi(w_1,\err)) - q(\phi(w_0,\err_0) - \phi(w_1,\err_0)) \nonumber\\
&\ge h - h - q (\phi(w_1,\err_0) - \phi(w_1,\err)) - q(\phi(w_0,\err_0) - \phi(w_1,\err_0)) \label{eq:small-10}\\
&\ge - q(\phi(w_0,\err_0) - \phi(w_1,\err_0)) \label{eq:small-11}\\
&= 0, \label{eq:small-12}
\end{align}
where inequality~\eqref{eq:small-10} follows from inequality~\eqref{eq:small-13},
inequality~\eqref{eq:small-11} follows from Proposition~\ref{prop:small-mon},
and inequality~\eqref{eq:small-10} follows from the fact that $\err_0 < \eta/2$.

If $\err_0 \geq \eta/2$, then
\begin{align}
\phi(w_1,\err_0) - \phi(w_1,\err)
&\leq c_4[c_1/\err_0 - w_1]_+ + c_5[2/\eta - 1/\err_0]_+ - c_4[c_1/\err_0 - w_1]_+ - c_5[2/\eta - 1/\err_0]_+ \nonumber \\
&= c_4(c_1/\err_0 - w_1) + c_5(2/\eta - 1/\err_0) - c_4(c_1/\err - w_1) - c_5(2/\eta - 1/\err)
\label{eq:kp-epsp-epspp} \\
&= (c_4c_1- c_5)(1/\err_0 - 1/\err) \label{eq:small-16}
\end{align}
where inequality~\eqref{eq:kp-epsp-epspp} follows from inequality~\eqref{eq:small-14} and the fact that $\err_0 \ge \frac{\eta}{2}$. Thus,
\begin{align}
&\phi(w,\err) - h - (1-q) \phi(w_1,\err) - q \phi(w_0,\err_0) \nonumber\\
&= (\phi(w,\err) - \phi(w_1,\err)) - h - q (\phi(w_1,\err_0) - \phi(w_1,\err)) - q(\phi(w_0,\err_0) - \phi(w_1,\err_0)) \label{eq:small-15}\\
&\geq h - h + (c_5 - c_4c_1)(1/\err_0 - 1/\err) - (1/\err_0 - 1/\err) c_1 c_4(\log(c_1) +2) \nonumber\\
&= (c_5 - c_4 c_1 (\log(c_1) +3))(1/\err_0 - 1/\err) \nonumber\\
&= 0, \nonumber
\end{align}
where inequality~\eqref{eq:small-15} follows from inequalities \eqref{eq:small-13} and \eqref{eq:small-16}, and Lemma~\ref{lem:small-e0}.
This concludes the proof of inequality~\eqref{eq:small-rec} for the case $\err w \le c_3$.

Next, assume that $w \err \geq c_3$.
Therefore,
\begin{align}
\phi(w_1,\err) - \phi(w,\err)
&\leq c_2 w_1^2\err - c_2 w^2 \err \nonumber\\
&= c_2 (2 w \Delta + \Delta^2) \err \nonumber\\
&\leq c_2 (2 w (h + h^2/w) + (2h)^2) \err \nonumber\\
&= c_2 (2wh + 6h) \err, \label{eq:small-17}
\end{align}
using Lemma~\ref{lem:small-w}.

Since $\err w \ge c_3 > \mulc$, it holds that
\begin{equation} \label{eq:small-24}
\amt_k - \alloc_{k,t}
= \amt_k - u - \frac{h c u}{w}
= \amt_k \left( \err - \frac{h c u}{w\amt_k}  \right)
\ge \amt_k \err \left( 1 - \frac{c}{w\err}  \right)
> 0.
\end{equation}
Therefore, $u_0 = \alloc_{k,t} = u + \frac{h c u}{w}$. This implies that
\begin{equation} \label{eq:small-18}
\err_0 
= 1- \frac{u_0}{\amt_k} 
= 1 - \frac{u + (c h u)/w}{\amt_k}
= \err - \frac{c h u}{\amt_k w}.
\end{equation}
Since $t > \rvt_{k,1}$, it holds that $u \ge (1-p) \amt_k$, therefore,
inequality~\eqref{eq:small-18} implies that
\begin{equation} \label{eq:small-23}
\err_0
= \err - \frac{c h u}{\amt_k w}
\le \err -  \frac{h (1-p) c}{w}.
\end{equation}
Additionally,
\begin{equation} \label{eq:small-25}
\err_0
= \err - \frac{c h u}{\amt_k w}
\ge \err - \frac{c h}{w}
= \err \left(1 - \frac{c h}{w\err} \right)
\ge \err \left(1 - \frac{c}{c_3} \right)
\ge \frac{\err}{2}
> \frac{\eta}{2}.
\end{equation}
This implies that
\begin{align}
\phi(w_1,\err) - \phi(w_1,\err_0)
&= c_2 w_1^2(\err - \err_0) + c_4((c_1/\err-w_1)_+ - (c_1/\err_0-w_1)_+) + c_5(1/\err_0 - 1/\err) \nonumber\\
&\geq c_2 w^2(\err - \err_0) + c_4(c_1/\err- c_1/\err_0) + c_5(1/\err_0 - 1/\err) \nonumber\\
&\geq c_2 w h c(1-p) + (c_5 - c_4 c_1)(1/\err_0 - 1/\err). \label{eq:small-19}
\end{align}
where inequality~\eqref{eq:small-19} follows from inequality~\eqref{eq:small-23}.

Additionally, inequality~\eqref{eq:small-24} and inequality~\eqref{eq:small-25} imply that
\begin{equation} \label{eq:small-21}
q
= \err_0
\ge \err\left(1 - \frac{c}{c_3}\right).
\end{equation}
To complete the proof:
\begin{align}
&\phi(w,\err) - h - (1-q) \phi(w_1,\err) - q \phi(w_0,\err_0) \nonumber\\
&= (\phi(w,\err) - \phi(w_1,\err)) - h + q (\phi(w_1,\err) - \phi(w_1,\err_0)) + q(\phi(w_1,\err_0) - \phi(w_0,\err_0)) \nonumber\\
&\ge -c_2 (2 w h + 6 h) \err 
- h
+ q\left(c_2 w h c (1-p) + (c_5 - c_4 c_1) \left( \frac{1}{\err_0} - \frac{1}{\err} \right)\right)
- q  c_4 c_1 (\log c_1 + 2) \left(\frac{1}{\err_0} - \frac{1}{\err} \right) \label{eq:small-20}\\
&= -c_2 (2 w h + 6 h) \err - h + q\left(c_2 w h c (1-p)\right) \nonumber \\
&\ge -c_2 (2 w h + 6 h) \err - h + \err \left( 1 - \frac{c}{c_3} \right)\left(c_2 w h c (1-p)\right) \label{eq:small-22}\\
&= w \err h \left( -2 - \frac{6}{w} - \frac{1}{w \err} + \left(1 - \frac{c}{c_3}\right) c (1-p) \right) \nonumber \\
&\ge w \err h \left( -2 - \frac{7}{w \err} + c (1-p) - \frac{c^2 (1-p)}{c_3}\right) \nonumber \\
&\ge w \err h \left( c (1-p) -2 - \frac{1}{c_3} (7 + c^2 (1-p))\right) \nonumber \\
&= 0, \nonumber
\end{align}
where inequality~\eqref{eq:small-20} follows from inequality~\eqref{eq:small-17},
inequality~\eqref{eq:small-19}, and Lemma~\ref{lem:small-e0};
and inequality~\eqref{eq:small-22} follows from inequality~\eqref{eq:small-21}.
\subsubsection{Proof of Lemma~\ref{lem:k-sqr}} \label{sec:At-sqr}
Fix some integer $k$, $1 \le k \le \nban$. 
Define the values
\[ c' = \frac{2}{\mulc (1-p) \amt_k}, \]
\[ \gamma = \log \left(\frac{1 - p/4}{1 - p/2}\right) \frac{1-p}{2}, \]
\[ s_0 = \max \left( \mulc \amt_k \log \left( 2\mulc / p \right), \amt_k \mulc \ln \frac 1 \gamma \right), \]
\[ c'' = 4c'/p. \]
Let $\rvt$ be the first iteration $t$ such that $\saf_{k,t} \ge s_0$.

\begin{proposition} \label{prop:alpha-s0}
	Fix $t > \rv{\tau}$, let $x = (\alloc_{k,t} - \lbf_{k,t-1})_+$. It holds that:
	\begin{enumerate}
		\item \label{itm:alpha-s0-1}
		\[
		x \le \frac{p \amt_k}{2}.
		\]
		\item \label{itm:alpha-s0-2}
		\[
		\exp(c' x) \le (1-p/4)/(1-p/2).
		\]
	\end{enumerate}
\end{proposition}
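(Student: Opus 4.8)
The plan is to reduce both items to a single upper bound on $x$ and then read each claim off one of the two terms that define $s_0$. First I would argue that for $t > \rvt$ the arm is never in the initialization case~I: since $\saf_{k,t-1} \ge \saf_{k,\rvt} \ge s_0 > 0$ (using that $\saf_{k,\cdot}$ is monotone nondecreasing in its second index, so $t-1 \ge \rvt$ suffices) and $\saf$ only accumulates over rounds with positive lower bound while $\lbf_{k,\cdot}$ is itself nondecreasing, we get $\lbf_{k,t-1} > 0$. Hence arm $k$ is allocated by one of cases~A, B, C, each of which satisfies $x = (\alloc_{k,t} - \lbf_{k,t-1})_+ \le \add_{k,t} = \mulc \lbf_{k,t-1}\exp(-\saf_{k,t-1}/(\mulc\lbf_{k,t-1}))$.

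Next I would invoke the two standing facts that $\lbf_{k,t-1} \le \amt_k$ and that $u \mapsto \mulc u\, e^{-\alpha/u}$ is nondecreasing in $u > 0$ for $\alpha = \saf_{k,t-1}/\mulc \ge 0$ (the same monotonicity already used in the outline of \lemref{lem:add-bound}). Applied with $u = \lbf_{k,t-1} \le \amt_k$, this gives the single key bound
\[
x \le \add_{k,t} \le \mulc \amt_k \exp\!\left( -\frac{\saf_{k,t-1}}{\mulc \amt_k} \right),
\]
and as just noted $\saf_{k,t-1} \ge s_0$, so the exponent can only help.

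For item~\ref{itm:alpha-s0-1} I would substitute the first term of the maximum, $s_0 \ge \mulc \amt_k \log(2\mulc/p)$, so that $\saf_{k,t-1}/(\mulc\amt_k) \ge \log(2\mulc/p)$ and therefore $x \le \mulc \amt_k \cdot \frac{p}{2\mulc} = \frac{p\amt_k}{2}$. For item~\ref{itm:alpha-s0-2} I would instead multiply the key bound by $c' = 2/(\mulc(1-p)\amt_k)$, obtaining $c' x \le \frac{2}{1-p}\exp(-\saf_{k,t-1}/(\mulc\amt_k))$; substituting the second term $s_0 \ge \amt_k \mulc \ln(1/\gamma)$ bounds the exponential by $\gamma$, and since $\gamma = \log\!\big(\tfrac{1-p/4}{1-p/2}\big)\tfrac{1-p}{2}$ the prefactor collapses exactly, $\frac{2}{1-p}\gamma = \log\!\big(\tfrac{1-p/4}{1-p/2}\big)$. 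Exponentiating yields $\exp(c'x) \le (1-p/4)/(1-p/2)$.

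There is essentially no genuine obstacle here beyond bookkeeping: both items follow from the one monotonicity bound on $\add_{k,t}$, and each threshold inside $s_0$ was chosen precisely so that its corresponding inequality comes out tight. The only point needing a little care is verifying $\saf_{k,t-1} \ge s_0$ (rather than $\saf_{k,t} \ge s_0$) for $t > \rvt$, together with $\lbf_{k,t-1} > 0$, both of which are immediate from the monotonicity of $\saf_{k,\cdot}$ and $\lbf_{k,\cdot}$.
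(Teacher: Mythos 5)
Your argument is correct and follows essentially the same route as the paper: bound $x \le \add_{k,t}$, push $\lbf_{k,t-1}$ up to $\amt_k$ via the monotonicity of $u \mapsto \mulc u\,e^{-\alpha/u}$, use $\saf_{k,t-1} \ge s_0$, and then read each item off the corresponding term in the definition of $s_0$. The extra care you take in justifying $\lbf_{k,t-1}>0$ and $x \le \add_{k,t}$ across cases A, B, C is detail the paper leaves implicit, but the substance is identical.
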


\begin{proof}
	First, we prove item~\ref{itm:alpha-s0-1}:
	\[ 
	x 
	\le \add_{k,t} 
	= \mulc \lbf_{k,t-1} \exp(-s_{k,t-1}/(\lbf_{k,t-1} \mulc))
	\le \mulc \amt_k \exp(-s_0/(\amt_k \mulc))
	= \mulc \amt_k \exp(-\log (2 \mulc/p))
	= \frac{\amt_k p}{2}.
	\]
	
	Next, we prove item~\ref{itm:alpha-s0-2}.
	Start by bounding $x$:
	\[
	x
	\le \mulc \amt_k \exp(-s_0/(\amt_k \mulc))
	\le \mulc \amt_k \exp(\log \gamma)
	= \mulc \amt_k \gamma.
	\]
	To complete the proof, we estimate
	\[
	\exp(c' x)
	\le \exp \left( \frac{2}{\mulc (1-p) \amt_k} \mulc \amt_k \log \left(\frac{1 - p/4}{1 - p/2}\right) \frac{1-p}{2} \right)
	= \frac{1 - p/4}{1 - p/2}.
	\]
\end{proof}

We prove by induction on $m \ge 0$, that for any $\rv{\tau} \le t \le \rv{\tau_{k,1}}$ it holds that
\begin{equation} \label{eq:k-sqr-induction}
\E \left[ \exp\left(c' \left(\saf_{k,\min(t+m, \rv{\tau_{k,1}})} - \saf_{k,t} \right)\right) \middle\vert \lbf_{k,t} \right]
\le \exp\left( c''(\amt_k - \lbf_{k,t}) \right).
\end{equation}

The base of induction is clear: whenever $m = 0$ the left-hand side equals 1, and the right-hand side is at least 1.
If $t = \rv{\tau_{k,1}}$ then, from the same reason the inequality holds.

For the induction step, assume that $m > 0$, and take some $\rv{\tau} \le t < \rv{\tau_{k,1}}$.
Proposition~\ref{prop:alpha-s0} implies that
\[
\alloc_{k,t+1} 
\le (\alloc_{k,t+1} - \lbf_{k,t})_+ + \lbf_{k,t}
\le \frac{p \amt_k}{2} + \amt_k(1 - p)
= (1 - p/2) \amt_k
\]
Therefore,
\[
\Pr[\scc_{k,t+1} = 0]
= 1 - \mo \left(\frac{\alloc_{k,t+1}}{\amt_k} \right)
\ge 1 - (1 - p/2)
= p / 2.
\]

Let
\[
x = (\alloc_{k,t+1} - \lbf_{k,t})_+.
\]
It holds that
\[
\lbf_{k,t+1} = \begin{cases}
\lbf_{k,t+1} + x & \scc_{k,t+1} = 0 \\
\lbf_{k,t} & \scc_{k,t} = 1
\end{cases}.
\]
Therefore, by induction hypothesis, it holds that
\begin{align*}
&\E \left[ \exp\left(c' \left(\saf_{k,\min(t+m, \rv{\tau_{k,1}})} - \saf_{k,t} \right)\right) \middle\vert \lbf_{k,t}, x \right] \\
&= \sum_{b=0}^1 \Pr[\scc_{k,t+1}=b] 
\E \left[ \exp\left(c' \left(\saf_{k,\min(t+m, \rv{\tau_{k,1}})} - \saf_{k,t} \right)\right) \middle\vert 
\lbf_{k,t}, x, \scc_{k,t+1}=b \right]\\
&= \sum_{b=0}^1 \Pr[\scc_{k,t+1}=b] 
\E \left[ \exp(c'x) \exp\left(c' \left(\saf_{k,\min(t+m, \rv{\tau_{k,1}})} - \saf_{k,t+1} \right)\right) \middle\vert 
\lbf_{k,t}, x, \scc_{k,t+1}=b \right]\\
&\le \sum_{b=0}^1 \Pr[\scc_{k,t+1}=b] 
\E \left[ \exp(c'x) \exp\left(c''(\amt_k - \lbf_{k,t+1})\right) \middle\vert 
\lbf_{k,t}, x, \scc_{k,t+1}=b \right]\\
&= \Pr[\scc_{k,t+1}=0] \exp(c'x) \exp\left(c''(\amt_k - \lbf_{k,t} - x)\right)
+\Pr[\scc_{k,t+1}=1] \exp(c'x) \exp\left(c''(\amt_k - \lbf_{k,t})\right)\\
&\le \frac{p}{2} \exp\left(c''(\amt_k - \lbf_{k,t} - x)+c'x \right)
+(1-\frac{p}{2}) \exp\left(c''(\amt_k - \lbf_{k,t}) + c'x\right).
\end{align*}

We would like to show that
\[
\frac{p}{2} \exp\left(c''(\amt_k - \lbf_{k,t} - x)+c'x \right)
+(1-\frac{p}{2}) \exp\left(c''(\amt_k - \lbf_{k,t}) + c'x\right)
\le \exp\left( c''(\amt_k - \lbf_{k,t}) \right),
\]
which is equivalent to showing that the function
\begin{equation} \label{eq:k-sqr-topr}
\phi(y) = \frac{p}{2} + (1-\frac{p}{2}) \exp\left(c'' y \right) - \exp\left( (c''-c')y \right),
\end{equation}
satisfies $\phi(x)\le 0$.
It trivially holds that $\phi(0) = 0$,
and we will show that $\frac{d\phi}{dy}(y) \le 0$ for all $0 \le y \le x$.
This will imply that $\phi(x) \le 0$.
Indeed,
\begin{align}
\frac{d\phi}{dy}(y)
&= (1-p/2)c'' \exp(y c'') - (c''-c') \exp((c''-c')y) \nonumber\\
&= \exp(c'' y) \left((1-p/2)c'' - (c''-c') \exp(-c'y) \right) \nonumber\\
&\le \exp(c'' y) \left((1-p/2)c'' - (c''-c') \exp(-c'x) \right) \label{eq:k-sqr-0} \\
&\le \exp(c'' y) \left((1-p/2)c'' - (c''-c') \frac{1-p/2}{1-p/4} \right) \nonumber\\
&= 0 \nonumber,
\end{align}
where inequality~\eqref{eq:k-sqr-0} follows from Proposition~\ref{prop:alpha-s0}.\ref{itm:alpha-s0-2}.

This proves that 
\[
\E \left[ \exp\left(c' \left(\saf_{k,\min(t+m, \rv{\tau_{k,1}})} - \saf_{k,t} \right)\right) \middle\vert \lbf_{k,t}, x \right]
\le \exp\left( c''(\amt_k - \lbf_{k,t}) \right),
\]
and this inequality holds for every possible value of $x$,
therefore the proof of inequality~\eqref{eq:k-sqr-induction} is concluded.

To conclude the proof, note that
\[
\saf_{k,\rv\tau} 
= \saf_{k, \rvt-1} + (\alloc_{k,\rvt}-\lbf_{k,\tau-1})_+
\le s_0 + \add_{k,\rvt}
\le s_0 + c \amt_k
\le c \amt_k(\log \frac{4c}{\gamma p}).
\]
Thus,
\begin{align*}
\E \left[ \exp(c' \saf_{k,\rv{\tau_1}}) \right]
&= \E \left[ \exp(c' \saf_{k,\rvt}) \exp(c' \saf_{k,\rvt_1} - \saf_{k,\rvt}) \right]\\
&\le \exp(c' c \amt_k(\log \frac{4c}{\gamma p})) \E \left[ \exp(c' \saf_{k,\rvt_1} - \saf_{k,\rvt}) \right]\\
&\le \exp\left(\frac{2}{(1-p)}\right) \frac{4c}{\gamma p} \exp(c'' \amt_k)\\
&\le \exp\left(\frac{2}{(1-p)}\right) \frac{4c}{\gamma p} \exp\left( \frac{8}{cp(1-p)} \right).
\end{align*}
\subsubsection{Concluding the proof} \label{sec:At-conc}
Fix an arm $k$, $1 \le k \le n$. It holds that:
\begin{align}
\sum_{t \in T \colon k \in A_t} (1 - \mo(\alloc_{k,t} / \amt_k))
&\le \sum_{t > \rvt_{k,0} \colon k \in A_t} (1 - \mo(\alloc_{k,t} / \amt_k)) \nonumber \\
&= \sum_{t \colon \rvt_{k,0} < t \le \rvt_{k,1},\ k \in A_t} (1 - \mo(\alloc_{k,t} / \amt_k)) \label{eq:At-0} \\
&+ \sum_{t \colon t > \rvt_{k,1},\ k \in A_t} (1 - \mo(\alloc_{k,t} / \amt_k)). \label{eq:At-1}
\end{align}
We will start by bounding the amount in the equation line marked~\eqref{eq:At-0} and proceed in bounding the amount in \eqref{eq:At-1}.

For any iteration $t$ where $k \in A_t$, $\frac{(\alloc_{k,t} - \lbf_{k,t-1})_+}{\add_{k,t}} = 1$.
Therefore,
\begin{align}
\sum_{\substack{\rvt_{k,0} < t \le \rvt_{k,1} \colon \\ k \in A_t}} (1 - \mo(\alloc_{k,t} / \amt_k))
&\le \sum_{\substack{\rvt_{k,0} < t \le \rvt_{k,1} \colon\\ k \in A_t}} 1 \nonumber \\
&= \sum_{\substack{\rvt_{k,0} < t \le \rvt_{k,1} \colon\\ k \in A_t}} \frac{(\alloc_{k,t}- \lbf_{k,t-1})_+}{\add_{k,t}} \nonumber \\
&\le C_1 \log K + C_1', \label{eq:At-2}
\end{align}
for some constants $C_1, C_1' > 0$ depending only on $\mulc$, where the last inequality follows from Lemma~\ref{lem:aux-At-add}.

We proceed by bounding the amount in \eqref{eq:At-1}.
Denote $\err_{k,t} = \frac{\amt_k - \lbf_{k,t}}{\amt_k}$.
For any value of $0 < \eta < 1$, let $\rvt'_{k,\eta}$ be the first iteration $t$ that $\err_{k,t} \le \eta$.
Lemma~\ref{lem:err-small} implies that there is a constant, $C_2 > 0$, depending only on $\mulc$,
such that
\[
\E\left[ \sum_{t = \rvt_{k,1} + 1}^{\rvt'_{k,\eta}} \frac{(\alloc_{k,t} - \lbf_{k,t-1})_+}{\add_{k,t}}
\middle\vert \rvt_{k,1}, Z'_{\rvt_{k,1}} \right]
\le C_2 \left(\exp\left( \frac{2\saf_{k,\rvt_{k,1}} }{\mulc \amt_k (1-p)} \right) + \frac{1}{\eta} \right).
\]
Lemma~\ref{lem:k-sqr} bounds the expected value of 
$\E\left[\exp\left( \frac{2\saf_{k,\rvt_{k,1}} }{\mulc \amt_k (1-p)} \right)\right]$
by another constant, $C_3 > 0$, depending only on $\mulc$.
Combining these two results, we get that
\begin{equation} \label{eq:At-3}
\E\left[ \sum_{t = \rvt_{k,1} + 1}^{\rvt'_{k,\eta}} \frac{(\alloc_{k,t} - \lbf_{k,t-1})_+}{\add_{k,t}} \right]
\le \frac{C_4}{\eta}, 
\end{equation}
for some constant $C_4 > 0$ depending only on $\mulc$.

Let $w_{k,\eta}$ be the number of iterations $t$, $\rvt_{k,1} < t \le \rvt'_{k, \eta}$, for which $k \in A_t$.
Equation~\eqref{eq:At-3} implies that
\begin{align}
\E w_{k,\eta}
&= \E \sum_{t \colon \rvt_{k,1} < t \le \rvt'_{k, \eta},\ k \in A_t} 1 \nonumber \\
&= \E \sum_{t \colon \rvt_{k,1} < t \le \rvt'_{k, \eta},\ k \in A_t} 
\frac{(\alloc_{k,t} - \lbf_{k,t-1})_+}{\add_{k,t}} \nonumber \\
&\le \E \sum_{t = \rvt_{k,1} + 1}^{\rvt'_{k,\eta}} 
\frac{(\alloc_{k,t} - \lbf_{k,t-1})_+}{\add_{k,t}} \nonumber \\
&\le \frac{C_4}{\eta}.
\label{eq:At-4}
\end{align}
Therefore,
\begin{align} 
\sum_{t \ge \rvt_{k,1}+1 \colon k \in A_t} (1 - \mo(\alloc_{k,t}/\amt_k))
&\le \sum_{t \ge \rvt_{k,1}+1 \colon k \in A_t} (1 - \mo(\lbf_{k,t-1}/\amt_k))\nonumber\\
&= \sum_{t \ge \rvt_{k,1}+1 \colon k \in A_t} \err_{k,t-1}\nonumber\\
&= \sum_{m=1}^\nit \sum_{\substack{
		t \ge \rvt_{k,1}+1 \colon\\ 
		k \in A_t \\ 
		\frac{1}{m+1} < \err_{k,t-1} \le \frac{1}{m}}
} \err_{k,t-1}
+ \sum_{\substack{t \ge \rvt_{k,1}+1 \colon\\ k \in A_t \\ \err_{k,t-1} \le \frac{1}{n+1}}} \err_{k,t-1} \nonumber\\
&\le \sum_{m=1}^\nit \sum_{
	\substack{
		t \ge \rvt_{k,1}+1 \colon\\ 
		k \in A_t \\ 
		\frac{1}{m+1} < \err_{k,t-1} \le \frac{1}{m}}
} \frac{1}{m}
+ \sum_{\substack{t \ge \rvt_{k,1}+1 \colon\\ k \in A_t \\ \err_{k,t-1} \le \frac{1}{n+1}}} \frac{1}{n+1} \nonumber\\
&\le \sum_{m=1}^\nit \left| \left\{ t \colon t > \rvt_{k,1},\ \rvt'_{k, 1/m} < t \le \rvt'_{k,1/(m+1)},\ k \in A_t \right\} \right|
\frac{1}{m}
+ \frac{n}{n+1} \nonumber\\
&\le \sum_{m=1}^\nit (w_{k,1/(m+1)} - w_{k,1/m})
\frac{1}{m} + 1 \nonumber\\
&\le -w_{k,1} + \sum_{m=2}^{n-1} w_{k,1/m} \left( \frac{1}{m-1} - \frac{1}{m} \right) + \frac{1}{n-1} w_{k,n} + 1 \nonumber\\
&\le 2 \sum_{m=2}^{n-1}  \frac{w_{k,1/m}}{m^2} + 3. \label{eq:At-5}
\end{align}
Inequality~\eqref{eq:At-4} implies that
\[
\E \left[ \sum_{m=2}^{n-1} w_{k,1/m} \frac{1}{m^2} \right]
\le C_4\sum_{m=2}^{n-1}  \frac{m}{m^2}
\le C_4(\log n + 4).
\]

%	If $k > \nal + 1$, then whenever $\err_{k,t} < 1 - \amt_{\nal+1} / \amt_k$, it holds that $\lbf_{k,t} > \amt_{\nal+1}$.
%	This implies that if $t+1 \in T$, it cannot hold that $k \in A_{t+1}$. for any $m > \frac{1}{1 - \amt_{\nal+1} / \amt_k}$, $w_{k,1/(m+1)} - w_{k,1/m} = 0$.
%	Repeating the same calculation from above, we bound the expected value of
%	\[
%	\sum_{t\in T,\, t \ge \rvt_{k,1}+1 \colon k \in A_t} (1 - \mo(\alloc_{k,t}/\amt_k))
%	\]
%	by 
%	$O(\log \frac{1}{1 - \amt_{\nal+1} / \amt_k}) = O(\log \frac{\amt_k}{\amt_k - \amt_{\nal + 1}})$.

\subsection{Proof of Lemma~\ref{lem:add-bound}} \label{sec:p-add}

Define $T' = T \cap \{ \rvt_{k,0}+1, \cdots, \rvt_{k,1} \}$, and $T'' = T \cap \{ \rvt_{k,1} + 1, \cdots, n \}$.
We will divide the sum that we have to bound into two summands: one over $T'$ and one over $T''$.

Start with $T'$.
\begin{align}
&\E \left[\sum_{t \in T' \colon k \in A_t} \frac{\add_{k,t}}{\lbf_{k,t-1}} 
+ \sum_{t \in T' \colon k \in B_t} \frac{r'_t}{\lbf_{k,t-1}} \right] \nonumber \\
&= \E \left[\sum_{t \in T' \colon k \in A_t} \frac{\add_{k,t}}{\lbf_{k,t-1}}
+ 2 \sum_{t \in T' \colon k \in B_t} \frac{(\alloc_{k,t} - \lbf_{k,t-1})_+}{\lbf_{k,t-1}} \right]
\label{eq:add-0} \\
&\le 2 \E \left[ \sum_{t = \rvt_{k,0}+1}^{\rvt_{k,1}} \frac{(\alloc_{k,t} - \lbf_{k,t-1})_+}{\lbf_{k,t-1}} \right] \nonumber \\
&\le 2 \E \left[ \sum_{t = \rvt_{k,0}+1}^{\rvt_{k,1}} \frac{(\alloc_{k,t} - \lbf_{k,t-1})_+}{\add_{k,t}} \right] \nonumber \\
&\le C_1 \log K + C_1'. \label{eq:add-1}
\end{align}
for some constants $C_1,C_1'>0$, depending only on $\mulc$. 
Inequality~\eqref{eq:add-0} follows from the fact that conditioned on $k \in B_t$, $k$ is allocated according to case B, hence $\alloc_{k,t}$ equals either $\lbf_{k,t-1}$ or $\lbf_{k,t-1} + \add'_t$, each with probability $1/2$;
inequality~\eqref{eq:add-1} follows from Lemma~\ref{lem:aux-At-add}.

Next, bound the sum that relates to $T''$.
Similarly to the calculation in Equality~\eqref{eq:add-0}:
\begin{align}
\E\left[ \sum_{t \in T'' \colon k \in A_t} \frac{\add_{k,t}}{\lbf_{k,t-1}} + \sum_{t \in T'' \colon k \in B_t} \frac{r'_t}{\lbf_{k,t-1}} \right] \nonumber \\ 
\le 2 \E\left[  \sum_{t \in T''} \frac{(\alloc_{k,t} - \lbf_{k,t-1})_+}{\lbf_{k,t-1}} \right] \nonumber\\
\le 2 \E\left[  \sum_{t \in T''} \frac{(\alloc_{k,t} - \lbf_{k,t-1})_+}{(1-p)\amt_k} \right] \nonumber\\
\le \frac{2}{(1-p)\amt_k} \E\left[ \sum_{1 \le t \le n \colon \lbf_{k,t-1}>0} (\alloc_{k,t} - \lbf_{k,t-1})_+ \right] \nonumber\\
= \frac{2}{(1-p) \amt_k} \E\left[ \saf_{k,n} \right]. \nonumber
\end{align}

To conclude the proof, we prove by induction on $t$, $1 \le t \le n$, that $\saf_{k,t} \le \mulc \amt_{k} H(t-1)$, where $H(t) = \sum_{i=1}^t \frac{1}{i}$ is the harmonic sum.
Trivially $\saf_{k,1} = 0 = H(0)$. Assume that this statement holds for $t$ and prove for $t+1$.
\begin{align}
\saf_{k,t+1}
&\le \saf_{k,t} + \mulc \lbf_{k,t} \add_{k,t+1} \nonumber \\
&= \saf_{k,t} + \mulc \lbf_{k,t} \exp \left( - \frac{\saf_{k,t}}{\mulc \lbf_{k,t}} \right) \nonumber \\
&\le \saf_{k,t} + \mulc \amt_k \exp \left( - \frac{\saf_{k,t}}{\mulc \amt_k} \right) \nonumber \\
&\le \mulc \amt_k H(t-1) + \mulc \amt_k \exp \left( - \frac{\mulc \amt_k H(t-1)}{\mulc \amt_k} \right) \label{eq:add-2} \\
&\le \mulc \amt_kH(t-1) + \mulc \amt_k e^{-\log t} \label{eq:add-3} \\
&= \mulc \amt_k H(t). \nonumber
\end{align}
where Inequality~\eqref{eq:add-2} follows from induction hypothesis, and from the fact that the function $x + \alpha \exp\left(-\frac{x}{\alpha} \right)$ is monotonic non-decreasing in $x$, for $x\ge 0$ and $\alpha > 0$, and Inequality~\eqref{eq:add-3} follows from the fact that $H(t-1) \ge \log t$, for all $t \ge 1$.

\subsection{Proof of Lemma~\ref{lem:notin-T}} \label{sec:p-notin-T}
We use the following variant of Azuma's inequality.

\begin{lemma} \label{lem:mart-sum}
	Let $Y_1, Y_2, \dots$ be an infinite sequence of random random variables, and let $X_1, X_2, \dots$ be random variables getting values from $\{0,1\}$.
	Assume that $X_i$ is a function of $Y_1, \dots, Y_i$ for all $i \ge 1$.
	For any $i \ge 1$, let $P_i$ be a random variable which is a function of $Y_1, \dots, Y_{i-1}$
	and equals $\Pr[X_i = 1 \mid Y_1, \dots, Y_{i-1}]$.
	The following statements hold:
	\begin{enumerate}
		\item \label{itm:mart1}
		Fix a number $r$, and let $\rvt_r$ be the random variable denoting the last number $i$ such that $\sum_{j=1}^i P_j \le r$.
		Assume that there exists some constant $m$ such that it always holds that $\rvt_r \le m$.
		Then, for any $0 < \delta \le 1$,
		\[
			\Pr\left[\sum_{i=1}^{\rvt_r} X_i > (1 + \delta)r \right] \le \exp\left(-\frac{\delta^2 r}{3} \right).
		\]
		\item \label{itm:mart2}
		Fix a number $r$, and let $\rvt_r$ be the random variable denoting the first number $i$ such that $\sum_{j=1}^i P_j > r$.
		Assume that there exists some constant $m$ such that it always holds that $\rvt_r \le m$.
		Then, for any $0 < \delta \le 1$,
		\[
			\Pr\left[\sum_{i=1}^{\rvt_r} X_i < (1 - \delta)r \right] \le \exp\left(-\frac{\delta^2 r}{2} \right).
		\]
	\end{enumerate}
\end{lemma}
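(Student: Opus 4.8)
The plan is to use the standard exponential (Chernoff) supermartingale method, adapted to the random stopping index $\rvt_r$. I would work with the filtration $\mathcal{F}_i = \sigma(Y_1,\dots,Y_i)$ and write $S_i = \sum_{j=1}^i X_j$ and $Q_i = \sum_{j=1}^i P_j$. Since each $P_i$ is $\mathcal{F}_{i-1}$-measurable and $X_i$ is, conditionally on $\mathcal{F}_{i-1}$, a Bernoulli variable with mean $P_i$, the identity $\E[e^{\lambda X_i}\mid\mathcal{F}_{i-1}] = 1 + P_i(e^\lambda-1) \le \exp(P_i(e^\lambda-1))$ shows that $W_i = \exp(\lambda S_i - (e^\lambda-1)Q_i)$ is a supermartingale with $W_0 = 1$, for every $\lambda\in\mathbb{R}$. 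The first thing I would verify carefully is that $\rvt_r$ is an $\mathcal{F}$-stopping time in both parts: because the $P_j$ are nonnegative, $Q_i$ is nondecreasing and $\mathcal{F}_{i-1}$-measurable, so in part~\ref{itm:mart1} the event $\{\rvt_r\le i\} = \{Q_{i+1} > r\}\in\mathcal{F}_i$, and in part~\ref{itm:mart2} the event $\{\rvt_r = i\} = \{Q_{i-1}\le r < Q_i\}\in\mathcal{F}_{i-1}$. Combined with the standing assumption $\rvt_r\le m$, this makes $\rvt_r$ a bounded stopping time, so optional stopping yields $\E[W_{\rvt_r}]\le \E[W_0] = 1$ (everything is bounded up to time $m$, so there are no integrability issues).

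For part~\ref{itm:mart1} I would choose $\lambda = \log(1+\delta) > 0$, so that $e^\lambda - 1 = \delta$. On the event defining $\rvt_r$ we have $Q_{\rvt_r}\le r$, hence $\exp(\lambda S_{\rvt_r}) = W_{\rvt_r}\exp((e^\lambda-1)Q_{\rvt_r}) \le W_{\rvt_r}\exp(\delta r)$, and taking expectations gives $\E[\exp(\lambda S_{\rvt_r})]\le \exp(\delta r)$. A Markov bound then gives
\[
\Pr[S_{\rvt_r} > (1+\delta)r] \le \exp(\delta r - \lambda(1+\delta)r) = \exp(r(\delta - (1+\delta)\log(1+\delta))),
\]
and I would finish with the elementary inequality $\delta - (1+\delta)\log(1+\delta)\le -\delta^2/3$, valid for $0 < \delta\le 1$.

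Part~\ref{itm:mart2} is symmetric: I would instead use the supermartingale $V_i = \exp(-\theta S_i + (1-e^{-\theta})Q_i)$ with $\theta = -\log(1-\delta) > 0$, so that $1 - e^{-\theta} = \delta$; the supermartingale property follows identically from $\E[e^{-\theta X_i}\mid\mathcal{F}_{i-1}] = 1 - P_i(1-e^{-\theta}) \le \exp(-P_i(1-e^{-\theta}))$. Now $\rvt_r$ is the first index with $Q_{\rvt_r} > r$, so the one-sided control on $Q_{\rvt_r}$ points the other way and gives $\exp(-\theta S_{\rvt_r}) = V_{\rvt_r}\exp(-(1-e^{-\theta})Q_{\rvt_r}) \le V_{\rvt_r}\exp(-\delta r)$, whence $\E[\exp(-\theta S_{\rvt_r})]\le \exp(-\delta r)$. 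Markov together with the companion inequality $-\delta - (1-\delta)\log(1-\delta)\le -\delta^2/2$, valid for $0\le\delta<1$, then produces the claimed bound $\exp(-\delta^2 r/2)$.

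The only genuinely delicate point is the setup rather than any computation: establishing that $\rvt_r$ is a bounded stopping time, and aligning the direction of the one-sided bound on $Q_{\rvt_r}$ (namely $\le r$ in the first part and $> r$ in the second) with the sign of the exponent, so that replacing the fixed horizon by the random stopping rule costs nothing. Once that is in place, both tails reduce to the textbook multiplicative Chernoff estimates, and the two stated inequalities on $\delta$ close the argument.
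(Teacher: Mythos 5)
Your proof is correct and follows essentially the same route as the paper, which sketches exactly the exponential-moment bound $\E\exp\bigl(t\sum_{i=1}^{\rvt_r}X_i\bigr)\le\exp\bigl(r(e^t-1)\bigr)$ obtained "inductively" — your supermartingale $W_i=\exp(\lambda S_i-(e^\lambda-1)Q_i)$ with optional stopping at the bounded stopping time $\rvt_r$ is the same argument packaged cleanly, and your verification that $\rvt_r$ is a bounded stopping time together with the one-sided control on $Q_{\rvt_r}$ ($\le r$ in part 1, $>r$ in part 2) fills in the details the paper leaves implicit. No gaps.
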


This is a martingale version of the following bound on the relative error of independent random variables by \cite{CHE52}.

\begin{lemma} \label{lem:chernoff}
	Let $X_1, \dots, X_n$ be independent random variable getting values from $\{ 0,1 \}$.
	Let $X = \sum_{i=1}^n X_i$.
	Then, for all $0 < \delta < 1$,
	\begin{enumerate}
		\item 
			\[
				\Pr[X \ge (1 + \delta) \E X]
				\le \exp\left( - \frac{\delta^2 \E X}{3} \right).
			\]
		\item
			\[
				\Pr[X \le (1 - \delta) \E X]
				\le \exp\left( - \frac{\delta^2 \E X}{2} \right).
			\]
	\end{enumerate}
\end{lemma}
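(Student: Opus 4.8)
The plan is to prove both tail bounds by the standard exponential-moment (Chernoff) method, treating the upper and lower tails separately. Write $\mu = \E X = \sum_{i=1}^n \E X_i$ and set $p_i = \E X_i$. For the upper tail I would fix a parameter $s > 0$ and apply Markov's inequality to the nonnegative random variable $e^{sX}$:
\[
\Pr[X \ge (1+\delta)\mu] = \Pr[e^{sX} \ge e^{s(1+\delta)\mu}] \le e^{-s(1+\delta)\mu}\, \E[e^{sX}].
\]
By independence of the $X_i$ the moment generating function factorizes, $\E[e^{sX}] = \prod_{i=1}^n \E[e^{sX_i}] = \prod_{i=1}^n (1 + p_i(e^s - 1))$, and applying $1 + x \le e^x$ termwise gives $\E[e^{sX}] \le \exp(\mu(e^s - 1))$. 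Substituting back yields $\Pr[X \ge (1+\delta)\mu] \le \exp(\mu(e^s - 1 - s(1+\delta)))$.

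The next step is to optimize the exponent over $s$. Choosing $s = \ln(1+\delta)$, which is positive for $\delta > 0$, produces the classical bound
\[
\Pr[X \ge (1+\delta)\mu] \le \left( \frac{e^\delta}{(1+\delta)^{1+\delta}} \right)^\mu.
\]
It then remains to verify the scalar inequality $\delta - (1+\delta)\ln(1+\delta) \le -\delta^2/3$ for $0 < \delta < 1$, which upgrades the bound to $\exp(-\delta^2\mu/3)$. This is the step I expect to be the main (purely technical) obstacle. I would establish it by setting $f(\delta) = \delta - (1+\delta)\ln(1+\delta) + \delta^2/3$, noting $f(0) = 0$, and showing $f'(\delta) = -\ln(1+\delta) + 2\delta/3 \le 0$ on $(0,1)$ via the series expansion of $\ln(1+\delta)$ (or by a sign argument on $f''$).

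For the lower tail I would mirror the argument with a negative parameter: for $s > 0$, apply Markov to $e^{-sX}$ to obtain $\Pr[X \le (1-\delta)\mu] \le e^{s(1-\delta)\mu}\, \E[e^{-sX}] \le \exp(\mu(e^{-s} - 1 + s(1-\delta)))$, optimize by taking $s = -\ln(1-\delta)$ to get $\left( e^{-\delta}/(1-\delta)^{1-\delta} \right)^\mu$, and finally verify $-\delta - (1-\delta)\ln(1-\delta) \le -\delta^2/2$ for $0 < \delta < 1$ by the same auxiliary-function monotonicity technique. Since this is the textbook multiplicative Chernoff bound, stated here only to anchor the martingale generalization of \lemref{lem:mart-sum}, the entire argument is routine once the two scalar inequalities are dispatched.
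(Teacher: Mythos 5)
Your proposal is correct, and it is essentially the argument the paper relies on: the paper itself gives no proof of this lemma, attributing it to Chernoff (1952), and its sketch of the martingale variant in \lemref{lem:mart-sum} rests on exactly the moment-generating-function bound $\E[e^{sX}] \le \exp(\mu(e^s-1))$ that you derive. The two scalar inequalities you defer are both true on $(0,1)$ and are dispatched by the monotonicity arguments you indicate (for the upper tail, note $g(\delta)=\ln(1+\delta)-2\delta/3$ increases up to $\delta=1/2$ and then decreases with $g(1)=\ln 2 - 2/3>0$, so it is nonnegative on the whole interval), so nothing is missing.
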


First note that we can assume in Lemma~\ref{lem:mart-sum}.\ref{itm:mart1}
that $\sum_{j=1}^{\rvt_r} P_j = r$.
Then, the proof is almost identical to the proof of Lemma~\ref{lem:mart-sum},
inductively bounding $\E \exp \left( t \sum_{i=1}^{\rvt_r} X_i \right) \le \exp\left( r (e^t -1) \right)$.
Lemma~\ref{lem:mart-sum}.\ref{itm:mart2} is proved similarly.

Before proving the following lemmas,
extend the values of $\alloc_{k,t}$, $\scc_{k,t}$, $\sas_{k,t}$ and $\xs_{k,t}$ for $t > \nit$,
by defining, for all $t > \nit$,
\[
\alloc_{k,t} = \min(\amt_k, 1),
\]
\[
\scc_{k,t} = \begin{cases}
1 & \text{with probability } \alloc_{k,t} / \amt_k \\
0 & \text{with probability } 1 - \alloc_{k,t} / \amt_k
\end{cases},
\]
\begin{equation} \label{eq:gslow-sas}
\sas_{k,t} = \sas_{k,t-1} + \alloc_{k,t},
\end{equation}
and
\[
\xs_{k,t} = \xs_{k,t-1} + \scc_{k,t}.
\]

Here is an auxiliary lemma.

\begin{lemma} \label{lem:bad-slow-bd}
	Fix some arm $k$, $1 \le k \le \nban$.
	Then,
	\[
	\E\left[\left| \left\{ 0 \le t \le \nit \colon \lbs_{k,t} > \amt_k \right\} \right|\right]
	\le \frac{\pi^2}{6 \nban}.
	\]
\end{lemma}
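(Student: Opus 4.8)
The plan is to prove the per-round estimate $\pr{\lbs_{k,t} > \amt_k} \le \epsilon_t = t^{-3}\nban^{-1}$ for each $t$, and then simply sum. Since $\sas_{k,0}=0$ forces $\lbs_{k,0}=0\le\amt_k$, and more generally every round with $\sas_{k,t}=0$ gives $\lbs_{k,t}=0$, linearity of expectation yields
$\E[|\{0\le t\le n : \lbs_{k,t} > \amt_k\}|] = \sum_{t=1}^n \pr{\lbs_{k,t} > \amt_k} \le \sum_{t=1}^\infty \epsilon_t \le \nban^{-1}\sum_{t=1}^\infty t^{-2} = \pi^2/(6\nban)$, using $t^{-3}\le t^{-2}$. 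So the entire content is the single-round bound, and the slack between $t^{-3}$ and the summable $t^{-2}$ leaves room for logarithmic losses incurred along the way.

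First I would turn the failure event into a lower-tail deviation. Write $S=\sas_{k,t}$, $X=\xs_{k,t}$ and $\zeta=\zeta_t$, and assume $S>0$. By definition $\lbs_{k,t} > \amt_k$ iff $\bigl(\sqrt{\zeta/(2S)}+\sqrt{\zeta/(2S)+X/S}\bigr)^{-2} > \amt_k$, i.e. $\sqrt{\zeta/(2S)}+\sqrt{\zeta/(2S)+X/S} < 1/\sqrt{\amt_k}$. Isolating the second radical (which forces $1/\sqrt{\amt_k} > \sqrt{\zeta/(2S)}$) and squaring shows this is equivalent to $X < S/\amt_k - \sqrt{2\zeta S/\amt_k}$. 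Setting $\Sigma := S/\amt_k$, the failure event is exactly $\{X < \Sigma - \sqrt{2\zeta\Sigma}\}$, and it is empty unless $\Sigma > 2\zeta$.

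The structural point that makes this a genuine deviation is that every summand of $X=\xs_{k,t}$ is taken over rounds $i$ with $\alloc_{k,i}\le\lbf_{k,i-1}\le\amt_k$, where the arm is in its linear regime, so $\E[\scc_{k,i}\mid Z_{i-1}] = \alloc_{k,i}/\amt_k$ \emph{exactly}. Hence $\Sigma = \sum_i \alloc_{k,i}/\amt_k = \sum_i \E[\scc_{k,i}\mid Z_{i-1}]$ is precisely the sum of the conditional success probabilities, i.e. the predictable compensator of $X$, and the failure event says the realized count falls short of its conditional mean by at least $\sqrt{2\zeta\Sigma}$. This is a relative (multiplicative) lower-tail event, which I would control with the martingale Chernoff bound \lemref{lem:mart-sum}.\ref{itm:mart2}: choosing $\delta$ with $\delta\Sigma=\sqrt{2\zeta\Sigma}$ makes the exponent $\delta^2\Sigma/2=\zeta$ (and $\Sigma>2\zeta$ guarantees $\delta<1$), so the bound is $e^{-\zeta}$. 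The specific form $\zeta_t=(\sqrt{1/2}+\sqrt{1/2-\log\epsilon_t})^2 = 1-\log\epsilon_t+\sqrt{1-2\log\epsilon_t} \ge \log(1/\epsilon_t)$ then gives $e^{-\zeta_t}\le\epsilon_t$.

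The main obstacle is that the mean $\Sigma$ is itself random and adaptive, so \lemref{lem:mart-sum}.\ref{itm:mart2}, stated for a deterministic target $r$, cannot be used with $r=\Sigma$ directly — this is a self-normalized deviation. I would resolve it by a peeling argument over geometric scales of $\Sigma$: since each relevant $P_i=\alloc_{k,i}/\amt_k\le1$ forces $\Sigma\le t$, there are only $O(\log t)$ scales $\Sigma\in[2^j,2^{j+1})$, and on each I invoke \lemref{lem:mart-sum}.\ref{itm:mart2} with a deterministic target at that scale and union over scales. Each scale contributes about $e^{-\zeta_t}$, so the union costs an $O(\log t)$ factor, and this is exactly what the extra additive $\sqrt{1/2-\log\epsilon_t}$ term in $\zeta_t$ is calibrated to absorb while keeping the bound at $\epsilon_t$. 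Two technical checks make the lemma applicable: the means $P_i$ are $Z_{i-1}$-measurable (the allocation and the relevance test $\alloc_{k,i}\le\lbf_{k,i-1}$ precede the outcome $\scc_{k,i}$), and the extension of $\alloc_{k,t},\scc_{k,t},\sas_{k,t}$ to $t>n$ in \eqref{eq:gslow-sas} adds a constant $\min(1,1/\amt_k)$ to each $P_i$ beyond round $n$, forcing the partial sums of $P_i$ to diverge and hence making every crossing (stopping) time bounded, as \lemref{lem:mart-sum} requires. I expect the fussiest part to be the bookkeeping of the peeling — verifying that the $O(\log t)$ union loss is genuinely dominated by the slack in $\zeta_t$ for every $t\ge1$ (rather than only asymptotically) — but the comfortable gap between the established $e^{-\zeta_t}\le\epsilon_t=t^{-3}\nban^{-1}$ and the merely summable $t^{-2}$ leaves ample margin.
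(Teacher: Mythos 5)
Your overall architecture --- reduce $\{\lbs_{k,t} > \amt_k\}$ to the relative lower-tail event $\{\xs_{k,i} < \Sigma - \sqrt{2\zeta_t\Sigma}\}$ with $\Sigma = \sas_{k,i}/\amt_k$ the predictable compensator, handle the randomness of $\Sigma$ by peeling plus a union bound over scales, apply \lemref{lem:mart-sum}.\ref{itm:mart2} at the entry stopping time of each scale, and sum the resulting per-round bounds over $t$ --- is exactly the paper's. The gap is in the peeling step. With a geometric scale $\Sigma \in [2^j, 2^{j+1})$ and the stopping time $\rvt_j$ at which $\Sigma$ first exceeds $2^j$, the bad event at a time $i$ in the scale only gives $\xs_{k,\rvt_j} \le \xs_{k,i} < \Sigma_i - \sqrt{2\zeta_t\Sigma_i}$; since $\Sigma \mapsto \Sigma - \sqrt{2\zeta_t\Sigma}$ is increasing on the relevant range, the only deterministic threshold valid for the whole scale is the top-of-scale value $2^{j+1} - \sqrt{2\zeta_t\, 2^{j+1}}$. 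But the compensator at $\rvt_j$ is only about $2^j$, so once $2^j \gtrsim \zeta_t$ this ``deviation'' event has probability $\Theta(1)$ rather than $e^{-\zeta_t}$: the scale width $2^j$ swamps the deviation $\sqrt{2\zeta_t\,2^j}$. Hence each geometric scale does not contribute $e^{-\zeta_t}$, and the $O(\log t)$ union does not close.

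The paper avoids this by peeling over the $t$ \emph{unit-width} windows $s'\amt_k < \sas_{k,i} \le (s'+1)\amt_k$, $s'=0,\dots,t-1$. There the transported threshold is $(s'+1) - \sqrt{2\zeta_t(s'+1)}$ against a compensator just above $s'$, a genuine deviation with $\delta = \sqrt{2\zeta_t/s'} - 1/s'$, and \lemref{lem:mart-sum}.\ref{itm:mart2} gives $\exp(-\zeta_t + \sqrt{2\zeta_t/s'}) \le \exp(-\zeta_t+\sqrt{2\zeta_t}) = \epsilon_t$ per window --- the $\sqrt{1/2}$ inside the definition of $\zeta_t$ is calibrated to absorb precisely this additive $\sqrt{2\zeta_t}$ discretization loss in the exponent, not a $\log t$ union factor as you suggest. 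The union over the $t$ windows then costs a factor of $t$, and that is exactly what the gap between $\epsilon_t = t^{-3}\nban^{-1}$ and the summable $t^{-2}\nban^{-1}$ is reserved for; it is not spare margin on top of an $O(\log t)$ union. To repair your argument you must either shrink the scales to unit width (recovering the paper's proof) or accept a constant-factor loss in the exponent from geometric peeling, which with the given $\zeta_t$ no longer yields the stated constant $\pi^2/(6\nban)$.
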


\begin{proof}
Fix some arm $k$.
Fix some $s,\zeta > 0$. Regard the inequality
\[
x \ge \frac{s}{\amt} - \sqrt{2 \frac{s}{\amt} \zeta},
\]
for all positive $x$ and $\amt$.
This is a quadratic inequality in the parameter $\sqrt{\frac{1}{\amt}}$,
which holds if and only if
\[
\sqrt{\frac{1}{\amt}} \le \sqrt{\frac{\zeta}{2s}} + \sqrt{\frac{\zeta}{2s} + \frac{x}{s}}.
\]
In particular, whenever $x \ge \frac{s}{\amt} - \sqrt{2 \frac{s}{\amt} \zeta}$,
it holds that 
\begin{equation} \label{eq:bad-slow-1}
\amt \ge \left( \sqrt{\frac{\zeta}{2s}} + \sqrt{\frac{\zeta}{2s} + \frac{x}{s}}\right)^{-2}.
\end{equation}

Define for any $t \ge 0$ and $\zeta > 0$,
\[
\lbs_{\zeta,k,t} = \begin{cases}
\left( \sqrt{\frac{\zeta}{2 \sas_{k,t}}} + \sqrt{\frac{\zeta}{2\sas_{k,t}} + \frac{\xs_{k,t}}{ \sas_{k,t}}} \right)^{-2}
& \sas_{k,t} > 0 \\
0 & \sas_{k,t} = 0
\end{cases}.
\]
Note that $\lbs_{\zeta_t,k,t} = \lbs_{k,t}$.

For any integer $s' \ge 0$, let $\rvt_{s'}$ be the first iteration $t$ that $\sas_{k,t} > s' \amt_k$.
From the way we extended the values of $\sas_{k,t}$ to $t>n$ in equation~\eqref{eq:gslow-sas}, it holds that for any $s'$, $\rvt_{s'}$ is bounded by a constant.
For all $t > 0$,
\begin{align}
\Pr\left[\lbs_{k,t} > \amt_k\right] 
&=\Pr\left[\lbs_{\zeta_t,k,t} > \amt_k\right] \nonumber\\
&\le \Pr\left[\exists i \ge 0 \colon \sas_{k,i} \le t \amt_k,\ \lbs_{\zeta_t,k,i} > \amt_k \right] \nonumber\\
&\le \sum_{s' = 0}^{t-1} \Pr\left[\exists i \ge 0 \colon s' \amt_k < \sas_{k,i} \le (s'+1) \amt_k,\ \lbs_{\zeta_t,k,i} > \amt_k \right] \nonumber \\
&= \sum_{s' = 0}^{t-1} \Pr\left[\exists i \ge 0 \colon s' \amt_k < \sas_{k,i} \le (s'+1) \amt_k,\ 
\left( \sqrt{\frac{\zeta_t}{2\sas_{k,i}}} + \sqrt{\frac{\zeta_t}{2\sas_{k,i}} + \frac{\xs_{k,i}}{\sas_{k,i}}}\right)^{-2} > \amt_k \right] \nonumber \\
&\le \sum_{s' = 0}^{t-1} \Pr\left[ 
\left( \sqrt{\frac{\zeta_t}{2(s'+1)\amt_k}} + \sqrt{\frac{\zeta_t}{2(s'+1)\amt_k} + \frac{\xs_{k,\rvt_{s'}}}{(s'+1)\amt_k}}\right)^{-2} > \amt_k \right] \nonumber \\
&\le \sum_{s' = 0}^{t-1} \Pr\left[ 
\xs_{k,\rvt_{s'}} < \frac{(s'+1)\amt_k}{\amt_k} - \sqrt{2 \frac{(s'+1)\amt_k}{\amt_k} \zeta_t} \right] \label{eq:bad-slow-2} \\
&\le \sum_{s' = 1}^{t-1} \Pr\left[\xs_{k,\rvt_{s'}} < s'\left(1 - \sqrt{2 \zeta_t / s'} + \frac{1}{s'}\right)\right] \nonumber \\
&\le \sum_{s' = 1}^{t-1} \exp\left( - \frac{1}{2} s' \left( \sqrt{2 \zeta_t / s'} - \frac{1}{s'} \right)^2 \right) \label{eq:bad-slow-3} \\
&\le \sum_{s' = 1}^{t-1} \exp\left( -\zeta_t + \sqrt{2 \zeta_t / s'}  \right) \nonumber \\
&\le t \exp\left( -\zeta_t + \sqrt{2 \zeta_t}  \right) \nonumber \\
&= t \exp\left( -\ln(1/\epsilon_t)  \right) \nonumber \\
&= t^{-2}\nban^{-1}. \nonumber
\end{align}
where inequality~\eqref{eq:bad-slow-2} follows from \eqref{eq:bad-slow-1} by substituting $s = (s'+1) \amt_k$, $\amt = \amt_k$, 
$x = \xs_{k,\rvt_{s'}}$ and $\zeta = \zeta_t$;
and inequality~\eqref{eq:bad-slow-3} follows from Lemma~\ref{lem:mart-sum}, by substituting $X_i = \xs_{k,i} - \xs_{k,i-1}$, $Y_i = Z_i$, $P_i = \frac{\sas_{k,i} - \sas_{k,i-1}}{\amt_k}$, $r = s'$ and $\delta = \sqrt{2 \zeta_t / s'} - \frac{1}{s'}$.

This implies that
\[
\E\left[\left| \left\{ 0 \le t \le \nit \colon \lbs_{k,t} > \amt_k \right\} \right|\right]
\le \sum_{t=1}^{\nit} \Pr[\lbs_{k,t} > \amt_k]
\le  \sum_{t=1}^{\nit} \frac{1}{t^2 \nban}
\le \sum_{t=1}^\infty \frac{1}{t^2 \nban}
= \frac{\pi^2}{6 \nban}.
\]
\end{proof}

To conclude the proof, note the following:
the expected number of iterations $t$ that there exists $k$ such that $\lb_{k,t} = 0$, is at most the expected number of iterations that there exists $k$ such that $\lbf_{k,t} = 0$. 
This quantity is bounded by $\E[\rvt_0] = O(\max(1, \log \frac{1}{\amt_1}))$, by Lemma~\ref{lem:first-bd}.

The expected number of iterations $t$ such that there exists $k$ that $\lb_{k,t} > \amt_k$, is bounded by the expected number of iterations that there exists $k$ that $\lbs_{k,t} > \amt_k$, which is bounded by a constant, from Lemma~\ref{lem:bad-slow-bd}.
This concludes the proof.

\subsection{Proof of Lemma~\ref{lem:slow}} \label{sec:p-slow}
We begin with a lemma:

\begin{lemma} \label{lem:slow-grow}
	Fix an integer $k$ and real numbers $a$ and $\amt$ such that $1 \le k \le \nban$, $0 < a \le 1$, and $\frac{1}{\amt} \ge \frac{1+a}{\amt_k}$.
	Then,
	\[
	\E\left[\sum_{t \colon \lb_{k,t-1} \le  \amt} \indi_{\alloc_{k,t}\le \lbf_{k,t-1}} \alloc_{k,t} \right]
	\le \amt \left(\frac{1029 \zeta_n}{a^2} \right).
	\]
\end{lemma}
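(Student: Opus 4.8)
The plan is to reinterpret the target sum as the amount of ``probabilistic-estimate'' resource accumulated into $\sas_{k,\cdot}$ while the combined lower bound stays below $\amt$, and then to show that this accumulation saturates once enough such resource has been gathered. Observe first that each summand has $\alloc_{k,t}\le\lbf_{k,t-1}$, so it is exactly an increment of $\sas_{k,t}$, and moreover $\alloc_{k,t}\le\lbf_{k,t-1}\le\amt_k$, which places the arm in its linear regime, so that $\E[\scc_{k,t}\mid Z_{t-1}]=\alloc_{k,t}/\amt_k$. I would decompose the $\sas$-axis into blocks $[s'\amt,(s'+1)\amt)$ for $s'\ge 0$: within a block the counted increments sum to at most $2\amt$ (since $\sas$ is non-decreasing with steps of size $\le\amt$), and a block can contribute only if some counted iteration $i$ in it has $\lb_{k,i-1}\le\amt$, hence $\lbs_{k,i-1}\le\amt$. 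Thus it suffices to bound $\E S\le 2\amt\sum_{s'\ge0}\pr{\text{block }s'\text{ active}}$.

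Next I would characterize an active block through the quadratic equivalence established inside the proof of \lemref{lem:bad-slow-bd}: $\lbs_{\zeta,k,i}\le\amt$ exactly when $\xs_{k,i}\ge\frac{\sas_{k,i}}{\amt}-\sqrt{2\frac{\sas_{k,i}}{\amt}\zeta}$. Using $\sas_{k,i}\ge s'\amt$, the monotonicity of $y\mapsto y-\sqrt{2y\zeta}$ (valid once $s'\gtrsim\zeta_n$), the bound $\zeta_i\le\zeta_n$, and the monotonicity of $\xs_{k,\cdot}$, activity of block $s'$ forces $\xs_{k,\rv{\tau}_{s'}}\ge s'-\sqrt{2s'\zeta_n}$, where $\rv{\tau}_{s'}$ is the last iteration with $\sas_{k,\cdot}\le(s'+1)\amt$, i.e.\ with cumulative success-probability $\sum_i P_i=\sas_{k,\cdot}/\amt_k$ at most $r_{s'}:=(s'+1)\amt/\amt_k$. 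Since $1/\amt\ge(1+a)/\amt_k$ gives $r_{s'}\le(s'+1)/(1+a)$, the threshold $s'-\sqrt{2s'\zeta_n}$ sits strictly above the mean $r_{s'}$ for $s'$ large, so this is an upper-deviation event.

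I would then invoke the martingale relative-Chernoff bound \lemref{lem:mart-sum}.\ref{itm:mart1} with $X_i=\scc_{k,i}$ at counted iterations and $P_i=\alloc_{k,i}/\amt_k$; to meet its boundedness hypothesis I would first extend $\alloc,\scc,\sas,\xs$ past iteration $n$ exactly as in \secref{sec:p-notin-T}, so that every stopping time $\rv{\tau}_{s'}$ is finite. The crucial point is that the \emph{absolute} gap between threshold and mean is $s'-\sqrt{2s'\zeta_n}-r_{s'}=\Omega(a\,s')$, whence $\delta^2 r_{s'}=\text{gap}^2/r_{s'}=\Omega(a^2 s')$ and $\pr{\text{block }s'\text{ active}}\le\exp(-\Omega(a^2 s'))$, independently of the possibly tiny ratio $\amt/\amt_k$. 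Setting the threshold $s^\ast=\Theta(\zeta_n/a^2)$ (the first $s'$ where the gap dominates $\sqrt{2s'\zeta_n}$), the blocks $s'<s^\ast$ are bounded by probability one and give the main term $2\amt\,s^\ast=\Theta(\amt\zeta_n/a^2)$, while the tail $\sum_{s'\ge s^\ast}\exp(-\Omega(a^2 s'))$ is a geometric series summing to $O(1/a^2)\le O(\zeta_n/a^2)$. Combining yields $\E S=O(\amt\zeta_n/a^2)$, and tracking the (conservative) constants through the choice of $\delta$, the overshoot factor $2$ per block, and the exact $s^\ast$ produces the stated coefficient $1029$.

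The main obstacle is the non-monotonicity of $\lbs_{k,t}$: because the probabilistic bound can dip back below $\amt$ after having exceeded it, one cannot simply argue that the sum terminates at the first crossing, and must instead pay for a union bound over all $\sas$-blocks together with a martingale (rather than i.i.d.) concentration inequality. The subtle quantitative point is to phrase the Chernoff estimate through the \emph{absolute} gap $\Omega(a\,s')$ rather than the relative deviation, so that the exponent scales like $a^2 s'$ and the geometric tail converges uniformly in $\amt/\amt_k$; getting this right is exactly what keeps the final bound proportional to $\amt\zeta_n/a^2$ (and not, say, $\amt_k/a^2$), and is what the explicit constant $1029$ is ultimately bookkeeping.
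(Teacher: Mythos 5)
Your proposal is correct and follows essentially the same route as the paper's proof: extend the process past iteration $n$, partition the $\sas_{k,\cdot}$-axis into blocks of width $\amt$ charging at most $2\amt$ each, reduce block activity to an upper-deviation event for $\xs_{k,\cdot}$ at the block-end stopping time via the quadratic reformulation of $\lbs_{k,t}\le\amt$, and control it with the martingale Chernoff bound of Lemma~\ref{lem:mart-sum}, paying the first $\Theta(\zeta_n/a^2)$ blocks in full. The only (immaterial) difference is in the tail: the paper chooses $\delta$ so that each block beyond the threshold is active with probability at most $1/n$ and multiplies by the number of blocks, whereas you sum the geometric series $\sum\exp(-\Omega(a^2 s'))$; both yield the same $O(\amt\zeta_n/a^2)$ bound.
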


\begin{proof}
	Assume that $\sas_{k,t}$ and $\xs_{k,t}$ are defined also for $t > n$, as defined in equation~\eqref{eq:gslow-sas}.
	Let $\alpha = \left( \frac{32}{a} \right)^2$,
	and let 
	\[
	s' = \left\lceil 1 + \alpha \zeta_n \right\rceil.
	\]
	First, for any $s \ge s'$, and for any $x \le s + \sqrt{3 s \log n}$, it holds that
	\begin{align}
	\left( \sqrt{\frac{\zeta_n}{2 (s-1)}} + \sqrt{\frac{\zeta_n}{2 (s-1)} + \frac{x}{s-1}} \right)^2
	&= \frac{s}{s-1} \left( \sqrt { \frac{\zeta_n}{2 s}} + \sqrt{\frac{\zeta_n}{2 s} + \frac{x}{s}} \right)^2 \nonumber\\
	&\le \frac{s}{s-1} \left( \sqrt { \frac{\zeta_n}{2 s}} + \sqrt{\frac{\zeta_n}{2 s} + 1 + \sqrt{\frac{3 \log n}{s}}} \right)^2 
	\nonumber\\
	&\le \frac{s}{s-1} \left( \sqrt { \frac{\zeta_n}{2 s}} + \sqrt{\frac{\zeta_n}{2 s} + 1 + \sqrt{\frac{\zeta_n}{s}}} \right)^2 
	\nonumber\\
	&\le \frac{s}{s-1} \left( \sqrt { \frac{\zeta_n}{2 s}} + \sqrt{\sqrt{\frac{\zeta_n}{2 s}} + 1 + \sqrt{\frac{\zeta_n}{s}}} \right)^2 
	\nonumber\\
	&\le \frac{s}{s-1} \left( \sqrt { \frac{\zeta_n}{2 s}} + \frac{1}{2} \sqrt{\frac{\zeta_n}{2 s}} + 				
	\frac{1}{2}\sqrt{\frac{\zeta_n}{s}} + 1 \right)^2 \label{eq:slowg-0}\\
	&\le \left( 1 + \frac{1}{s-1} \right) \left( 1 + 15 \sqrt { \frac{\zeta_n}{s}} \right) \nonumber \\
	&\le \left( 1 + \frac{1}{\alpha} \right) \left( 1 + 15 \sqrt { \frac{1}{\alpha}} \right)\nonumber \\
	&= 1 + a, \label{eq:slowg-1}
	\end{align}
	where inequality~\eqref{eq:slowg-0} follows from the fact that $\sqrt{1+x} \le 1 + \frac{x}{2}$ for $x \ge 0$.
	
	For any integer $s \ge s'$, let $\rvt_{s}$ be the last $t$ such that $\sas_{k,t} \le s \amt$, and $A_s$ be the event
	\[
	\left( \sqrt{\frac{\zeta_n}{2 (s-1) \amt}} + \sqrt{\frac{\zeta_n}{2 (s-1)\amt} + \frac{\xs_{k,\rvt_{s}}}{(s-1)\amt}} \right)^2
	\ge \frac{1}{\amt}. 
	\]
	Substituting $x = \xs_{k,\rvt_{s}}$ and $s = \frac{s}{1+a}$ in inequality~\eqref{eq:slowg-1},
	we obtain that whenever
	$\xs_{k,\rvt_{s}} \le \frac{s}{1+a} + \sqrt{3 \frac{s}{1+a} \log n}$,
	$A_s$ does not hold.
	
	Applying Lemma~\ref{lem:mart-sum} with $X_t = \xs_{k,t} - \xs_{k,t-1}$,
	$Y_t = Z_t$, $P_t = \frac{\sas_{k,t} - \sas_{k,t-1}}{\amt_k}$,
	$r = \frac{s}{1+a}$ and $\delta = \sqrt{\frac{3 (1+a) \log n}{s}}$, it holds that
	\[
	\Pr[A_s]
	\le \Pr \left[\xs_{k,\rvt_{s}} > \frac{s}{1+a} + \sqrt{3 \frac{s}{1+a} \log n}\right]
	\le \exp\left(-\frac{\delta^2 s}{3(1+a)}\right)
	= \frac{1}{n}.
	\]
	
	This suffices to complete the proof:
	\begin{align*}
	&\E\left[\sum_{1 \le t \le n \colon \frac{1}{\lb_{k,t-1}} \ge  \frac{1}{\amt}} \indi_{\alloc_{k,t}\le \lbf_{k,t-1}} \alloc_{k,t} \right] \\
	&= \E\left[\sum_{1 \le t \le \min(n,\rvt_{s'}+1) \colon \frac{1}{\lb_{k,t-1}} \ge \frac{1}{\amt}} \indi_{\alloc_{k,t}\le \lbf_{k,t-1}} \alloc_{k,t} \right]
	+ \E\left[\sum_{\min(n,\rvt_{s'}+1) < t \le n  \colon \frac{1}{\lb_{k,t-1}} \ge  \frac{1}{\amt}} \indi_{\alloc_{k,t}\le \lbf_{k,t-1}} \alloc_{k,t} \right] \\
	&\le \amt s' + \E\left[ \sum_{s=s'+1}^{n-1} 
	\sum_{\substack{t \colon \\ (s - 1)\amt_k < \sas_{k,t-1} \le s \amt_k}} \indi_{\alloc_{k,t}\le \lbf_{k,t-1}} \indi_{\frac{1}{\lb_{k,t-1}} \ge  \frac{1}{\amt}} \alloc_{k,t} \right] \\
	&\le \amt s' + \E\left[ \sum_{s=s'+1}^{n-1} 
	\sum_{\substack{t \colon \\ (s - 1)\amt_k < \sas_{k,t-1} \le s \amt_k}}
	\indi_{A_s}
	\indi_{\alloc_{k,t}\le \lbf_{k,t-1}} 
	\alloc_{k,t} \right] \\
	&\le \amt s' + \E\left[ \sum_{i=s'+1}^{n-1} 2\amt \cdot \indi_{A_s} \right] \\
	&\le \amt(s' + 2).
	\end{align*}
\end{proof}

Lemma~\ref{lem:below} implies that for any $j$,
$1 \le j \le \nban$, and for any $t \in T$,
if $\lbf_{k,t-1} > \amt_j$ then there are at least $j$ arms $i$ for which $\lb_{i,t-1} \le \amt_j < \lb_{k,t-1}$.
Therefore,
\begin{align}
\sum_{t \in T \colon k \in B_t \cup C_t}
&\begin{cases}
\min(\lbf_{k,t-1}, M_{k,t}) (1/\nu_{\nal+1}-1/\nu_{k}) & |A_t| = \nal \\
\min(\lbf_{k,t-1}, M_{k,t}) (1/\nu_{\nal}-1/\nu_{k}) & |A_t| < \nal
\end{cases} \nonumber\\
&\le \sum_{\substack{t \in T \colon k \in B_t \cup C_t \\ \lbf_{k,t-1} \le \amt_{\nal+1}}}
\min(\lbf_{k,t-1}, M_{k,t}) (1/\nu_{\nal+1}-1/\nu_{k}) \nonumber \\
&+ \sum_{\substack{t \in T \colon k \in B_t \cup C_t \\ \lbf_{k,t-1} \le \amt_{\nal}}}
\min(\lbf_{k,t-1}, M_{k,t}) (1/\nu_{\nal}-1/\nu_{k}). \label{eq:slow-0}
\end{align}

Take some $k' < k$.
Let $a = \min\left(1, \frac{\amt_k}{\amt_{k'}} - 1\right)$.
Then
\begin{align}
&\E \left[\sum_{\substack{t \in T \colon k \in B_t \cup C_t \\ \lb_{k,t-1} \le \amt_{k'}}}
\min(\lbf_{k,t-1}, M_{k,t}) \right] \\
&= \E \left[\sum_{\substack{t \in T \colon k \in B_t \\ \lb_{k,t-1} \le \amt_{k'}}}
\lbf_{k,t-1} \right]
+ \E \left[ \sum_{\substack{t \in T \colon k \in C_t \\ \lb_{k,t-1} \le \amt_{k'}}}
\indi_{\alloc_{k,t}\le \lbf_{k,t-1}} \alloc_{k,t} \right] \nonumber\\
&= 2 \E \left[ \sum_{\substack{t \in T \colon k \in B_t \\ \lb_{k,t-1} \le \amt_{k'}}}
\indi_{\alloc_{k,t}= \lbf_{k,t-1}} \lbf_{k,t-1}\right]
+ \E \left[ \sum_{\substack{t \in T \colon k \in C_t \\ \lb_{k,t-1} \le \amt_{k'}}}
\indi_{\alloc_{k,t}\le \lbf_{k,t-1}} \alloc_{k,t} \right] \label{eq:slow-1} \\
&= 2 \E \left[ \sum_{\substack{t \in T \colon k \in B_t \\ \lb_{k,t-1} \le \amt_{k'}}}
\indi_{\alloc_{k,t}\le \lbf_{k,t-1}} \alloc_{k,t}\right]
+ \E \left[ \sum_{\substack{t \in T \colon k \in C_t \\ \lb_{k,t-1} \le \amt_{k'}}}
\indi_{\alloc_{k,t}\le \lbf_{k,t-1}} \alloc_{k,t} \right] \label{eq:slow-11} \\
&\le 2 \E \left[ \sum_{\substack{t \colon 1 \le t \le n \\ \lb_{k,t-1} \le \amt_{k'}}}
\indi_{\alloc_{k,t}\le \lbf_{k,t-1}} \alloc_{k,t} \right] \nonumber \\
&\le 2  \amt_{k'}(\frac{1029 \zeta_n}{a^2}), \label{eq:slow-2}
\end{align}
where inequality~\eqref{eq:slow-1} follows from the fact that if $k \in B_t$ then $k$ is allocated according to case $B$, and the probability that $M_{k,t} = \lbf_{k,t-1}$ conditioned on $k \in B_t$ is $1/2$, independently on $\lbf_{k,t-1}$; \eqref{eq:slow-11} follows from the fact that whenever $k \in B_t$, it never holds that $\alloc_{k,t}< \lbf_{k,t-1}$; 
and inequality~\eqref{eq:slow-2} follows from Lemma~\ref{lem:slow-grow}.

If $a < 1$ this implies that
\begin{align*}
\E \left[ \sum_{\substack{t \in T \colon k \in B_t \cup C_t \\ \lbf_{k,t-1} \le \amt_{k'}}}
\min(\lbf_{k,t-1}, M_{k,t}) (1/\nu_{k'}-1/\nu_{k}) \right]
&\le \amt_{k'}(\frac{2058 \zeta_n}{a^2})(1/\amt_{k'}-1/\amt_{k}) \\
&\le \amt_{k}(\frac{2058 \zeta_n}{a^2})(1/\amt_{k'}-1/\amt_{k}) \\
&= \frac{2058 \zeta_n}{a} \\
&= \frac{2058 \zeta_n}{\amt_k/\amt_{k'}-1}.
\end{align*}

If $a = 1$, then
\begin{align*}
\E \left[\sum_{\substack{t \in T \colon k \in B_t \cup C_t \\ \lbf_{k,t-1} \le \amt_{k'}}}
\min(\lbf_{k,t-1}, M_{k,t}) (1/\nu_{k'}-1/\nu_{k}) \right]
&\le \amt_{k'}(\frac{2058 \zeta_n}{a^2})(1/\amt_{k'}-1/\amt_{k}) \\
&= 2058 \zeta_n(1-\frac{\amt_{k'}}{\amt_{k}})\\
&\le 2058 \zeta_n.
\end{align*}

Therefore, for any value of $a$,
\begin{align*}
\E \left[ \sum_{\substack{t \in T \colon k \in B_t \cup C_t \\ \lbf_{k,t-1} \le \amt_{k'}}}
\min(\lbf_{k,t-1}, M_{k,t}) (1/\nu_{k'}-1/\nu_{k}) \right]
&\le \frac{2058 \zeta_n}{\amt_k/\amt_{k'}-1} + 2058 \zeta_n \\
&= 2058 \zeta_n \frac{\amt_k}{\amt_k - \amt_{k'}}.
\end{align*}

This, together with inequality~\eqref{eq:slow-0}, conclude the proof.

\section{Proof of Theorem~\ref{thm:lb-main}} \label{sec:pr:lb}

Take an algorithm $A$, and we can assume that it is deterministic, since we are bounding an expected regret over all inputs.
Notice that the optimal allocation strategy is to fully allocate all the arms $1, \dots, r$, and additionally allocate some of the arms $r+1, \dots, 2r$.
Therefore, the expected regret on iteration $t$ satisfies
\begin{align} 
\E[\regret_t \mid \alloc_{1,t} \cdots \alloc_{\nban,t}]
&\ge \sum_{k \le r \colon \alloc_{k,t} > \amt_k} |\alloc_{k,t} - \amt_k| \frac{r}{2}
+ \sum_{k \le r \colon \alloc_{k,t} < \amt_k} |\amt_k - \alloc_{k,t}| \left(\frac{1}{\amt_k} - \frac{r}{2}\right) \nonumber \\
&\ge \frac{r}{2} \sum_{k = 1}^r |\amt_k - \alloc_{k,t}|, \label{eq:lbm-0}
\end{align}
where the sum over $\alloc_{k,t} > \amt_k$ corresponds to over-allocation of arms $k \le r$, and the sum over $\alloc_{k,t} > \amt_k$ corresponds to under-allocation of these arms.

The idea of the proof is to show that on any iteration $t$, and for any arm $k \le r$, the algorithm cannot estimate the value of $\amt_k$ with an error lower than $\Omega(1/(rt))$, therefore, the expected value of $|\amt_k - \alloc_{k,t}|$ will be $\Omega(1/(rt))$, and \eqref{eq:lbm-0} will imply that the regret on iteration $t$ will be $\Omega(r/t)$.

We start by giving a definition of a distance between two distributions.
Let $\Omega $ be a finite sample space, and $\mu_1, \mu_2$ be distribution measures over $\Omega$.
The \emph{total variation distance} between $\mu_1$ and $\mu_2$ is defined as
\[
d(\mu_1, \mu_2)
= \frac{1}{2} \sum_{\omega \in \Omega} | \mu_1(\omega) - \mu_2(\omega)|
= \max_{S \subseteq \Omega} |\mu_1(S) - \mu_2(S)|.
\]
This distance is subadditive in terms of a Cartesian product, as stated in the following lemma.

\begin{lemma} \label{lem:statistical-dist-marginals}
	Let $\Omega_1, \dots, \Omega_t$ be sample spaces,
	and let $\Omega = \Omega_1 \times \cdots \times \Omega_t$.
	Let $\mu$ and $\eta$ be measures over $\Omega$,
	and let $\mu_i$ and $\eta_i$ be the $\Omega_i$-marginals of $\mu$ and $\eta$ respectively, for all $1 \le i \le t$.
	Fix an $\epsilon \ge 0$. Assume that for any $1 \le i \le t$, and for any
	$\omega_1 \in \Omega_1, \dots, \omega_{i-1} \in \Omega_{i-1}$,
	$(\mu_i | \omega_1, \dots, \omega_{i-1})$ and $(\eta_i | \omega_1, \dots, \omega_{i-1})$ have a distance of at most $\epsilon$,
	where $(\mu_i | \omega_1, \dots, \omega_{i-1})$ is $\mu_i$ conditioned on $\omega_1, \dots, \omega_{i-1}$,
	and $(\eta_i | \omega_1, \dots, \omega_{i-1})$ is defined similarly.
	Then, $d(\mu , \eta) \le t \epsilon$.
\end{lemma}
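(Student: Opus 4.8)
The plan is to prove the claim by induction on $t$, the number of factors. The base case $t = 1$ is immediate: applying the hypothesis with $i = 1$ (empty conditioning) gives exactly $d(\mu_1, \eta_1) \le \epsilon$, and for a single factor we have $\mu = \mu_1$ and $\eta = \eta_1$, so $d(\mu, \eta) \le \epsilon = 1 \cdot \epsilon$.

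For the inductive step I would peel off the first coordinate. Write each $\omega \in \Omega$ as $(\omega_1, \omega')$ with $\omega_1 \in \Omega_1$ and $\omega' \in \Omega_2 \times \cdots \times \Omega_t$, and factor $\mu(\omega) = \mu_1(\omega_1)\, \mu^{\omega_1}(\omega')$, where $\mu^{\omega_1}$ denotes the conditional law of $(\omega_2, \dots, \omega_t)$ given that the first coordinate equals $\omega_1$ (and $\eta^{\omega_1}$ is defined analogously). The key algebraic step is the add-and-subtract estimate
\[
\left| \mu_1(\omega_1)\mu^{\omega_1}(\omega') - \eta_1(\omega_1)\eta^{\omega_1}(\omega') \right|
\le \mu_1(\omega_1)\left| \mu^{\omega_1}(\omega') - \eta^{\omega_1}(\omega') \right| + \left| \mu_1(\omega_1) - \eta_1(\omega_1) \right| \eta^{\omega_1}(\omega').
\]
Summing over all $\omega$ and dividing by two splits $d(\mu, \eta)$ into two pieces, which I would then bound separately.

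For the second piece, the inner sum $\sum_{\omega'} \eta^{\omega_1}(\omega') = 1$ collapses it to $\tfrac{1}{2}\sum_{\omega_1} |\mu_1(\omega_1) - \eta_1(\omega_1)| = d(\mu_1, \eta_1)$, which is at most $\epsilon$ by the hypothesis at $i = 1$. For the first piece, the inner sum over $\omega'$ equals $2\, d(\mu^{\omega_1}, \eta^{\omega_1})$, leaving the weighted average $\sum_{\omega_1} \mu_1(\omega_1)\, d(\mu^{\omega_1}, \eta^{\omega_1})$. The crux is to invoke the induction hypothesis to get $d(\mu^{\omega_1}, \eta^{\omega_1}) \le (t-1)\epsilon$; since $\sum_{\omega_1} \mu_1(\omega_1) = 1$, this bounds the first piece by $(t-1)\epsilon$, and adding the two pieces yields $d(\mu, \eta) \le (t-1)\epsilon + \epsilon = t\epsilon$.

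The main obstacle — really a bookkeeping point rather than a deep one — is the verification that conditioning on $\omega_1$ preserves the hypothesis for the remaining $t-1$ factors. Concretely, for $2 \le i \le t$ and any $\omega_2, \dots, \omega_{i-1}$, the conditional of coordinate $i$ under $\mu^{\omega_1}$ is precisely $(\mu_i \mid \omega_1, \omega_2, \dots, \omega_{i-1})$, which lies within $\epsilon$ of its $\eta$ counterpart by the original assumption; hence $\mu^{\omega_1}$ and $\eta^{\omega_1}$ satisfy the same conditional-distance hypothesis over $t-1$ factors, and the induction applies. A minor degenerate case is when $\mu_1(\omega_1) = 0$, where $\mu^{\omega_1}$ is undefined; this is harmless, since the corresponding term carries the weight $\mu_1(\omega_1) = 0$ and $\mu^{\omega_1}$ may be set arbitrarily. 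Everything else is the routine triangle inequality together with the factorization of the product measures, so I do not anticipate any real difficulty beyond organizing this decomposition cleanly.
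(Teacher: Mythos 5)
Your proof is correct. Note that the paper itself states Lemma~\ref{lem:statistical-dist-marginals} without any proof, so there is no in-paper argument to compare against; your induction on $t$, peeling off the first coordinate via the factorization $\mu(\omega)=\mu_1(\omega_1)\,\mu^{\omega_1}(\omega')$ and the add-and-subtract bound, is the standard chain-rule (hybrid) decomposition of total variation, and the resulting estimate $d(\mu,\eta)\le \sum_{\omega_1}\mu_1(\omega_1)\,d(\mu^{\omega_1},\eta^{\omega_1})+d(\mu_1,\eta_1)\le (t-1)\epsilon+\epsilon$ is exactly right. Your verification that the conditional measures $\mu^{\omega_1},\eta^{\omega_1}$ inherit the hypothesis over the remaining $t-1$ factors is the one point that genuinely needs saying, and you say it. The only loose end is symmetric to the degenerate case you do address: when $\eta_1(\omega_1)=0$ but $\mu_1(\omega_1)>0$, the measure $\eta^{\omega_1}$ is also undefined; the same convention (define it arbitrarily, e.g.\ equal to $\mu^{\omega_1}$, so that the corresponding distance is $0$) disposes of it, and this is no less rigorous than the lemma statement itself, which already presumes such conditionals are meaningful. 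No gap.
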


Additionally, if $f \colon \Omega \to \mathbb{R}$ is a function, and $\mu_1,\mu_2$ are measures over $\Omega$, we can bound $\E_{\mu_1} f - \E_{\mu_2} f$ in terms of $d(\mu_1, \mu_2)$, as described in the following lemma:

\begin{lemma} \label{lem:func-dist}
	Let $\Omega$ be a sample space, let $a > 0$, let $f \colon \Omega \to [0, a]$ and let $\mu_1, \mu_2$ be measures over $\omega$. Then
	\[
	\E_{\omega \sim \mu_1} f(\omega) - \E_{\omega \sim \mu_2} f(\omega) \le a d(\mu_1, \mu_2).
	\]
\end{lemma}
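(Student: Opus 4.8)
The plan is to expand the difference of expectations as a single sum against the signed measure $\mu_1-\mu_2$, and then exploit the boundedness of $f$ to concentrate all of its weight on the set where the signed measure is positive. The key observation is purely extremal: among all functions valued in $[0,a]$, the signed sum $\sum_\omega f(\omega)(\mu_1(\omega)-\mu_2(\omega))$ is maximized by taking $f\equiv a$ on the ``positive'' part and $f\equiv 0$ on the ``negative'' part.

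First I would write, using finiteness (or countability) of $\Omega$,
\[
\E_{\omega\sim\mu_1} f(\omega) - \E_{\omega\sim\mu_2} f(\omega)
= \sum_{\omega\in\Omega} f(\omega)\bigl(\mu_1(\omega)-\mu_2(\omega)\bigr).
\]
Next I would introduce the set $\Omega^+ = \{\omega\in\Omega \colon \mu_1(\omega) \ge \mu_2(\omega)\}$. Since $f(\omega)\ge 0$, every term with $\mu_1(\omega)-\mu_2(\omega)<0$ is non-positive and may be discarded; and since $f(\omega)\le a$, each remaining term is bounded by $a\bigl(\mu_1(\omega)-\mu_2(\omega)\bigr)$. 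This yields
\[
\sum_{\omega\in\Omega} f(\omega)\bigl(\mu_1(\omega)-\mu_2(\omega)\bigr)
\le a \sum_{\omega\in\Omega^+}\bigl(\mu_1(\omega)-\mu_2(\omega)\bigr)
= a\bigl(\mu_1(\Omega^+)-\mu_2(\Omega^+)\bigr).
\]

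Finally I would invoke the maximum-over-subsets characterization of the total variation distance already recorded in the definition, namely $d(\mu_1,\mu_2) = \max_{S\subseteq\Omega}|\mu_1(S)-\mu_2(S)|$, to conclude that $\mu_1(\Omega^+)-\mu_2(\Omega^+) \le d(\mu_1,\mu_2)$, and hence the whole expression is at most $a\,d(\mu_1,\mu_2)$, as claimed. I do not anticipate a genuine obstacle here: the only point requiring a moment of care is justifying the two one-sided bounds simultaneously (dropping negative-weight terms because $f\ge 0$, and capping positive-weight terms because $f\le a$), and observing that these are exactly compatible with the subset $\Omega^+$ that realizes the signed mass.
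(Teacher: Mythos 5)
Your proposal is correct and follows essentially the same route as the paper: expand the difference of expectations as a sum against the signed measure $\mu_1-\mu_2$, drop the negative-weight terms using $f\ge 0$, cap the positive-weight terms by $a$, and bound the resulting signed mass of the positive part by the max-over-subsets characterization of $d(\mu_1,\mu_2)$. The paper's one-line chain of inequalities is exactly this argument, just written more tersely.
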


\begin{proof}
	It holds that 
	\[
	\E_{\mu_1} f(\omega) - \E_{\mu_2} f(\omega) 
	= \sum_{\omega \in \Omega} (\mu_1(\omega) - \mu_2(\omega)) f(\omega)
	\le \sum_{\omega \colon \mu_1(\omega) \ge \mu_2(\omega)} (\mu_1(\omega) - \mu_2(\omega)) a
	\le a d(\mu_1, \mu_2).
	\]
\end{proof}

Fix $0 \le t \le \nit$, $k \le r$, and let $\Omega = \{0,1\}^{t \nban}$ be a sample space that contains vectors $(x_{k,i})_{1 \le k \le \nban,\ 1 \le i \le t}$.
Given two values $\frac{1}{2r} \le a < b \le \frac{1}{r}$,
let $\mu$ be a distribution over $\Omega$, which equals the distribution over $(\scc_{k,i})_{1 \le k \le \nban,\ 1 \le i \le t}$ when $\amt$ is drawn from $(\mathcal{D} \mid \amt_k = a)$. Formally, for any $x \in \Omega$,
\begin{equation} \label{eq:lb-mudec}
\mu(x) 
= \Pr_{\amt \sim \mathcal{D}}[\scc_{k,i} = x_{k,i} \text{ for all } 1 \le i \le t, 1 \le k \le \nban \mid \amt_k = a].
\end{equation}
Similarly, let $\eta$ be the corresponding distribution conditioned on $\amt_k = b$.
We can apply Lemma~\ref{lem:statistical-dist-marginals} by substituting $t=t$, $\epsilon = \left( 1 - \frac{a}{b} \right)$, and substituting $\Omega_i$ with the marginal of $\Omega$ on the coordinates $\{ (k,i) \colon 1 \le k \le \nban \}$, for all $1 \le i \le t$. The lemma implies that $d(\mu, \eta) \le t \left( 1 - \frac{a}{b} \right)$, and this quantity is at most $2tr(b-a)$.
Note that for any $x \in \Omega$, the value of $\alloc_{k,t+1}$ is deterministically defined given that $x$ occurs.
Therefore, we can define a function $f \colon \Omega \to [0, b-a]$ by
\[
f(x) = \begin{cases}
0 & \text{if  $\alloc_{k,t+1} < a$ given $x$} \\
\alpha & \text{if $\alloc_{k,t+1} = a + \alpha$ given $x$, for some $0 \le \alpha \le b-a$} \\
b-a & \text{if  $\alloc_{k,t+1} > b$ given $x$}
\end{cases}.
\]
Lemma~\ref{lem:func-dist} implies that 
\begin{align*}
\E_{\mu}[|\alloc_{k,t+1} - \amt_k|] + \E_{\eta}[|\alloc_{k,t+1} - \amt_k|]
&= \E_{\mu}[|\alloc_{k,t+1} - a|] + \E_{\eta}[|\alloc_{k,t+1} - b|] \\
&\ge \E_{x \sim \mu} f(x) + \left( (b-a) - \E_{x \sim \eta} f(x) \right) \\
&\ge (b-a) - (b-a) d(\mu, \eta) \\
&\ge (b-a)(1 - 2 t r (b-a)).
\end{align*}
Therefore, for any $t \ge 1$,
\begin{align*}
&2 \E_{\amt \sim \mathcal{D}}[|\alloc_{k,t+1} - \amt_k|] \\
&= 2 \cdot 2r \int_{a = \frac{1}{2r}}^{\frac{1}{r}} \E_{\amt \sim \mathcal{D}}[|\alloc_{k,t+1} - \amt_k| \mid \amt_k = a] da\\
&\ge 2r \int_{a = \frac{1}{2r}}^{\frac{1}{r} - \frac{1}{4rt}} 
\left( \E_{\amt \sim \mathcal{D}}[|\alloc_{k,t+1} - \amt_k| \mid \amt_k = a]
+ \E_{\amt \sim \mathcal{D}}\left[|\alloc_{k,t+1} - \amt_k| \middle\vert \amt_k = a + \frac{1}{4rt}\right]\right) da\\
&\ge 2r \int_{a = \frac{1}{2r}}^{\frac{1}{r} - \frac{1}{4rt}} \frac{1}{4rt}\left(1 - \frac{2 r t}{4rt} \right) da \\
&= \frac{1}{8 r t} - \frac{1}{16 r t^2}.
\end{align*}
Combining with inequality~\eqref{eq:lbm-0}, this implies that
\[
\E R^{(n)}
\ge \sum_{t=1}^{n-1} \frac{r}{32 t} - \sum_{t=1}^{n-1} \frac{r}{64 t^2}
\ge \frac{r}{32} \left( H(n-1) - \frac{\pi^2}{12} \right).
\]

\section{Regret Lower Bound with respect to the parameters $\amt_1 \cdots \amt_{\nban}$}\label{sec:lb-nu}

\begin{theorem} \label{thm:lb-rem}
	Fix integers $\nban$ and $\nal$ such that $\nban > \nal + 1$, and fix numbers $v_1, \dots, v_\nal$, such that 
	$v_1 + \cdots + v_{\nal} < 1$.
	Let $B$ be the set of all vectors $\amt = (\amt_1, \dots,
	\amt_\nban) \in \mathbb{R}^\nban$, such that: 
	%	\begin{enumerate}
	%		\item
	(1) For all $k \le \nal$, it holds that $\amt_k =
	v_k$, and 
	%item
	(2) For all $\nal < k \le \nban$, it holds that
	$\amt_k > 1$. 
	%	\end{enumerate}
	For any $\amt \in B$, define $\amt^* = \min_{\nal + 1 \le k \le \nban} \amt_k$.
	Additionally, define 
	\(
	D(p || q) = p \log \frac{p}{q} + (1-p) \log \frac{1-p}{1-q}.
	\) 
	Assume an anytime algorithm $A$, such that for all $\amt \in B$, and for all $a > 0$, 
	$\lim_{n \to \infty} \E R^{(n)}(A, \amt)/n^a = 0$. 
	Define 	$C(a) = \max \left( \frac{4(1-a^{-1})^2}{4(1-a^{-1})^2 + 1},\,
	\frac{a^{-1}}{- 4\log (1-a^{-1})} \right)$. 
	Then, for all $\amt \in B$, %it holds that
	% \begin{align}
	% \liminf_{n \to \infty} \frac{\E R^{(n)}(A, \amt)}{\log n} 
	% &\ge \sum_{k \colon \nal + 1 \le k \le \nban,\ \amt_k > \amt^*} \frac{1/\amt^* - 1/\amt_k}{D(1/\amt_k || 1/\amt^*)} 
	% \label{eq:lb-rem-main-0} \\
	% &\ge  C(\amt^*)
	% \sum_{k \colon \nal + 1 \le k \le \nban,\ \amt_k \ne \amt^*} \frac{\amt_k}{\amt_k - \amt^*} \label{eq:lb-rem-main-1}
	% \end{align}
	\begin{equation}
	\liminf_{n \to \infty} \frac{\E R^{(n)}(A, \amt)}{\log n} 
	\ge \!\!\! \sum_{k \colon \nal + 1 \le k \le \nban,\ \amt_k > \amt^*} \!\!\!\frac{1/\amt^* - 1/\amt_k}{D(1/\amt_k || 1/\amt^*)} 
	%\label{eq:lb-rem-main-0} \\
	\ge  C(\amt^*)
	\!\!\!\sum_{k \colon \nal + 1 \le k \le \nban,\ \amt_k \ne \amt^*} \!\!\!\frac{\amt_k}{\amt_k - \amt^*}~. \label{eq:lb-rem-main-0}
	\end{equation}
	
	%where
	%is bounded away from $0$ 
	%whenever $a\gg 1$.%  is bounded away from $1$.
\end{theorem}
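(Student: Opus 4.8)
The plan is to run the classical instance-dependent (Lai--Robbins / Garivier--Kaufmann) argument, adapted to continuous, resource-constrained allocations. Fix $\amt\in B$, write $\rem=1-\sum_{k\le\nal}v_k$ for the leftover mass the optimal policy pours into the cheapest expensive arm, and set $N_{k,n}=\sum_{t=1}^n\alloc_{k,t}$, the total resource the algorithm gives arm $k$. Since every arm $k>\nal$ has $\amt_k>1\ge\alloc_{k,t}$, its per-round reward is exactly $\alloc_{k,t}/\amt_k$, so it never saturates and any arm attaining $\amt^*$ can absorb extra mass at marginal value $1/\amt^*$. First I would reduce the regret to the mass spent on suboptimal expensive arms: fix an arm attaining $\amt^*$ and, in each round, compare the algorithm's allocation with the one obtained by moving all mass on arms $k>\nal$ with $\amt_k>\amt^*$ onto the $\amt^*$-arm (feasible, as the total is at most $1<\amt^*$). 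This modified allocation is dominated by the optimum, and its reward exceeds the algorithm's by exactly $\sum_{k:\amt_k>\amt^*}\alloc_{k,t}(1/\amt^*-1/\amt_k)$, so
\[
\E_\amt \regret^{(n)} \ge \sum_{k:\amt_k>\amt^*}\left(\frac{1}{\amt^*}-\frac{1}{\amt_k}\right)\E_\amt[N_{k,n}].
\]

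Next I would lower bound each $\E_\amt[N_{k,n}]$ by a change of measure. Fix an expensive arm $k$ with $\amt_k>\amt^*$ and choose $\amt'\in B$ agreeing with $\amt$ except that $\amt'_k\in(1,\amt^*)$; then arm $k$ is the unique best expensive arm under $\amt'$, so the sub-polynomial-regret hypothesis forces $\Pr_{\amt'}[N_{k,n}<\rem n/2]=o(n^{\epsilon-1})$ for every $\epsilon>0$, whereas consistency under $\amt$ (where arm $k$ is suboptimal) forces $\Pr_\amt[N_{k,n}\ge\rem n/2]\to0$ (otherwise $\E_\amt[N_{k,n}]$, hence $\E_\amt\regret^{(n)}$, would grow linearly). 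The only observations whose law differs between $\amt$ and $\amt'$ are those of arm $k$, namely $\mathrm{Bernoulli}(\alloc_{k,t}/\amt_k)$ versus $\mathrm{Bernoulli}(\alloc_{k,t}/\amt'_k)$; since $M\mapsto D(M/\amt_k\,\|\,M/\amt'_k)$ is convex and vanishes at $0$, it is bounded by $M\,D(1/\amt_k\,\|\,1/\amt'_k)$ for $M\le1$, so the accumulated relative entropy is at most $D(1/\amt_k\,\|\,1/\amt'_k)\,\E_\amt[N_{k,n}]$. Feeding the event $\{N_{k,n}\ge\rem n/2\}$ into a Garivier--Kaufmann-type divergence inequality and letting $\epsilon\downarrow0$ and then $\amt'_k\uparrow\amt^*$ gives $\liminf_n\E_\amt[N_{k,n}]/\log n\ge 1/D(1/\amt_k\,\|\,1/\amt^*)$, which combined with the display above yields the first inequality of \eqref{eq:lb-rem-main-0}.

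The second inequality is purely analytic. Writing $p=1/\amt_k$ and $q=1/\amt^*$, the claim reduces to the pointwise bound $\frac{(q-p)^2}{q\,D(p\|q)}\ge C(\amt^*)$ for all $0<p<q$. I would bound $D(p\|q)$ by a chi-square-type estimate, which is essentially tight both as $p\uparrow q$, where the ratio tends to $2(1-q)$, and as $p\downarrow0$, where it tends to $q/(-\log(1-q))$; these two limiting regimes, corresponding to $\amt_k\to\amt^*$ and $\amt_k\to\infty$, are exactly what produce the two expressions appearing in $C(\amt^*)$, and checking that the ratio never drops below $C(\amt^*)$ in between is a routine but careful calculus verification.

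The main obstacle is the change-of-measure step: obtaining the constant in front of $\log n$ as precisely $1/D(1/\amt_k\,\|\,1/\amt^*)$, rather than something off by a multiplicative factor, requires both the convexity reduction $D(M/\amt_k\,\|\,M/\amt'_k)\le M\,D(1/\amt_k\,\|\,1/\amt'_k)$ to convert continuous allocations into an effective sample count, and a delicate ordering of the limits in $n$, the consistency exponent $\epsilon$, and $\amt'_k\uparrow\amt^*$. One also has to argue that the resource coupling across arms does not interfere, which holds because the likelihood ratio between $\amt$ and $\amt'$ localizes to arm $k$ alone.
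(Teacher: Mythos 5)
Your proposal is correct and follows essentially the same route as the paper: the same regret decomposition $\E R^{(n)} \ge \sum_{k:\amt_k>\amt^*}(1/\amt^*-1/\amt_k)\E[T_n(k)]$, the same change of measure to an alternative instance where arm $k$ becomes the best expensive arm, the same key scaling inequality $D(\epsilon p\,\|\,\epsilon q)\le \epsilon D(p\|q)$ to convert allocated mass into an effective sample count, and the same two-regime Taylor/logarithmic estimate for the analytic second inequality. The only difference is cosmetic: you invoke the Garivier--Kaufmann data-processing form of the divergence inequality, whereas the paper runs the older Lai--Robbins likelihood-ratio truncation argument; both yield the identical constant $1/D(1/\amt_k\,\|\,1/\amt^*)$.
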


The proof follows the same steps taken in the proof of Theorem 2 in the paper by \cite{LR},
yet it is simpler to rewrite it instead of stating all the differences.
All asymptotic notations correspond only to $n$, and consider the other parameters of the problem as constants.

For any $k \ge \nal + 1$, and any integer $n$,
let $T_n(k)$ be the random variable which equals $\sum_{t=1}^n \alloc_{k,t}$.
It holds 
\begin{equation} \label{eq:lb1-0}
\E R^{(n)}(A,\amt) \ge \sum_{k \colon \amt_k > \amt^*} \E T_n(k) \left( \frac{1}{\amt^*} - \frac{1}{\amt_k} \right).
\end{equation}

Fix some $k$ such that $\amt_k > \amt^*$, and we will prove that 
\begin{equation} \label{eq:lb-rem-5}
\liminf_{n \to \infty} \frac{\E T_n(k)}{\log n}
\ge \frac{1}{D(1/\amt^k || 1/\amt^*)},
\end{equation}
and this, together with inequality~\eqref{eq:lb1-0} completes the
proof of the left inequality~\eqref{eq:lb-rem-main-0}.
Let $\theta_k = \frac{1}{\amt_k}$, and let $\theta^* = \frac{1}{\amt^*}$.
Fix any $\delta > 0$.
Fix some $\lambda$ such that $\theta^* < \lambda$ and $| D(\theta_k || \lambda) - D(\theta_k || \theta^*) | < \delta D(\theta_k || \theta^*)$.
Let $\gamma \in \mathbb{R}^\nban$ be a vector defined as
\[
\gamma_i = \begin{cases}
\frac{1}{\lambda} & i = k \\
\amt_i & i \ne k
\end{cases}.
\]
Fix $a$, $0 < a < \delta$. It holds that
\begin{equation} \label{eq:lb-rem-0}
(n - O(\log n)) P_{\gamma}\left[T_n(k) < \frac{(1-\delta) \log n}{D(\theta_k || \lambda)}\right]
\le \E_\gamma(n - T_n(k))
= o(n^a).
\end{equation}
Given a value of $\theta$, and an integer $t$, let $Z_{\theta, t}$ be the random variable which equals $\alloc_{k,t} \theta$ if $\scc_{k,t} = 1$ and $1 - \alloc_{k,t} \theta$ otherwise. Namely, $Z_{\theta,t}$ is the probability that $\scc_{k,t}$ had to get its value given $\alloc_{k,t}$ and the parameter of arm $k$.
Let
\[
L_n = \sum_{t=1}^n \log \frac {Z_{\theta_k, t}}{Z_{\lambda, t}}.
\]
Let 
\[
C_n = \left\{ T_n(k) < \frac{(1- \delta) \log n}{D(\theta_k || \lambda)},\, L_n \le (1 - a) \log n \right\}.
\]
It follows from \eqref{eq:lb-rem-0} that 
\begin{equation} \label{eq:lb-rem-1}
\Pr_\gamma(C_n) = o(n^{a-1}).
\end{equation}

Note that for any $r \ge 0$,
\begin{align} 
\Pr_{\gamma} \left[ T_n(k) = r, L_n \le (1 - a) \log n \right]
&= \int_{(T_n(k) = r, L_n \le (1 - a) \log n)} \prod_{t=1}^n \frac{Z_{\lambda, t}}{Z_{\theta_k, t}} d P_\amt \nonumber \\
&\ge \exp(-(1-a) \log n) \Pr_\amt \left[ T_n(k) = r, L_n \le (1 - a) \log n \right]. \label{eq:lb-rem-2}
\end{align}
Since $C_n$ is a disjoint union of events of the form $\left\{T_n(k) = r, L_n \le (1 - a) \log n\right\}$,
with $r < (1 - \delta) \log n/ D(\theta_k || \lambda)$, it follows from \eqref{eq:lb-rem-1} and \eqref{eq:lb-rem-2} that
\begin{equation} \label{eq:lb-rem-3}
\lim_{n \to \infty} \Pr_{\amt}(C_n) 
\le \lim_{n \to \infty} n^{1-a} \Pr_{\gamma}(C_n) 
= 0.
\end{equation}

Let $\rvt$ be the first $t$ such that $T_t(k) \ge \frac{(1- \delta) \log n}{D(\theta_k || \lambda)}$.
The inequality $D(\epsilon p || \epsilon q) \le \epsilon D(p || q)$ for all $0 < p,q,\err < 1$, implies that 
$\E L_{\rvt} \le \log n +O(1)$.
Therefore, using standard concentration bounds, it holds that
\begin{equation} \label{eq:lb-rem-4}
\lim_{n \to \infty}
\Pr_\amt \left[ \exists i < \rvt,\, L_i > (1 - a) \log n \right]
= 0.
\end{equation}

From \eqref{eq:lb-rem-3} and \eqref{eq:lb-rem-4} we see that
\[
\lim_{n\to \infty} \Pr_\amt \left[ T_n(k) < \frac{(1 - \delta) \log n}{(1+\delta) D(\theta_k, \theta^*)} \right]
\le \lim_{n\to \infty} \Pr_\amt \left[ T_n(k) < \frac{(1 - \delta) \log n}{D(\theta_k, \lambda)} \right]
= 0,
\]
from which \eqref{eq:lb-rem-5} follows.
This concludes the proof of the left inequality~\eqref{eq:lb-rem-main-0}.

Next, we prove the right inequality~\eqref{eq:lb-rem-main-0}.
Fix $0 < p < q < 1$, and let $\epsilon$ be a number such that $q = (1+\epsilon) p$.
Assume that $\epsilon \le 1$.
Estimating the Taylor sum of $D(p || p(1+\epsilon))$ around $\epsilon = 0$, 
we get that there exists $0 \le \zeta \le \epsilon$ such that
\begin{align*}
D(p || q)
&= \left.\frac{\partial^2 D(p || (1+\epsilon) p)}{\partial \epsilon^2}\right\vert_{p = \zeta} \frac{\epsilon^2}{2} \\
&= \left( \frac{p}{(1 + \zeta)^2} + \frac{p^2 (1-p)}{(1- p - \zeta p)^2} \right) \frac{\epsilon^2}{2}\\
&\le \left( p + \frac{p^2 (1-p)}{(1-q)^2} \right) \frac{\epsilon^2}{2}\\
&\le \left( 1 + \frac{1}{4(1-q)^2} \right) \frac{\epsilon^2 p}{2}.
\end{align*}
This implies that,
\begin{align} \label{eq:lb-rem-6}
\frac{q-p}{D(p||q)}
\ge \frac{2}{\epsilon \left( 1 + \frac{1}{4(1-q)^2} \right)}
= \frac{8(1-q)^2}{\epsilon \left( 4(1-q)^2 + 1 \right)}
\ge \frac{4(1-q)^2}{ 4(1-q)^2 + 1} \frac{1+\epsilon}{\epsilon}
= \frac{4(1-q)^2}{4(1-q)^2 + 1} \frac{1/p}{1/p-1/q}.
\end{align}
Next, assume that $\epsilon > 1$. It holds that
\[
D(p || q)
\le (1-p) \log \frac{1-p}{1-q}
\le \log \frac{1}{1-q}.
\]
Therefore, 
\begin{align} \label{eq:lb-rem-7}
\frac{q-p}{D(p||q)}
\ge \frac{q^2}{4(q-p) D(p||q)}
\ge \frac{q}{- 4\log (1-q)} \frac{q}{q-p}
= \frac{q}{- 4\log (1-q)} \frac{1/p}{1/p-1/q}.
\end{align}
Inequalities \eqref{eq:lb-rem-6} and \eqref{eq:lb-rem-7} conclude the
proof of the right inequality~\eqref{eq:lb-rem-main-0},
replacing $p = \frac{1}{\amt_k}$ and $q = \frac{1}{\amt_{\nal+1}}$.

\end{document}